\documentclass[10pt, twocolumn]{article}

\usepackage[margin=0.8in, columnsep=0.25in]{geometry}

\usepackage{authblk} 

\usepackage[utf8]{inputenc}
\usepackage[T1]{fontenc}
\usepackage{microtype}
\usepackage{graphicx}
\usepackage{subcaption}
\usepackage{booktabs}
\usepackage{placeins} 
\usepackage[round]{natbib}

\usepackage[colorlinks=true, linkcolor=blue, citecolor=blue, urlcolor=blue]{hyperref}

\usepackage{amsmath}
\usepackage{amssymb}
\usepackage{mathtools}
\usepackage{amsthm}
\usepackage{enumitem}
\usepackage{thmtools}

\usepackage[capitalize,noabbrev]{cleveref}

\newcommand{\theHalgorithm}{\arabic{algorithm}}

\usepackage[disable,textsize=tiny]{todonotes}

\usepackage{algorithm}
\usepackage{algorithmic}
\crefname{algorithm}{Algorithm}{Algorithms}
\Crefname{algorithm}{Algorithm}{Algorithms}

\newcommand{\R}{\mathbb{R}}

\newcommand{\E}{\mathbb{E}}

\newcommand{\I}{\mathbb{I}}

\newcommand{\F}{\mathcal F}

\newcommand{\eps}{\epsilon}

\newcommand{\defeq}{\stackrel{\text{def}}{=}}

\newcommand{\Var}{\operatorname{Var}}

\DeclareMathOperator*{\argmax}{arg\,max}

\usepackage{xspace}

\makeatletter
\DeclareRobustCommand\onedot{\futurelet\@let@token\@onedot}
\def\@onedot{\ifx\@let@token.\else.\null\fi\xspace}

\makeatother

\newcommand{\Nmax}{N_{\text{max}}}
\newcommand{\alphaqd}{\alpha_{\text{qd}}}  
\newcommand{\bart}{\text{BART}}
\newcommand{\reg}{\mathrm{Regret}}
\newcommand{\Prob}{\mathbb{P}}
\newcommand{\dd}{\mathop{}\!\mathrm{d}}
\newcommand{\norm}[1]{\|#1\|}

\newcommand{\abs}[1]{|#1|}
\newcommand{\midd}{\;\middle|\;}
\newcommand{\TS}{\mathrm{policy}}
\newcommand{\Data}{\mathcal{H}} 
\newcommand{\ContextSpace}{\mathcal{X}}
\newcommand{\ActionSpace}{\mathcal{A}}
\newcommand{\ParamVec}{{\boldsymbol{\theta}}}
\newcommand{\ParamSpace}{{\boldsymbol{\Theta}}}
\newcommand{\BARTVec}{\theta}
\newcommand{\BARTSpace}{\Theta}

\newcommand{\bigO}{\mathcal{O}}
\DeclareMathOperator{\DKL}{D_{KL}}
\DeclareMathOperator{\tr}{tr}

\newcommand{\headerstack}[2]{%
  \begin{tabular}[t]{@{}c@{}}#1\\[-0.4ex] \scriptsize #2\end{tabular}%
}
\newcommand{\numstack}[2]{%
  \begin{tabular}[t]{@{}c@{}}#1\\[-0.4ex] \scriptsize $\pm$ #2\end{tabular}%
}
\newcommand{\bnumstack}[2]{\textbf{\numstack{#1}{#2}}}
\newcommand{\inumstack}[2]{\textit{\numstack{#1}{#2}}}

\newtheorem{theorem}{Theorem}
\newtheorem{lemma}[theorem]{Lemma}
\newtheorem{corollary}[theorem]{Corollary}
\theoremstyle{definition}

\newtheorem{assumption}{Assumption}
\newtheorem{condition}{Condition}
\theoremstyle{remark}
\newtheorem{remark}{Remark}

\title{BFTS: Thompson Sampling with Bayesian Additive Regression Trees}

\author[1]{Ruizhe Deng}
\author[1,2,3]{Bibhas Chakraborty$^\dagger$}
\author[4]{Ran Chen$^\dagger$}
\author[1]{Yan Shuo Tan$^\dagger$\thanks{Correspondence to: Yan Shuo Tan $<$yanshuo@nus.edu.sg$>$. $^\dagger$~Equal supervision.}}

\affil[1]{Department of Statistics and Data Science, National University of Singapore, Singapore}
\affil[2]{Centre for Biomedical Data Science, Duke-NUS Medical School, Singapore}
\affil[3]{Department of Biostatistics and Bioinformatics, Duke University, USA}
\affil[4]{Department of Statistics and Data Science, Washington University in St. Louis, USA}

\date{\today}

\begin{document}

\maketitle

\begin{abstract}
Contextual bandits are a core technology for personalized mobile health interventions, where decision-making requires adapting to complex, non-linear user behaviors. While Thompson Sampling (TS) is a preferred strategy for these problems, its performance hinges on the quality of the underlying reward model. Standard linear models suffer from high bias, while neural network approaches are often brittle and difficult to tune in online settings. Conversely, tree ensembles dominate tabular data prediction but typically rely on heuristic uncertainty quantification, lacking a principled probabilistic basis for TS.
We propose \textit{Bayesian Forest Thompson Sampling (BFTS)}, the first contextual bandit algorithm to integrate Bayesian Additive Regression Trees (BART), a fully probabilistic sum-of-trees model, directly into the exploration loop. 
We prove that BFTS is theoretically sound, deriving an information-theoretic Bayesian regret bound of $\tilde{\mathcal{O}}(\sqrt{T})$. 
As a complementary result, we establish frequentist minimax optimality for a ``feel-good'' variant, confirming the structural suitability of BART priors for non-parametric bandits. 
Empirically, BFTS achieves state-of-the-art regret on tabular benchmarks with near-nominal uncertainty calibration.
Furthermore, in an offline policy evaluation on the \textit{Drink Less} micro-randomized trial, BFTS improves engagement rates by over 30\% compared to the deployed policy, demonstrating its practical effectiveness for behavioral interventions.
\end{abstract}

\section{Introduction}

Contextual bandits are central to mobile health (mHealth) interventions, where an agent must repeatedly select treatments based on a user's context to maximize health outcomes \citep{langford_epoch-greedy_2007, klasnja_microrandomized_2015, deliu_reinforcement_2024}. A dominant approach is \textit{Thompson Sampling} (TS) \citep{thompson_likelihood_1933, chapelle_empirical_2011, russo_tutorial_2020}, which balances exploration and exploitation by sampling actions from a Bayesian posterior over reward functions. The performance of TS in contextual bandits is fundamentally limited by the quality of this underlying reward model.

\textbf{The modeling gap.} In mHealth applications, the standard practice is to employ \emph{linear} reward models due to their simplicity and interpretability \citep{klasnja_microrandomized_2015, lei2022actorcriticcontextualbanditalgorithm}. However, human behavior is rarely linear; these models suffer from high bias and fail to capture complex interactions or threshold effects inherent in health data.

To address nonlinearity, recent works have proposed \emph{Kernel} \citep{chowdhury_kernelized_2017} or \emph{Neural} \citep{zhang_neural_2021} TS. While expressive, these approaches present significant hurdles for real-time deployment: they are computationally expensive and notoriously sensitive to hyperparameters. In an online setting where data arrives sequentially, standard tuning techniques (like cross-validation) are infeasible, making these models brittle in practice.

\textbf{The tree ensemble paradox.} For tabular data—which characterizes most mHealth contexts—\emph{tree ensembles} (e.g., Random Forests, Gradient Boosting) are state-of-the-art for offline prediction \citep{grinsztajn2022tree}. They handle nonlinearity, interactions, sparsity, and heterogeneous variable scales or types out of the box, but lack the Bayesian generative structure required by Thompson Sampling.
Adapting them for TS typically requires heuristics to approximate uncertainty \citep{nilsson_tree_2024}, which breaks the theoretical link between the posterior and the regret. Consequently, rigorous regret guarantees for tree-based bandits remain difficult to obtain.

\textbf{Our contribution: Bayesian Forest Thompson Sampling (BFTS).} We propose BFTS to close this gap. We replace heuristic tree ensembles with \textit{Bayesian Additive Regression Trees (BART)} \citep{chipman_bart_2010, hill_bayesian_2020}—a fully probabilistic sum-of-trees model. BFTS retains the predictive power of trees while providing a principled, exact Bayesian posterior for Thompson Sampling.
In summary, our contributions are:
\begin{enumerate}[label=(\roman*)]
    \item \textbf{Algorithm:} We introduce BFTS, the first contextual bandit method leveraging BART's exact posterior for Thompson sampling, combining tree robustness with principled uncertainty with minimal tuning.
    \item \textbf{Theory:} We derive an information-theoretic Bayesian regret bound $\E[\reg_T]=\tilde{\bigO}(K\sigma\sqrt{T})$ and provide supporting frequentist analysis for a feel-good variant under sparsity assumptions.
    \item \textbf{Empirics:} BFTS achieves low regret on synthetic benchmarks, best average rank across nine OpenML datasets, and strong offline policy evaluation results on the Drink Less mHealth trial, with robust performance across hyperparameter settings.
\end{enumerate}

\section{Related Work}\label{sec:background}

\textbf{Non-linear contextual bandits.} While linear Thompson Sampling \citep{agrawal_thompson_2013} is well-understood, its bias in complex environments has motivated the use of richer function approximators.
\textit{Kernelized/GP Bandits} \citep{krause_contextual_2011, chowdhury_kernelized_2017} offer a non-parametric Bayesian alternative, but they scale poorly with the horizon ($\mathcal O(T^3)$ for exact inference) or rely on approximations that dilute their theoretical guarantees \citep{zenati_efficient_2022}.
\textit{Neural Bandits} \citep{zhang_neural_2021} capture rich representations but face two hurdles. Theoretically, their analysis relies on the ``lazy training" (NTK) regime requiring unrealistic network widths. In practice, performance depends on hyperparameters (e.g., learning rate, architecture) that are difficult to tune online, where cross-validation is inapplicable. 

\textbf{Tree-based bandits.}
Despite the dominance of tree ensembles in tabular supervised learning \citep{grinsztajn2022tree}, their adoption in bandits has been slowed. \citet{pmlr-v51-feraud16} only uses decision stumps, while practical tree-ensemble bandit implementations, \citep{elmachtoub_practical_2018, nilsson_tree_2024} often rely on heuristic uncertainty quantification and lack regret bounds. BFTS addresses this by replacing the heuristic quantification with a valid MCMC sampler targeting the exact BART posterior. 

\textbf{The theoretical foundations of BART.}
\citet{chipman_bart_2010} introduced BART as a sum-of-trees prior paired with a likelihood. A rich theoretical literature has since emerged, establishing the frequentist optimality of the BART posterior. 
Seminal works have proved posterior contraction rates for BART over $\alpha$-Hölder smooth function classes \citep{linero_bayesian_2018, rockova_theory_2018, rockova2019ontheory}, confirming that the Bayesian ensemble adapts to the underlying smoothness and sparsity of the regression function. More recently, \citet{jeong_art_2023} extended these results to spatially inhomogeneous functions, showing that BART can adapt to varying local smoothness.

\section{Methodology}\label{sec:method}

\subsection{Problem setup}\label{sec:setup}
We consider the standard contextual bandit setting over $T$ rounds. At each round $t \in [T]$ (where $[T]\defeq\{1,\dots,T\}$), the learner observes a context vector $X_t \in \ContextSpace \subseteq [0,1]^p$, selects an action $A_t \in \ActionSpace$ (where $\abs{\ActionSpace}=K$), and receives a scalar reward:
\[
R_t = f_0(X_t,A_t) + \varepsilon_t,
\]
where $f_0: \ContextSpace\times\ActionSpace \to \R$ is the unknown mean reward function and $\varepsilon_t$ is independent mean-zero noise. We denote the history as $\Data_t=\{(X_s,A_s,R_s)\}_{s=1}^t$. The goal is to minimize cumulative regret against the (oracle) optimal policy that knows the underlying reward function and always takes the best action $A^*(x) \in \argmax_{a\in\ActionSpace} f_0(x,a)$ for any given context $x$.

We adopt a \textit{Thompson Sampling} (TS) approach. TS maintains a Bayesian posterior $\Pi(\cdot\mid\Data_{t-1})$ over a model class $\{f_\ParamVec: \ParamVec\in\ParamSpace\}$. In each round, it samples a parameter $\widehat\ParamVec_t \sim \Pi(\cdot\mid\Data_{t-1})$ and acts greedily with respect to the sampled model: $A_t \in \argmax_{a\in\ActionSpace} f_{\widehat\ParamVec_t}(X_t,a)$.

\subsection{Bayesian forest specification}\label{sec:bart_spec}

We model the reward function using \emph{independent arm-wise BART priors}. Specifically, we posit that $f_0(x,a) = g_a(x)$ for distinct functions $g_a: \ContextSpace \to \R$, each modeled by an independent BART ensemble with $m$ trees:
\[
f_{\ParamVec}(x,a) = g_{\BARTVec_a}(x) = \sum_{j=1}^m g_{\mathcal{T}^{(a)}_j, M^{(a)}_j}(x), \quad \ParamVec=((\BARTVec_a))_{a\in\ActionSpace}.
\]
Here, $\BARTVec_a=\{(\mathcal{T}_j^{(a)}, M_j^{(a)})\}_{j=1}^m$ denotes the forest structure and leaf parameters for arm $a$.


\paragraph{Prior and likelihood.}
We adopt a Gaussian likelihood $R_t \sim \mathcal{N}(f_\ParamVec(X_t,A_t), \sigma^2)$ with a conjugate inverse-gamma prior on $\sigma^2$. 
For the prior parameters, we adopt the default BART settings for the leaf value variance $\sigma_\mu^2 = (0.5/(\kappa\sqrt{m}))^2$ where $\kappa=2$. As recommended by the theoretical analysis of \citet{rockova_theory_2018}, for tree structures we use a depth-exponential splitting prior.
Specifically, given a node $\Omega \subset \ContextSpace$ at depth $d(\Omega)$, the probability that node splits is given by $p_{\text{split}}(\Omega)=\alphaqd^{d(\Omega)}$ with $\alphaqd=0.45$; this ensures a stronger decay rate than the original BART prior \citep{chipman_bart_2010}, which uses $p_{\text{split}}(\Omega)=\alpha_o(1+d(\Omega))^{-\beta_o}$. 
The split thresholds are selected uniformly at random from a grid, while the split axes are either selected uniformly at random or from a Dirichlet distribution \citep{linero_bayesian_2018}.
Detailed prior specifications and hyperparameters are provided in Appendix~\ref{sec:bart_prior}.

\subsection{BFTS algorithm and inference}\label{sec:algorithm}

We now define \textit{Bayesian Forest Thompson Sampling} (BFTS). Conceptually, BFTS performs standard Thompson Sampling using the BART posterior defined in Section \ref{sec:bart_spec}.

\textbf{Ideal vs. practical execution.}
For theoretical clarity, it is useful to distinguish between the ideal exploration scheme and its computational realization:
\begin{itemize}
    \item \textbf{Ideal BFTS:} An idealized algorithm that draws samples \emph{exactly} from the posterior $\Pi(\ParamVec \mid \Data_t)$ at every round. Our regret analysis in Section \ref{sec:theory} establishes bounds for this idealized process, assuming perfect inference.
    \item \textbf{Practical BFTS (Algorithm \ref{alg:bfts}):} Since exact sampling from the BART posterior is analytically intractable, our deployed algorithm approximates Ideal BFTS using Markov Chain Monte Carlo (MCMC).
\end{itemize}

\paragraph{Arm modeling and action encoding}\label{sec:arm-encoding}\label{par:encoding}
Contextual bandit methods in experiments need to specify how the discrete action $a\in\ActionSpace$ enters the reward model \citep{zhang_neural_2021}. We refer to three choices: (i) \emph{separate-arm} modeling, where one fits $K$ independent functions $g_a(x)$ so that $f(x,a)=g_a(x)$; (ii) a single joint model with a \texttt{one-hot} arm indicator, using $\tilde x=(\mathbf e_a,x)\in\R^{K+p}$; and (iii) \texttt{multi}/positional encoding, mapping $x\in\R^p$ to $\tilde x_a\in\R^{Kp}$ by placing $x$ in the $a$-th block and zero-padding elsewhere.

BFTS adopts the separate-arm specification. This choice preserves the arm-wise posterior factorization needed by our regret analysis (see \cref{sec:theory}) and avoids action-encoding-induced dimension inflation (from $p$ to $p+K$ or $Kp$) that can weaken nonparametric rates. Empirically, separate-arm performs best overall in sensitivity analysis (see \cref{sec:emp:sensitivity}).

\paragraph{Posterior computation via MCMC.}
To implement Practical BFTS, we run independent MCMC chains for each arm in parallel using a Metropolis-within-Gibbs sampler. The sampler iteratively updates each tree $\mathcal{T}_j^{(a)}$ in the ensemble by proposing a structural modification (e.g., grow, prune), which is accepted or rejected via a standard Metropolis-Hastings step. Conditional on the tree structure, the leaf parameters $M_j^{(a)}$ are sampled directly from their conjugate Gaussian full conditionals. Finally, after a complete pass over all $m$ trees, the noise variance $\sigma^2$ is drawn from its Inverse-Gamma full conditional. Detailed formulas are provided in Appendix~\ref{sec:bart_mcmc}.

\paragraph{Batched updates and mixing.}
Running a full MCMC chain at every round $t$ is computationally prohibitive. We adopt a \textit{batched} schedule defined by a refresh function $r(t)$. The posterior is updated only when $r(t) > r(t-1)$.
At each refresh, we use a cold-start MCMC and re-initialize the sampler from the prior to prevent the chains from becoming trapped in local modes.
Between refresh points, the posterior is held fixed. To perform Thompson Sampling in these intermediate rounds, we draw samples at random from the posterior sample pool generated in the last refresh. This batching perspective is related to adaptive batched Thompson Sampling, which can retain near-optimal regret guarantees with only a logarithmic number of batch updates \citep{kalkanli_batched_2021}.

\begin{algorithm}[tb]
  \caption{Bayesian Forest Thompson Sampling (BFTS)}
  \label{alg:bfts}
  \begin{algorithmic}[1]
    \STATE {\bfseries Input:} Prior $\Pi$, Horizon $T$, Refresh schedule $r(\cdot)$, initial round-robin length $\tau$ (per arm), Burn-in $n_{\text{burn}}$, Samples $n_{\text{post}}$
    \STATE Initialize data $\Data_0 = \emptyset$
    \STATE Initialize stored posterior samples $\mathcal{S} = \emptyset$
    
    \FOR{$t=1$ {\bfseries to} $T$}
      \STATE Observe context $X_t$
      
      \IF{$t \le \tau K$}
        \STATE \COMMENT{Phase 1: Initialization (Round-Robin)}
        \STATE Select $A_t$ sequentially from actions $\{1, \dots, K\}$
      \ELSE
        \STATE \COMMENT{Phase 2: Thompson Sampling}
        \STATE Draw $\widehat{\ParamVec} \text{ from } \mathcal{S} $ without replacement \COMMENT{Reshuffle \(\mathcal{S}\) if it is exhausted}
        \STATE Compute greedy action $A_t = \argmax_{a \in \ActionSpace} f_{\widehat{\ParamVec}}(X_t, a)$
      \ENDIF
      
      \STATE Observe reward $R_t$
      \STATE Update history $\Data_t \gets \Data_{t-1} \cup \{(X_t, A_t, R_t)\}$
      
      \IF{$r(t) > r(t-1)$ \OR $t = \tau K$}
        \STATE \COMMENT{Phase 3: Posterior Refresh (MCMC)}
        \STATE Run MCMC for $n_{\text{burn}} + n_{\text{post}}$ steps using data $\Data_t$
        \STATE Discard first $n_{\text{burn}}$ samples
        \STATE Update pool $\mathcal{S} \gets \{ \ParamVec^{(s)} \}_{s=1}^{n_{\text{post}}}$
      \ENDIF
    \ENDFOR
  \end{algorithmic}
\end{algorithm}
\section{Regret Analysis} \label{sec:theory}

We now analyze the theoretical properties of Ideal BFTS (as defined in \cref{sec:method}). Our primary contribution is a bound on the \emph{Bayesian regret}, quantifying the efficiency of the Bayesian forest prior in the bandit setting.


The Bayesian regret is defined under a \emph{Bayesian design}: the true reward functions are drawn from the prior, i.e., for each arm $a \in \ActionSpace$, the function $f_0(\cdot, a)$ is a realization of a BART prior $\Pi_{BART}$.
The expected Bayesian cumulative regret is $\E[\reg_T] = \E[\sum_{t=1}^T (f_0(X_t,A_t^*)-f_0(X_t,A_t))]$, where the expectation is taken over the prior, contexts, and noise.
We adopt standard regularity conditions: (A1) Gaussian noise with fixed known variance $\sigma^2$; (A2) i.i.d. contexts; and (A3) tree splits on a fixed grid of size $\Nmax$ \citep{chipman_bart_2010}.

\begin{theorem}[Bayesian regret of BFTS]\label{thm:bfts-bayes}
    Let $\Pi = \Pi_{BART}^{\otimes K}$ be the product BART prior on $f_0$ with $m$ trees per arm. Under assumptions (A1)--(A3), the expected Bayesian regret of Ideal BFTS satisfies:
    \begin{align*}
        \E[\reg_T] \le K \sigma \sqrt{2Tm\Psi_T},
    \end{align*}
    where 
    $\Psi_T = C_{\mathrm{str}}\log(p \Nmax) + C_{\mathrm{leaf}} \log\bigl(1 + \frac{T}{4K\kappa^2\sigma^2}\bigr)$ 
    is the information complexity term.
    Here, \(C_{\mathrm{str}}\) and \(C_{\mathrm{leaf}}\) depend only on the splitting (tree-structure) prior hyperparameters (e.g., the depth-decay parameter \(\alphaqd\); see Appendix~\ref{sec:bart_prior}) and are independent of \(T,K,p,\Nmax\).
    The dependence on \((\kappa,\sigma^2)\) enters only through the Gaussian capacity term inside the logarithm.
\end{theorem}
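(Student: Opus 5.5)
We follow the information-theoretic analysis of Thompson sampling due to Russo and Van Roy, adapted to the contextual setting with arm-wise independent priors. Write $\Gamma_t$ for the information ratio at round $t$,
\[
\Gamma_t = \frac{\bigl(\E_{t-1}[f_0(X_t,A_t^*)-f_0(X_t,A_t)]\bigr)^2}{I_{t-1}\bigl(\ParamVec;(A_t,R_t)\mid X_t\bigr)},
\]
where $\E_{t-1}$ and $I_{t-1}$ are conditional on $\Data_{t-1}$. The Cauchy--Schwarz inequality and the chain rule of mutual information then give
\[
\E[\reg_T]\le \sqrt{\bar\Gamma\, T\, I(\ParamVec;\Data_T)},
\]
where $\bar\Gamma$ is a uniform bound on $\Gamma_t$. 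The contexts are i.i.d. by (A2) and carry no information about $\ParamVec$, so conditioning on $X_t$ is harmless. Posterior sampling gives $A_t\overset{d}{=}A_t^*$ conditionally on $(\Data_{t-1},X_t)$.

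\textbf{Step 1 (information ratio).} Under (A1) the rewards are Gaussian, hence $\sigma$-sub-Gaussian. The standard decoupling argument for $K$ actions, via Pinsker's inequality or the sub-Gaussian Donsker--Varadhan bound, gives $\Gamma_t\le 2\sigma^2 K$. This argument is pointwise in $X_t$ and only uses $A_t\overset{d}{=}A_t^*$, so it transfers directly.

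\textbf{Step 2 (mutual information).} The priors factorize over arms, and the likelihood for arm $a$ depends only on $\BARTVec_a$ and the rounds where $A_s=a$. Hence $I(\ParamVec;\Data_T)=\sum_a I(\BARTVec_a;\Data_T)$, and it suffices to bound each arm's term by roughly $K m\Psi_T$ so that everything assembles to $K\sigma\sqrt{2Tm\Psi_T}$. Conditioning on the forest structure $S_a=(\mathcal T^{(a)}_1,\dots,\mathcal T^{(a)}_m)$, we decompose
\[
I(\BARTVec_a;\Data_T)\le H(S_a)+I(M^{(a)};\Data_T\mid S_a).
\]

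\textbf{Step 3 (structure entropy).} Under the depth-exponential prior with $\alphaqd=0.45<1/2$, the number of internal nodes of each tree is stochastically dominated by a subcritical Galton--Watson total progeny. This total progeny has finite expectation $C_{\mathrm{str}}$, depending only on $\alphaqd$. Each split costs at most $\log(p\Nmax)$ nats by (A3), and the tree shape is encoded with $O(1)$ bits per node. So $H(S_a)\le m\,C_{\mathrm{str}}\log(p\Nmax)$.

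\textbf{Step 4 (leaf information).} Given $S_a$, the model is a Gaussian linear model in the leaf values. These are Gaussian with variance $\sigma_\mu^2=1/(16\kappa^2 m)$, and the design is built from leaf indicators. The Gaussian-channel formula gives
\[
I=\tfrac12\E\log\det\bigl(I+\sigma_\mu^2\sigma^{-2}\Phi^\top\Phi\bigr).
\]
By concavity of the log-determinant, this is at most $\tfrac12 L\log(1+\sigma_\mu^2 n_a/(L\sigma^2)\cdot m)$ with $L=\sum_j|\text{leaves}_j|$ and $n_a\le T$ the arm's sample count. The factor $m$ from $\sigma_\mu^2$ cancels and leaves a $\kappa^{-2}$ dependence. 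Jensen's inequality over the random leaf count gives $m\,C_{\mathrm{leaf}}\log(1+T/(4K\kappa^2\sigma^2))$, after bounding $\E L\le m C_{\mathrm{leaf}}$ and absorbing constants.

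\textbf{Main obstacle.} The hard part is Step 4 together with the exact form of the argument inside the logarithm. The $K$ in the denominator suggests the authors either average $n_a$ across arms, using $\sum_a n_a=T$ and concavity, or normalize per arm. I would apply Jensen's inequality over arms to $\sum_a\log(1+n_a c)\le K\log(1+Tc/K)$, which produces exactly the $T/K$ scaling. The random number of leaves also requires care: $L$ and $\log$ are coupled. I would bound by conditioning on $S_a$ and then use $\E[L\log(1+c/L)]\le \E L\cdot\log(1+c)$ with crude constants.

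Combining the steps gives $I(\ParamVec;\Data_T)\le Km\Psi_T$. Together with $\bar\Gamma=2\sigma^2K$, this yields $\E[\reg_T]\le K\sigma\sqrt{2Tm\Psi_T}$, matching the claimed constants.
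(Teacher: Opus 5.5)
Your overall route is the paper's: the information ratio bound $\Gamma_t\le 2\sigma^2K$ combined with Cauchy--Schwarz and the chain rule, and the arm-wise split $I(\ParamVec;\Data_T)=\sum_a I(\BARTVec_a;\Data_T)$ from the product posterior. Then a structure-entropy bound plus a Gaussian leaf-capacity bound, using $\tr(\Phi_a^\top\Phi_a)=mn_a$ and concavity across arms. Your Galton--Watson/prefix-code bound on $H(S_a)$ is an equivalent substitute for the paper's $H(L_j)+H(\mathcal T_j\mid L_j)$ Catalan count.

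The step that does not go through as you describe it is the one you flagged in Step 4. Under the Bayesian design, $n_a$ is determined by the adaptively chosen actions, so it depends on $\BARTVec_a^*$, in particular on its leaf count $L^{(a)}$. Conditioning on $S_a$ fixes $L^{(a)}$ but not $n_a$. The inequality $\E[L\log(1+c/L)]\le \E[L]\log(1+c)$ needs $c$ deterministic (or independent of $L$). If you take $c\propto n_a$, you cannot then factor out $\E[L]$ and apply Jensen to $\sum_a\log(1+cn_a)$. If you take $c\propto T$ (via $n_a\le T$), you lose the $K$ inside the logarithm. Dropping $L$ from the denominator before summing over arms is exactly what destroys the $T/K$.

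The fix is to keep $L$ inside the logarithm. With $c=\sigma_\mu^2m/\sigma^2=1/(4\kappa^2\sigma^2)$, you have pointwise $\log\det(I+\sigma_\mu^2\sigma^{-2}\Phi_a^\top\Phi_a)\le \phi(L^{(a)},n_a)$. Here $\phi(L,n)\defeq L\log(1+cn/L)$ is the perspective of $n\mapsto\log(1+cn)$, hence jointly concave. Since $\sum_a n_a=T$ holds surely,
\[
\sum_{a=1}^K\phi\bigl(L^{(a)},n_a\bigr)\le K\,\phi\bigl(\bar L,T/K\bigr),\qquad \bar L\defeq \frac1K\sum_{a=1}^K L^{(a)}.
\]
Now only $\bar L$ is random, so Jensen gives
\[
\E\bigl[K\phi(\bar L,T/K)\bigr]\le K\phi(\E\bar L,T/K)\le K\,\E_\Pi[L]\log(1+cT/K).
\]
This uses that each $\mathcal T^{(a)}$ is marginally prior-distributed and that $\E_\Pi[L]\ge 1$. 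The result is exactly $\tfrac12KC_{\mathrm{leaf}}m\log(1+T/(4K\kappa^2\sigma^2))$.

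The paper's own write-up also treats $T_a$ as fixed at this point, so this is the argument that actually makes the step rigorous. Likewise, the log-det identity should come from the chain rule with rank-one Gaussian updates, using that $A_t$ is conditionally independent of $M^{(a)}$ given $(S_a,\Data_{t-1},X_t)$, not from a fixed-design computation.

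Two smaller slips:
\begin{enumerate}
\item $\sigma_\mu^2=1/(4\kappa^2m)$, not $1/(16\kappa^2m)$.
\item In Step 2 the per-arm terms should not each be bounded by $Km\Psi_T$, which would give $K^2$ overall. Only their sum is bounded by $Km\Psi_T$, and the $T/K$ appears only after summing.
\end{enumerate}
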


\begin{proof}
    The proof combines the information ratio bound of \citet{russo2016information} with a novel bound on the mutual information for BART ensembles.
    Standard results bound Bayesian regret by $\E[\reg_T] \le \sqrt{T \bar{\Gamma}_T I(\ParamVec^*; \Data_T)}$. For Gaussian rewards, the lifted information ratio is bounded by $\bar{\Gamma}_T \le 2\sigma^2 K$ \citep{gouverneur2023thompson}.
    The core challenge is bounding the mutual information $I(\ParamVec^*; \Data_T)$ for the Bayesian forest. We resolve this via Lemma \ref{lem:bart-infogain}.
\end{proof}

\begin{lemma}[Information gain of Bayesian forests]\label{lem:bart-infogain}
    Under the assumptions of Theorem \ref{thm:bfts-bayes}, the mutual information between the true Bayesian forest parameters $\ParamVec^*$ and the history $\Data_T$ satisfies $I(\ParamVec^*; \Data_T) \leq Km\Psi_T$.
    
\end{lemma}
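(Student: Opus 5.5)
We split the parameter $\ParamVec^*=(\BARTVec^*_a)_{a\in\ActionSpace}$ into a discrete structural part $S^*=\{\mathcal{T}^{(a)}_j\}_{a,j}$ (tree topologies, split axes and grid thresholds) and a continuous leaf part $M^*=\{M^{(a)}_j\}_{a,j}$, and apply the chain rule
\[
I(\ParamVec^*;\Data_T)=I(S^*;\Data_T)+I(M^*;\Data_T\mid S^*).
\]
Because the prior is a product over arms and, under separate-arm modeling, the rewards of arm $a$ depend only on $\BARTVec^*_a$, both terms should split into per-arm contributions. We then bound each per-arm contribution by $m$ times a per-tree quantity: $C_{\mathrm{str}}\log(p\Nmax)$ for the structure and $C_{\mathrm{leaf}}\log(1+T/(4K\kappa^2\sigma^2))$ for the leaves. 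This produces $Km\Psi_T$.

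\textbf{Structural term.} We use $I(S^*;\Data_T)\le H(S^*)=\sum_{a,j}H(\mathcal{T}^{(a)}_j)$, which holds because the trees are independent under the prior (A3 makes each tree a discrete object). For a single tree, the depth-exponential prior with $\alphaqd=0.45$ makes the number of internal nodes $L$ have geometric-type tails, so $\E[L]\le c(\alphaqd)$. Conditional on the topology, each internal node chooses an axis among $p$ and a threshold among at most $\Nmax$ grid points, contributing at most $\log(p\Nmax)$ nats. Hence
\[
H(\mathcal{T})\le H(\text{topology})+\E[L]\log(p\Nmax).
\]
The topology entropy is bounded by a constant depending only on $\alphaqd$, since the branching process is subcritical ($2\alphaqd<1$). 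Absorbing this constant by taking $\log(p\Nmax)\ge \log 2$ gives $H(\mathcal{T})\le C_{\mathrm{str}}\log(p\Nmax)$.

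\textbf{Leaf term.} Condition on $S^*$. For arm $a$, the reward model is then a Gaussian linear model: the design rows are leaf-indicator vectors, the leaf prior is $\mathcal{N}(0,\sigma_\mu^2 I)$ with $\sigma_\mu^2=1/(16\kappa^2 m)$, and the noise is $\sigma^2$. For adaptively collected data, the chain rule together with the fact that $A_t$ is conditionally independent of $\ParamVec^*$ given the past yields the standard bound
\[
I(M^*_a;\Data_T\mid S^*)\le \tfrac12\E\log\det\bigl(I+\sigma_\mu^2\sigma^{-2}\Phi_a^\top\Phi_a\bigr).
\]
Here $\Phi_a$ is the $N_a(T)\times(\text{total leaves})$ design for arm $a$, and $N_a(T)$ is the number of pulls of arm $a$. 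Each row has exactly $m$ ones, one per tree. By Hadamard's inequality the log-determinant is at most $\sum_{\ell}\log(1+\sigma_\mu^2 n_\ell/\sigma^2)$, where $n_\ell\le N_a(T)$ counts the visits to leaf $\ell$. Therefore the bound is at most $m\,\E[L+1]\log(1+\sigma_\mu^2 T/\sigma^2)$. The $K$ inside the logarithm should come from concavity (Jensen) over the split $\sum_a N_a(T)=T$; alternatively we can accept the slightly cruder per-arm bound, which has the same form with constants absorbed. Matching the exact capacity $T/(4K\kappa^2\sigma^2)$ needs careful bookkeeping of $\sigma_\mu^2$ and the factor $m$. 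Applying $\log(1+x/m)\le\log(1+x)$ gives $C_{\mathrm{leaf}}=\tfrac12\E[L+1]$.

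\textbf{Main obstacle.} The hardest step will be the leaf term under adaptive data collection. It requires justifying the log-det bound with random, history-dependent designs, and obtaining the stated $K$-dependent capacity inside the logarithm rather than only $T$. First we would apply the chain rule per round, using $I(\ParamVec^*;R_t\mid \Data_{t-1},X_t,A_t)\le \tfrac12\E\log(1+\Var_{t-1}/\sigma^2)$ with the Gaussian conditional posterior given $S^*$. Then we would telescope via the matrix determinant lemma, and apply Jensen across arms at the end. A secondary obstacle is verifying that the topology entropy and $\E[L]$ are finite constants under $\alphaqd=0.45$, which reduces to a subcritical Galton–Watson estimate.
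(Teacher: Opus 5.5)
Your proposal is correct and has the same architecture as the paper's proof. It uses a structure/leaf chain rule. The discrete trees are handled by $I\le H$, with $\log(p\Nmax)$ nats per internal node plus a finite size and topology entropy; the paper makes your $H(\text{topology})$ bound explicit as $H(L_j)+H(\mathcal T_j\mid L_j)$ via a tail-indicator argument and a Catalan count. The leaves are handled by a Gaussian log-determinant capacity.

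The differences are technical, and mostly in your favor:
\begin{itemize}
\item You apply the chain rule globally, so you never need the posterior-factorization Lemma~\ref{lem:post-factor}. The structure term splits by prior independence; subadditivity of entropy even covers the Dirichlet-sparse prior, where the trees of an arm share $\boldsymbol{s}$. The leaf term splits because $\Phi^\top\Phi$ is block-diagonal across arms.
\item Your per-round chain rule plus the matrix determinant lemma gives $I(M^*;\Data_T\mid S^*)=\tfrac12\E\log\det(I+\sigma_\mu^2\sigma^{-2}\Phi^\top\Phi)$ with equality, and it handles the adaptive design directly. The paper instead evaluates an expected Gaussian KL conditionally on the realized design as if that design were fixed. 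That step is only justified after averaging over the design with the tower property.
\item You bound the log-determinant with Hadamard's inequality, where the paper uses AM--GM on the eigenvalues, with trace equal to $m$ times the number of pulls.
\end{itemize}

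Two points need fixing.

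First, $\sigma_\mu^2=0.25/(\kappa^2 m)=1/(4\kappa^2 m)$, not $1/(16\kappa^2 m)$. Your closing step $\log(1+x/m)\le\log(1+x)$ with $x=T/(4\kappa^2\sigma^2)$ in fact uses the correct value.

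Second, your fallback of keeping the per-arm factor $\log(1+T/(4\kappa^2\sigma^2))$ and absorbing constants is not valid. It exceeds $\log(1+T/(4K\kappa^2\sigma^2))$ by up to $\log K$, and the ratio is unbounded when $T/K$ is small, while $C_{\mathrm{leaf}}$ must not depend on $K$ or $T$. The per-arm Jensen step over $\sum_a N_a(T)=T$ also has a problem, and it is how the paper proceeds too. It implicitly decouples the random pull counts $N_a(T)$ from the random leaf counts, and these are dependent under adaptive sampling.

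Your Hadamard bound gives a clean way around this. Apply concavity over all $N$ leaves of all arms at once, using the deterministic identity $\sum_\ell n_\ell=mT$, to get $\tfrac12\sum_\ell\log\bigl(1+n_\ell/(4\kappa^2 m\sigma^2)\bigr)\le\tfrac12 N\log\bigl(1+T/(4\kappa^2\sigma^2 N)\bigr)$. Since $N\ge Km$, the logarithm is at most the deterministic quantity $\log\bigl(1+T/(4K\kappa^2\sigma^2)\bigr)$. Taking expectations gives $\tfrac12 Km\,\E[\text{leaves per tree}]\,\log\bigl(1+T/(4K\kappa^2\sigma^2)\bigr)$. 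This is the stated bound, with $C_{\mathrm{leaf}}$ equal to half the prior mean number of leaves per tree.
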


\paragraph{Proof sketch of Lemma \ref{lem:bart-infogain}.}
The detailed proof is in Appendix \ref{prf:bart-infogain}. The argument proceeds in four steps.

\emph{Step 1: Factorization.} 
Although $A_t$ depends on past rewards, Bayes' rule conditions on the realized history: given $(\Data_{t-1},X_t)$, the algorithm's randomized decision rule is a fixed function of $(\Data_{t-1},X_t)$ and fresh internal randomness, hence
$p(A_t\mid \Data_{t-1},X_t,\ParamVec^*)=p(A_t\mid \Data_{t-1},X_t)$.
With exogenous contexts, sequential factorization gives
\begin{align*}
    p(\Data_T\mid \ParamVec^*)
     = &\prod_{t=1}^T p(X_t\mid \Data_{t-1})\,p(A_t\mid \Data_{t-1},X_t)
    \\&\times p(R_t\mid X_t,A_t,\BARTVec^*_{A_t})
    \\
    \propto& \prod_{t=1}^T p(R_t\mid X_t,A_t,\BARTVec^*_{A_t}),
\end{align*}
so conditional on the realized design $(X_{1:T},A_{1:T})$ the reward likelihood factorizes by arm. Combined with the product prior $\Pi=\Pi_{BART}^{\otimes K}$, this yields the exact posterior product form $\Pi(\ParamVec^*\mid \Data_T)=\bigotimes_{a=1}^K \Pi_a(\BARTVec_a^*\mid \Data_T)$; see Lemma~\ref{lem:post-factor} in Appendix~\ref{prf:bart-infogain}. Therefore, the mutual information factors into a sum: $I(\ParamVec^*; \Data_T) = \sum_{a=1}^K I(\BARTVec_a^*; \Data_T)$.

\emph{Step 2: Structure-leaf decomposition.} For a single arm, we use the chain rule to decompose the information into the tree structure $\mathcal{T}^{(a)} = \{\mathcal{T}^{(a)}_j\}_{j=1}^m$ and leaf parameters $M^{(a)} = \{M^{(a)}_j\}_{j=1}^m$, i.e.
    $I(\BARTVec_a; \Data_T) = I(\mathcal{T}^{(a)}; \Data_T) + I(M^{(a)}; \Data_T \mid \mathcal{T}^{(a)})$.

\emph{Step 3: Bounding tree structure entropy.}
The tree structure term $I(\mathcal{T}^{(a)}; \Data_T)$ is bounded by the entropy of the prior. The depth-exponential splitting prior ensures this entropy is logarithmic in the grid size $\Nmax$, i.e. $I(\mathcal{T}^{(a)}; \Data_T) \leq C_{\mathrm{str}}m\log(p \Nmax)$

\emph{Step 4: Bounding leaf parameter capacity.}
The leaf term $I(M^{(a)}; \Data_T \mid \mathcal{T}^{(a)})$ reduces to the capacity of a Gaussian linear model (the leaves are just constant regressors given the structure). This yields the standard $\log T$ determinant bound, i.e. $I(M^{(a)}; \Data_T \mid \mathcal{T}^{(a)}) \leq C_{\mathrm{leaf}} m\log\bigl(1 + \frac{T}{4K\kappa^2\sigma^2}\bigr)$.

\paragraph{Connection to frequentist minimax rates.}
Theorem~\ref{thm:bfts-bayes} is a Bayesian-design guarantee for Ideal BFTS.
Under misspecification, however, standard Thompson sampling can under-explore and is difficult to analyze in nonparametric classes.
For this reason, Appendix~\ref{sec:fgts-app} studies a \emph{feel-good variant} \citep{zhang_feel-good_2021} of BFTS that uses the same arm-wise BART posterior as BFTS, but applies a lightweight ``feel-good'' resampling step to encourage optimism (Algorithm~\ref{alg:fg-bfts}).
This modification is a vanishing perturbation: setting $\lambda=0$ recovers standard BFTS, and the theoretically optimal $\lambda^\star \asymp \eps_T \to 0$ (where $\eps_T$ is a valid BART contraction rate).
Under $\alpha$-H\"older smoothness with intrinsic dimension $d$, the resulting frequentist regret is minimax-optimal up to logs \citep{rigollet_nonparametric_2010}, i.e.\ $\widetilde{\mathcal O}\!\bigl(K\,T^{(\alpha+d)/(2\alpha+d)}\bigr)$. 

We emphasize that this does \emph{not} constitute a frequentist regret bound for BFTS itself. Rather, it certifies that the same BART prior underlying BFTS supports minimax-rate learning once paired with the vanishing perturbation of Thompson Sampling. This ``feel-good'' modification has recently attracted attention in the literature, both theoretically \citep{li_variance-aware_2025} and empirically \citep{anand_feel-good_2025}.

\begin{remark}[Why use arm-wise (separate) models?]
The factorization in \cref{lem:bart-infogain} is specific to our \emph{arm-wise} modeling choice: each arm $a$ has its own BART parameter $\BARTVec_a$ and likelihood depends on $\ParamVec^*$ only through the played arm.
If instead one encodes the discrete action into the covariate vector (see \cref{sec:arm-encoding}) and fits a single BART model jointly across arms, then the posterior no longer factorizes by arm, and the mutual-information decomposition used in Lemma~\ref{lem:bart-infogain} does not apply. The separate design is also required in our frequentist analysis.

The reliance on \(K\) may seem sub-optimal. However, in a Bayesian setting, this only reflects the structural complexity of the product prior across arms. From a frequentist nonparametric perspective, encoding the action into the covariates increases the effective input dimension (e.g., from \(d\) to \(d+K\) under \texttt{one-hot}, or to \(dK\) under \texttt{multi} if one ignores its block-sparsity), which worsens standard H\"older-type rates; thus the arm-wise bound can be more favorable when \(K\) grows slowly with \(T\), e.g.\ \(K=o(T^{\alpha/(2\alpha+d)})\).
\end{remark}

\section{Numerical Experiments}\label{sec:empirics}

Across synthetic and real-data benchmarks, we demonstrate that BFTS delivers significant gains when reward functions exhibit nonlinearity, sparsity, or heterogeneity. Diagnostics further confirm that our Bayesian approach yields calibrated uncertainty estimates, translating into efficient exploration.

\subsection{Experimental setup}\label{sec:emp:setup}

\providecommand{\artifactdir}{artifacts}

We benchmark BFTS against standard contextual bandit baselines using \emph{cumulative regret} over a horizon of $T=10{,}000$ steps. Results reported in this section are across $R=12$ independent replications, with all methods evaluated on identical interaction streams.

\textbf{Model specification.} We use a BART prior with an ensemble size of $m = 100$ trees per arm. This is reduced from the default choice of $m=200$ \citep{chipman_bart_2010} to balance computational cost with ensemble diversity in the online setting, following \citet{murray_log-linear_2021}. We use $n_{\text{post}}=500$ posterior samples after $n_{\text{burn}}=500$ burn-in steps, without thinning. For further efficiency, we employ a \textit{batched posterior refresh schedule}: defining a refresh index $r(t)\defeq \lceil 8\log t\rceil$, we re-run MCMC at round $t$ only if $r(t)>r(t-1)$, consistent with \citet{nilsson_tree_2024}.

\textbf{Compute \& parallelization.} NeuralTS \citep{zhang_neural_2021} runs on an NVIDIA A40 GPU (along with one full CPU socket, 36 cores). All other methods run on CPU (4 logical cores). BFTS utilizes 4 parallel MCMC chains. RFTS and XGBoostTS (i.e., TETS-RF and TETS-XGBoost in \citep{nilsson_tree_2024}) are parallelized using OpenMP with \texttt{nthread=4}.

\textbf{Context encoding.} We apply method-specific arm representations. For LinUCB \citep{li_contextual-bandit_2010}, LinTS, and NeuralTS, we use positional (\texttt{multi}) encoding, mapping $\mathbf{x}\in\R^d$ to $\R^{Kd}$ by placing $\mathbf{x}$ in the $k$-th block for arm $k$ and zero-padding elsewhere, suggested by \citet{zhang_neural_2021}. For XGBoostTS and RFTS, suggested by \citet{nilsson_tree_2024}, we use a one-hot arm feature by concatenating $\mathbf{e}_k\in\{0,1\}^K$ to the context, yielding $\tilde{\mathbf{x}}_k=(\mathbf{e}_k,\mathbf{x})$. BFTS uses separate-arm BART models.

\subsection{Synthetic study}\label{sec:emp:sim}

\begin{figure}[tb]
  \centering
  \includegraphics[width=\columnwidth]{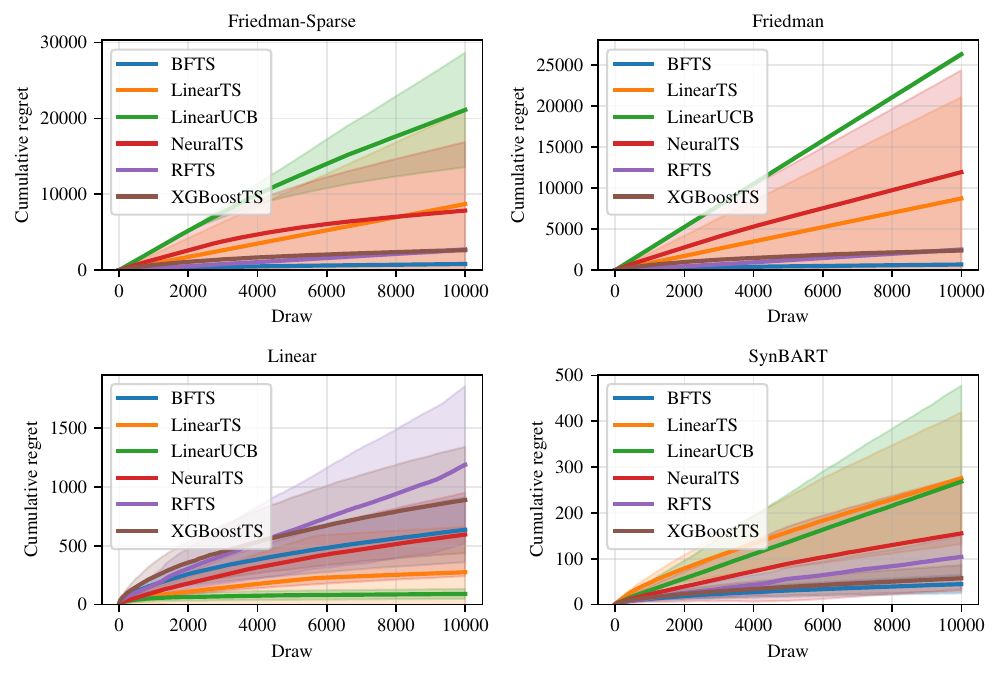}
  \caption{\textbf{Cumulative regret trajectories on synthetic benchmarks.} We report mean $\pm$ SD over $R=12$ replications up to $T=10{,}000$. Full synthetic benchmark regret curves are provided in Appendix (\cref{fig:app:syn_regret_curves_full}).}
  \label{fig:sim:regret_curves}
\end{figure}

We first evaluate performance across three synthetic Data-Generating Processes (DGPs): (i) linear rewards, (ii) nonlinear Friedman-type rewards with varying sparsity, correlation structures, underlying function, and heteroscedastic noise, and (iii) a \textsc{SynBART} DGP in which each arm's reward function is sampled from the same BART prior family (and hyperparameters) used by BFTS, hence the Bayesian design of BFTS is correctly specified. \Cref{app:sim} provides complete definitions and parameter settings.

\Cref{fig:sim:regret_curves} summarizes cumulative regret trajectories on representative synthetic scenarios; full trajectories across all synthetic scenarios are deferred to Appendix (\cref{fig:app:syn_regret_curves_full}). The results align with theoretical expectations: across the eight synthetic scenarios, BFTS achieves the best average rank (1.63), outperforming the runner-up (NeuralTS, 3.00); see \cref{tab:syn:final_regret_full}. In the Friedman scenarios, the gains are substantial and persist throughout the horizon.
Conversely, in the Linear setting, linear baselines outperform BFTS, as they benefit from correct specification.

\textbf{Uncertainty quantification.} BFTS demonstrates robust uncertainty quantification for the reward function values, which we examine, per round, by drawing random covariate vectors and calculating the proportion whose function values lie within their nominal 95\% credible intervals. For example, on the Friedman Sparse Disjoint task, the mean credible interval coverage at $t=10{,}000$ is $0.944$ across replications. In contrast, XGBoostTS exhibits unstable coverage, while other baselines significantly under-cover. 
In the linear setting, the coverage is also near nominal, though the interval width is wider than the linear baselines. \Cref{fig:sim:ci_frontiers_friedman_linear} visualizes the coverage--width trajectories: at each evaluation round $t$, we plot an agent's empirical coverage (y-axis) against its mean $95\%$ credible-interval length (x-axis) on a fixed probe set, and connect the points over $t$. 

\begin{figure*}[tb]
  \centering
  \includegraphics[width=\linewidth]{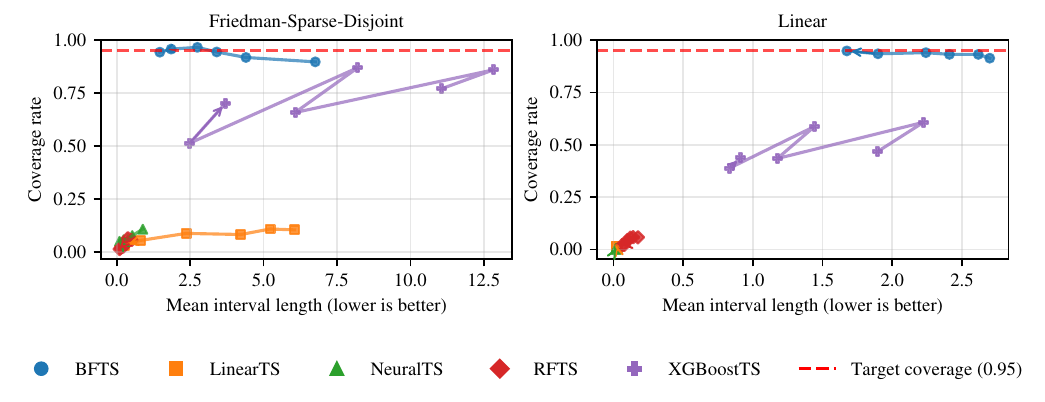}
  \caption{\textbf{Coverage vs. interval length on synthetic benchmarks.} We plot CI frontiers for representative nonlinear (Friedman) and linear scenarios. Each marker corresponds to an evaluation round $t\in\{200,500,1000,2000,5000,10000\}$ and lines connect rounds in temporal order (arrows pointing to later rounds). The red dashed line indicates the nominal $0.95$ coverage target (below: overconfident; above: conservative). Full uncertainty diagnostics are shown in Appendix (\cref{fig:app:syn_uncertainty}).}
  \label{fig:sim:ci_frontiers_friedman_linear}
\end{figure*}

\subsection{OpenML benchmark}\label{sec:emp:openml}

We transform standard OpenML \citep{vanschoren_openml_2014} classification tasks into contextual bandit problems following the protocol of \citet{riquelme_deep_2018, zhang_neural_2021, nilsson_tree_2024}: each class label becomes an arm, with binary reward $1$ for the correct class and $0$ otherwise. This introduces model misspecification, as the true Gaussian likelihood assumption of BFTS does not hold. Logistic TS has been discussed separately in literature, e.g., \citep{dumitrascu_pg-ts_2018}. We make use of the 6 datasets examined by \citet{zhang_neural_2021} (and introduce 3 of our own), strictly follow their data preprocessing process, and present both the regret values calculated in their original paper and in our own reproduction of their results. 


\begin{table}[t]
  \centering
  \caption{\textbf{Final cumulative regret on OpenML benchmarks.} We report mean $\pm$ SD over $R=12$ replications; lower is better. Full results are shown in Table \ref{tab:openml:final_regret_full}.}
  \label{tab:openml:core4}
  \scriptsize
  \setlength{\tabcolsep}{3pt}
  \resizebox{\columnwidth}{!}{%
  \begin{tabular}{ccccc}
    \toprule
    Method & Adult & MagicTelescope & Mushroom & Shuttle \\
    \midrule
    BFTS & \bnumstack{1538.7}{41.2} & \bnumstack{1488.2}{36.8} & \bnumstack{53.9}{7.6} & \bnumstack{106.9}{8.4} \\
    XGBoostTS & \numstack{1557.0}{37.3} & \numstack{1588.5}{24.4} & \numstack{77.4}{10.0} & \numstack{170.4}{76.3} \\
    RFTS & \numstack{1625.3}{41.8} & \numstack{1595.3}{50.3} & \numstack{69.1}{19.6} & \numstack{175.6}{52.9} \\
    LinTS & \numstack{1756.4}{100.9} & \numstack{2086.6}{31.7} & \numstack{500.6}{135.1} & \numstack{699.2}{138.3} \\
    LinUCB & \numstack{1771.7}{22.4} & \numstack{2111.8}{30.5} & \numstack{414.8}{49.0} & \numstack{766.9}{35.4} \\
    NeuralTS & \numstack{3979.7}{1684.3} & \numstack{2005.2}{64.0} & \numstack{789.4}{1468.7} & \numstack{395.2}{281.3} \\
    NeuralTS\textsuperscript{$\dagger$} & \numstack{2092.5}{48.0} & \numstack{2037.4}{61.3} & \numstack{115.0}{35.8} & \numstack{232.0}{149.5} \\
    \bottomrule
    \multicolumn{5}{l}{\textsuperscript{$\dagger$} reported by \citet{zhang_neural_2021}.}
  \end{tabular}%
  }
\end{table}

\Cref{tab:openml:core4} reports performance on four representative datasets. 
We calculate cumulative regret up to $T=10{,}000$ (except for Mushroom, which only has $8124$ rows).
BFTS achieves the lowest mean final regret across all tasks. On \textit{Adult}, BFTS and XGBoostTS are numerically close, but on \textit{MagicTelescope}, \textit{Mushroom}, and \textit{Shuttle}, BFTS shows larger improvements over baselines.
Full results for five additional datasets are in \Cref{app:real}. While BFTS excels on tabular data, we note it is outperformed by NeuralTS on \textit{MNIST}. This is expected: BFTS is designed for tabular heterogeneity, whereas NeuralTS leverages inductive biases suited for high-dimensional image data.

\section{Case Study: The ``Drink Less'' mHealth Trial}\label{sec:app:drinkless}

To validate BFTS in a realistic intervention setting, we apply it to data from \textit{Drink Less}, a behavior-change app designed for adults at risk of hazardous drinking.

\subsection{Study overview}
The data comes from a micro-randomized trial (MRT) involving $n=349$ participants over a 30-day period \citep{bell_notifications_2020}. The goal was to optimize the specific wording of push notifications to maximize user engagement.
At 8:00 p.m.\ daily (local time), participants were randomized with static probabilities $(0.4, 0.3, 0.3)$ to one of three actions: (i) No message, (ii) standard message, and (iii) a varying message chosen from a message bank.
The reward $R_{i,d} \in \{0, 1\}$ is defined as \emph{proximal engagement}: whether the user \(i\) opened the app within one hour (8--9 p.m.) following the decision point on that day \(d\).

The MRT produces a panel dataset $\left\{\left(X_{i, d}, A_{i, d}, R_{i, d}\right)\right\}$. To benchmark online bandit algorithms on this logged data, we construct a sequential-simulation testbed by unfolding the panel into a sequence of $T=n \times 30=10470$ decision points, while preserving within-participant temporal order \(d\), to enable sequential simulation of adaptive algorithms that update after observing outcomes.

This environment presents a difficult contextual bandit challenge: the signal-to-noise ratio is low (engagement is sparse), and the reward function likely depends on complex interactions between static user traits (e.g., \texttt{AUDIT\_score}, \texttt{age}) and dynamic states (e.g., \texttt{days\_since\_download}). 
The context vector $X_t$ includes these covariates along with one-hot user identifiers to capture unobserved heterogeneity. 
Further processing details are in \cref{app:drinkless}.

\subsection{Off-policy evaluation (OPE)}\label{sec:emp:ope}

Since we cannot deploy BFTS live in the past trial, we implement replay-style sequential simulation, where at each step $t$, the algorithm is updated only if its chosen action matches the logged action $A_t$ in the historical dataset \citep{li_unbiased_2011, liu_thompson_2024}. 

For a target policy $\pi$, the (one-step) policy value is
$V(\pi)=\E_{X\sim\mathcal D}\big[\sum_{a\in\mathcal A}\pi(a\mid X)\,\mu(X,a)\big]$,
where $\mu(X,a)=\E[R\mid X,a]$ is the conditional mean reward. We assess the quality of the learned policies using \textit{self-normalized importance sampling (SNIPS)} \citep{swaminathan_self-normalized_2015}.
In the Appendix (\cref{fig:app:drinkless_ope_convergence_dr}), we also estimate the policy value with a doubly robust (DR) estimator \citep{dudik_doubly_2011} to demonstrate that the estimated values are relatively insensitive to the choice of estimator.
To ensure validity, we verify that importance weights remain bounded (see diagnostics in \cref{app:drinkless}). 

\begin{figure}[tb]
  \centering
  \includegraphics[width=\columnwidth]{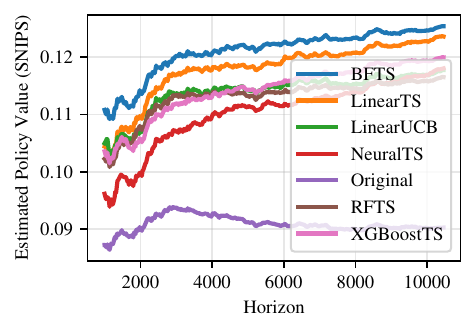}
  \caption{\textbf{Estimated policy value on the Drink Less mHealth trial data.} Policy value is estimated via SNIPS and is plotted as a function of horizon. BFTS achieves the highest estimated reward consistently along the horizon.}
  \label{fig:case:drinkless_ope_convergence}
\end{figure}

\textbf{Results.} \Cref{fig:case:drinkless_ope_convergence} illustrates the estimated policy value over the learning horizon.
BFTS achieves the highest estimated policy value consistently along the horizon, especially in the early stages.
\begin{itemize}
    \item \textbf{Early Horizon ($t=1000$):} BFTS improves the engagement probability by $\mathbf{+27.0\%}$ relative to the original random policy ($0.0235$ absolute gain) and by $\mathbf{+5.5\%}$ relative to the best baseline (LinUCB).
    \item \textbf{Final Horizon ($t=10{,}000$):} BFTS outperforms the original policy by $\mathbf{38.2\%}$ ($0.0345$ absolute gain) and by a marginal $\mathbf{1.5\%}$ relative to the best baseline (LinTS tuned).
\end{itemize}
The consistent gap over linear baselines suggests that BFTS successfully captures non-linear interaction effects in user behavior that linear models miss. Full bootstrap distributions of the final policy gaps are reported in Appendix (\cref{fig:app:drinkless_ope_gap_boxplots}).

\subsection{Policy interpretability}
Beyond performance, BFTS offers intrinsic interpretability. By analyzing the posterior split probabilities of the BART ensemble, we can identify which features drive decision-making.
We find that the BFTS policy concentrates the vast majority of its splitting mass on three features: \texttt{days\_since\_download} ($0.192$), \texttt{AUDIT\_score} ($0.138$), and \texttt{age} ($0.128$).
This aligns with domain intuition: engagement naturally decays over time (\texttt{days\_since\_download}), and baseline risk (\texttt{AUDIT\_score}) moderates receptivity to intervention. Unlike black-box neural policies, BFTS provides this ``variable importance" profile automatically, a critical feature for clinical safety and validation.

\section{MCMC and computational considerations}\label{sec:computational}

BART posteriors are known to be multimodal and difficult to sample \citep{ronen2022mixing,tan2024computational}. Our diagnostics (\cref{tab:app:mcmc_summary}) confirm imperfect parameter-space mixing: at $t=10{,}000$, median $\widehat{R}$ ranges from $1.19$ to $1.90$ across synthetic scenarios. Metropolis--Hastings acceptance rates are acceptable, between $0.10$ and $0.36$.
That said, for bandit decision-making, exact parameter convergence is secondary to decision-level stability and posterior calibration.
We measure the latter via \textit{Policy $\Delta$-TV}, which tracks the total variation distance of the induced optimal-arm probability $\pi_t(\cdot\mid x)$ between MCMC snapshots (definition in \cref{app:mcmc_mixing}). By $t=10{,}000$, Policy $\Delta$-TV is low ($0.03$--$0.08$; \cref{tab:app:mcmc_summary}), indicating that the induced decision rule stabilizes even if specific tree parameters mix slowly.
Furthermore, posterior calibration remains good: Expected Calibration Error (ECE) evolution and interval-coverage diagnostics are reported in Appendix (\cref{fig:app:syn_uncertainty}).
Consistent with this, our sensitivity sweeps show small returns from further increasing the MCMC budget (more chains/trees/burn-in; \cref{par:mcmc_budget}).
Together, these results suggest that despite elevated $\widehat{R}$, the posterior approximation is already adequately reliable for posterior sampling in BFTS.

While slower than optimized linear methods and XGBoostTS, BFTS is comparable in speed to RFTS and NeuralTS (GPU) (detailed timing results are in \cref{app:runtime}), and it remains viable for behavioral interventions. At $T=10{,}000$, cumulative runtime is $\approx 33$ minutes on Covertype and $\approx 44$ minutes on MNIST (Python implementation). This fits comfortably within the latency requirements of mHealth server-side logic. In addition, BFTS's re-fitting time grows slowly with more data, which, combined with the refresh schedule, yields sublinear cumulative time. 

\section{Sensitivity analysis}\label{sec:emp:sensitivity}

We analyze the sensitivity of BFTS design choices on the four core OpenML datasets (\textit{Adult}, \textit{MagicTelescope}, \textit{Mushroom}, and \textit{Shuttle}), aggregated via normalized regret (final regret divided by the best final regret within each dataset; lower is better); full plots are in \cref{app:sens}.

\paragraph{Encoding choice.}
Using a single model with a \texttt{one-hot} arm feature substantially degrades performance (average normalized regret $3.15\times$) compared to our default separate-arm approach, while \texttt{multi} is close to separate (mean $1.051$). This empirically supports our methodological argument in \cref{sec:arm-encoding}.

\paragraph{Refresh schedule.}
Across the refresh schedules we tested (logarithmic, square-root, and hybrid variants), performance differences are modest (mean normalized regret varies by only about $1.01$--$1.07$; Appendix \cref{fig:app:sens_summary}), supporting our default logarithmic refresh as a robust operating point.

\paragraph{Returns to increased MCMC budget.}\label{par:mcmc_budget}
Increasing chain count does not improve regret (e.g., $1.000$ for 1 chain vs.\ $1.071$ for 4 chains), and increasing the number of trees or burn-in iterations yields diminishing returns with only mild non-monotonicity (trees: $1.151$ at $m=50$, $1.058$ at $m=100$, $1.068$ at $m=150$; burn-in: $1.112$ at 250 iterations, $1.086$ at 500, $1.079$ at 750).
Detailed one-factor bars and interaction heatmaps are in \cref{app:sens}.

\paragraph{Prior hyperparameters.}
Enabling the ``quick-decay'' prior of \citet{rockova2019ontheory} yields a modest improvement (mean $1.113$ vs.\ $1.227$ without), and a Dirichlet-sparse split prior also helps slightly (mean $1.100$ vs.\ $1.154$ without).
The split-grid cap and tree-structure priors have moderate effects: varying \texttt{max\_bins} yields mean normalized regret in the range $1.10$--$1.17$, and overly small depth priors can be measurably harmful (e.g., $\alphaqd=0.1$ yields $1.160$ vs.\ about $1.08$--$1.10$ for $\alphaqd\in\{0.3,0.45\}$).
Moderate leaf shrinkage tends to perform best (mean $1.07$--$1.17$ across $\kappa\in\{1.5,2.0,2.5\}$); see Appendix (\cref{fig:app:sens_summary}) for one-factor summaries.
Finally, we report interaction heatmaps for key pairs (trees $\times$ burn-in, $\kappa \times \alphaqd$, and Dirichlet prior $\times$ split grid) in Appendix (\cref{fig:app:sens_heatmaps}) to confirm that the main effects above persist under joint perturbations.
Overall, aside from the encoding choice, \emph{BFTS is robust to reasonable hyperparameter and compute-budget variations}, which motivates our default implementation settings as a stable operating point for tabular bandit tasks.

\section{Conclusion and Discussion} \label{sec:discuss}

We presented \textit{Bayesian Forest Thompson Sampling} (BFTS), a method that bridges the gap between the predictive power of tree ensembles and the principled exploration of Thompson Sampling. By leveraging the BART posterior, BFTS successfully adapts to the nonlinear, sparse, and heterogeneous reward landscapes typical of mobile health applications.
We have validated the efficacy empirically on tabular benchmarks and the \textit{Drink Less} micro-randomized trial and theoretically via a Bayesian regret analysis under ideal posterior sampling, as well as a complementary frequentist minimax bound for a ``feel-good" variant.

Our work opens several avenues for investigation where theory meets practice:

\textbf{Computational fidelity.} We approximate the ideal posterior via batched MCMC. While our diagnostics show this is sufficient for effective decision-making, it introduces a theoretical gap. Future work could rigorously analyze how MCMC mixing rates interact with regret accumulation. Related work on approximate Thompson Sampling includes \cite{phan_thompson_2019, mazumdar_approximate_2020, osband_approximate_2023}, which highlight both the potential error and algorithmic strategies for reducing per-round inference cost. More broadly, approximate posterior sampling has also been implemented using sequential Monte Carlo methods \citep{bijl_sequential_2017}.

\textbf{Scalable inference.} While feasible for mHealth, standard MCMC is slow for high-frequency applications. Integrating advances in methodology or software for accelerated BART inference \citep{lakshminarayanan_particle_2015, he_stochastic_2023, herren_stochtree_2025, petrillo_very_2025} is an important direction.

\textbf{Information sharing.} Our current approach models arms independently. While implementing information sharing via a single model with arm encodings is straightforward, such encodings can inflate the effective input dimension. Designing arm pooling in a principled, theory-supported manner for nonparametric bandits remains an open problem.

\textbf{Non-stationarity.} mHealth rewards often drift (e.g., user habituation). The Bayesian framework of BFTS is naturally extensible to this non-stationary setting.

In summary, BFTS demonstrates that fully Bayesian non-parametrics are not just a theoretical curiosity but a practical alternative for contextual bandits on tabular data.

\subsection*{Acknowledgements}

RC gratefully acknowledges support from NSF–DMS 2515896.
BC gratefully acknowledges support from MOE Tier 2 grant MOE-T2EP20122-0013.
YT gratefully acknowledges support from NUS Start-up Grant A-8000448-00-00 and MOE AcRF Tier 1 Grant A-8002498-00-00.


\bibliographystyle{plainnat}
\bibliography{arxiv}

\newpage
\appendix
\onecolumn 
\section{Experimental Details}\label{app:experiments}

All experiments are conducted using the same global seed \(42\) unless otherwise specified. Per-replication random seeds are spawned from this global seed and its specific indices.

\providecommand{\artifactdir}{artifacts}

\subsection{Synthetic Simulation}\label{app:sim}
\subsubsection{Scenarios and parameters.}
All synthetic experiments use horizon $T=10{,}000$ and $R=12$ independent replications.
The scenarios include Linear ($P=10,K=3$), Friedman variants (Friedman/Friedman2/Friedman3, $P=5,K=2$), sparse Friedman variants ($P=20$ or $50$, $K=2$, with different correlation structures), and \textsc{SynBART} ($P=4,K=3$) where arm reward functions are sampled from a BART prior.

\subsubsection{Data-generating Processes (DGPs).}
For all synthetic scenarios, the bandit interaction proceeds for $t=1,\dots,T$ with context $X_t\in\mathbb{R}^P$ and arms $a\in\{1,\dots,K\}$.
Given $X_t$, each arm has expected reward $\mu_a(X_t)$; if arm $a$ were pulled at time $t$, the realized reward is
\[
Y_{t,a}=\mu_a(X_t)+\varepsilon_{t,a},
\]
with independent Gaussian noise. Unless stated otherwise, $\varepsilon_{t,a}\sim\mathcal N(0,\sigma^2)$ for all arms.

\textbf{Linear.} Contexts are i.i.d. $X_t\sim\mathrm{Unif}([0,1]^P)$. For each arm $a$, coefficients $\beta_a\in\mathbb{R}^P$ are drawn with $\beta_{a,j}\overset{\mathrm{iid}}{\sim}\mathcal N(0,1)$ for $j\le d$ and $\beta_{a,j}=0$ for $j>d$ (default $d=P$). The mean reward is
\[
\mu_a(x)=\beta_a^\top x.
\]

\textbf{Friedman (nonlinear, $K=2$).} Contexts are i.i.d. $X_t\sim\mathrm{Unif}([0,1]^P)$.
For \textsc{Friedman} we use the standard Friedman1 function
\[
f(x_1,\dots,x_5)=10\sin(\pi x_1x_2)+20(x_3-0.5)^2+10x_4+5x_5.
\]
For \textsc{Friedman2} and \textsc{Friedman3}, we rescale the first four coordinates:
\[
x_1'=100x_1,\quad x_2'=40\pi + 520\pi x_2,\quad x_3'=x_3,\quad x_4'=1+10x_4,
\]
and define
\[
f_2(x)=\frac{1}{125}\sqrt{(x_1')^2+\left(x_2'x_3'-\frac{1}{x_2'x_4'}\right)^2},
\qquad
f_3(x)=\frac{1}{0.1}\arctan\left(\frac{x_2'x_3'-\frac{1}{x_2'x_4'}}{x_1'}\right).
\]
Arm $1$ has mean $\mu_1(x)=f(x)$ where $f$ is the scenario-specific function ($f$/$f_2$/$f_3$).
For arm $2$ we consider:
\begin{itemize}
  \item \emph{Disjoint} (Friedman-Sparse-Disjoint): let $x^{\mathrm{rev}}=(x_P,x_{P-1},\dots,x_1)$ and set $\mu_2(x)=f(x^{\mathrm{rev}})$ (i.e., apply the same scenario-specific function to the reversed feature order).
  \item \emph{Shared} (Friedman, Friedman2, Friedman3, Friedman-Sparse, Friedman-Heteroscedastic): $\mu_2(x)=f(x_1,\dots,x_5)+5\sin(\pi x_1x_2)$.
\end{itemize}
The sparse variants differ only in $P$ (here, $P=20$).

\textbf{Friedman-Heteroscedastic.} Same means as the shared Friedman variant above, but with arm-specific noise variances: for each arm $a$, sample $\sigma_a^2=10^U$ with $U\sim\mathrm{Unif}(-1,1)$ once per replication, and draw $\varepsilon_{t,a}\sim\mathcal N(0,\sigma_a^2)$.

\textbf{SynBART.} Contexts are i.i.d.\ $X_t\sim\mathrm{Unif}([0,1]^P)$.
For each arm $a$, we sample a mean-reward function $\mu_a(\cdot)$ from the same BART prior family (and hyperparameters) used by BFTS (\cref{sec:bart_prior}):
\[
\mu_a(x)=\sum_{j=1}^m h\bigl(x;T_{a,j},M_{a,j}\bigr),
\]
with $m=100$ trees, the depth-geometric splitting prior (P2-a) with $\alphaqd=0.45$, Gaussian leaf prior (P3) with $\kappa=2$, Dirichlet-sparse split-variable probabilities $\boldsymbol{s}\sim\mathrm{Dir}(\zeta/p^\xi,\dots,\zeta/p^\xi)$ with $(\zeta,\xi)=(1,1)$, and split-grid cap $\Nmax=100$.
Noise variance is fixed to $\sigma^2=0.01$.


\begin{table*}[t]
  \centering
  \caption{Simulation benchmarks: final cumulative regret at $T=10{,}000$ (mean $\pm$ SD over $R=12$ replications). Lower is better.}
  \label{tab:syn:final_regret_full}
  \scriptsize
  \setlength{\tabcolsep}{4pt}
  \resizebox{\textwidth}{!}{%
  \begin{tabular}{lccccccccc}
    \toprule
    Method & Avg. rank & \headerstack{Friedman}{(Sparse and Disjoint)} & \headerstack{Friedman}{(Sparse)} & \headerstack{Friedman}{(Heteroscedastic)} & Friedman & Friedman2 & Friedman3 & Linear & SynBART \\
    \midrule
    BFTS & 1.63 & \bnumstack{660.1}{51.8} & \bnumstack{814.2}{86.6} & \bnumstack{715.5}{135.5} & \bnumstack{687.8}{95.2} & \inumstack{377.4}{84.1} & \inumstack{592.3}{65.0} & \numstack{635.7}{277.0} & \bnumstack{44.5}{20.1} \\
    LinTS & 4.38 & \numstack{8221.9}{3365.1} & \numstack{8721.9}{12334.8} & \numstack{19677.9}{11361.6} & \numstack{8734.7}{12353.1} & \numstack{605.6}{243.1} & \numstack{10892.6}{12888.5} & \inumstack{274.8}{387.5} & \numstack{275.6}{143.9} \\
    LinUCB & 5.00 & \numstack{10219.1}{5462.7} & \numstack{21133.0}{7556.2} & \numstack{26265.6}{137.1} & \numstack{26305.1}{200.9} & \numstack{777.7}{161.7} & \numstack{24048.4}{7252.0} & \bnumstack{90.6}{42.0} & \numstack{268.2}{209.5} \\
    NeuralTS & 3.00 & \inumstack{1444.7}{172.9} & \numstack{7858.9}{9031.1} & \numstack{16299.0}{11811.6} & \numstack{11942.3}{12426.1} & \bnumstack{134.6}{109.2} & \bnumstack{422.7}{655.1} & \numstack{595.3}{354.1} & \numstack{155.3}{122.6} \\
    RFTS & 3.88 & \numstack{7625.3}{255.0} & \numstack{2737.0}{200.9} & \numstack{2601.3}{170.8} & \numstack{2512.9}{128.3} & \numstack{1084.3}{81.3} & \numstack{953.4}{134.5} & \numstack{1188.7}{666.2} & \numstack{104.0}{45.8} \\
    XGBoostTS & 3.13 & \numstack{5010.1}{138.3} & \inumstack{2634.2}{86.6} & \inumstack{2417.8}{135.0} & \inumstack{2397.0}{104.0} & \numstack{793.9}{47.6} & \numstack{1749.0}{68.3} & \numstack{888.9}{452.0} & \inumstack{57.3}{28.6} \\
    \bottomrule
  \end{tabular}%
  }
\end{table*}

\begin{figure}[tb]
  \centering
  \includegraphics[width=0.9\linewidth]{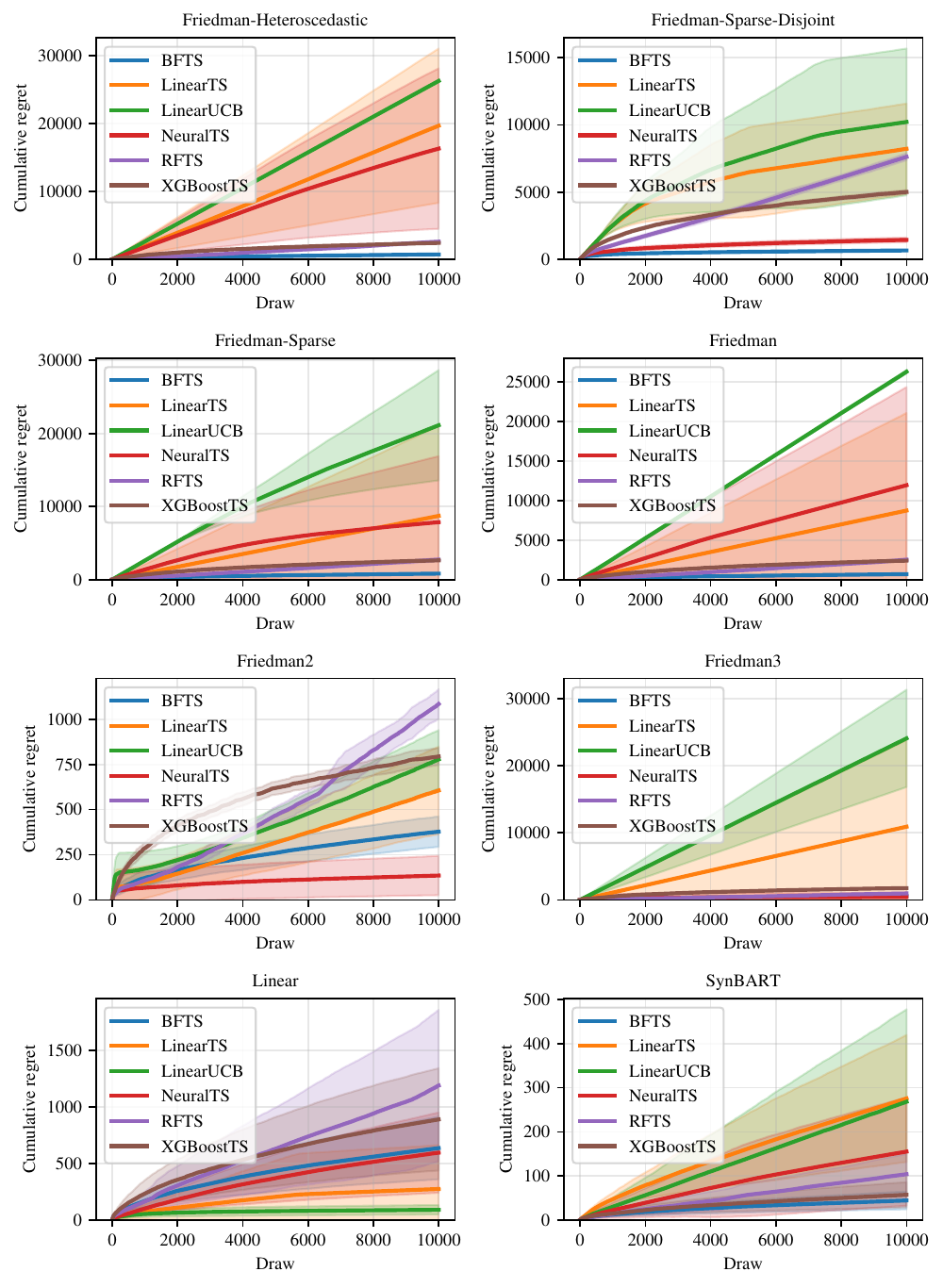}
  \caption{Simulation benchmarks: cumulative regret trajectories across all synthetic scenarios (mean $\pm$ SD over $R=12$ replications) up to $T=10{,}000$.}
  \label{fig:app:syn_regret_curves_full}
\end{figure}

\paragraph*{Posterior uncertainty diagnostics.}
To complement regret comparisons, we summarize uncertainty calibration via CI frontiers and ECE evolution trends.
To make these diagnostics comparable across time and across replications, we evaluate them on a fixed probe set $\mathcal X_{\mathrm{probe}}=\{x^{(j)}\}_{j=1}^J$ with $J=40$ contexts per scenario (fixed across replications via a scenario-seeded RNG). For synthetic scenarios, $x^{(j)}$ are sampled from the scenario's context generator; for OpenML scenarios, we use a deterministic subsample of $J=40$ rows from the dataset.
For each agent, bandit round $t$, probe point $x^{(j)}$, and arm $a$, we form a pointwise $95\%$ credible interval from posterior samples via the empirical quantiles $\bigl[q_{0.025}(x^{(j)},a,t),\,q_{0.975}(x^{(j)},a,t)\bigr]$. We then compute (i) for synthetic scenarios (where $f_0$ is known), a coverage indicator $\mathbf{1}\{f_0(x^{(j)},a)\in[q_{0.025},q_{0.975}]\}$, and (ii) an interval length $q_{0.975}-q_{0.025}$. Finally, at each $t$ we aggregate these quantities by an unweighted average over replications $s$, probes $j$, and arms $a$, yielding an empirical coverage rate and mean interval length for that agent at round $t$. The CI frontier plots the resulting trajectory of points $(\text{mean length}_t,\text{coverage}_t)$ as $t$ increases.

\begin{figure}[tb]
  \centering
  \begin{subfigure}[t]{0.48\textwidth}
    \centering
    \includegraphics[width=\linewidth]{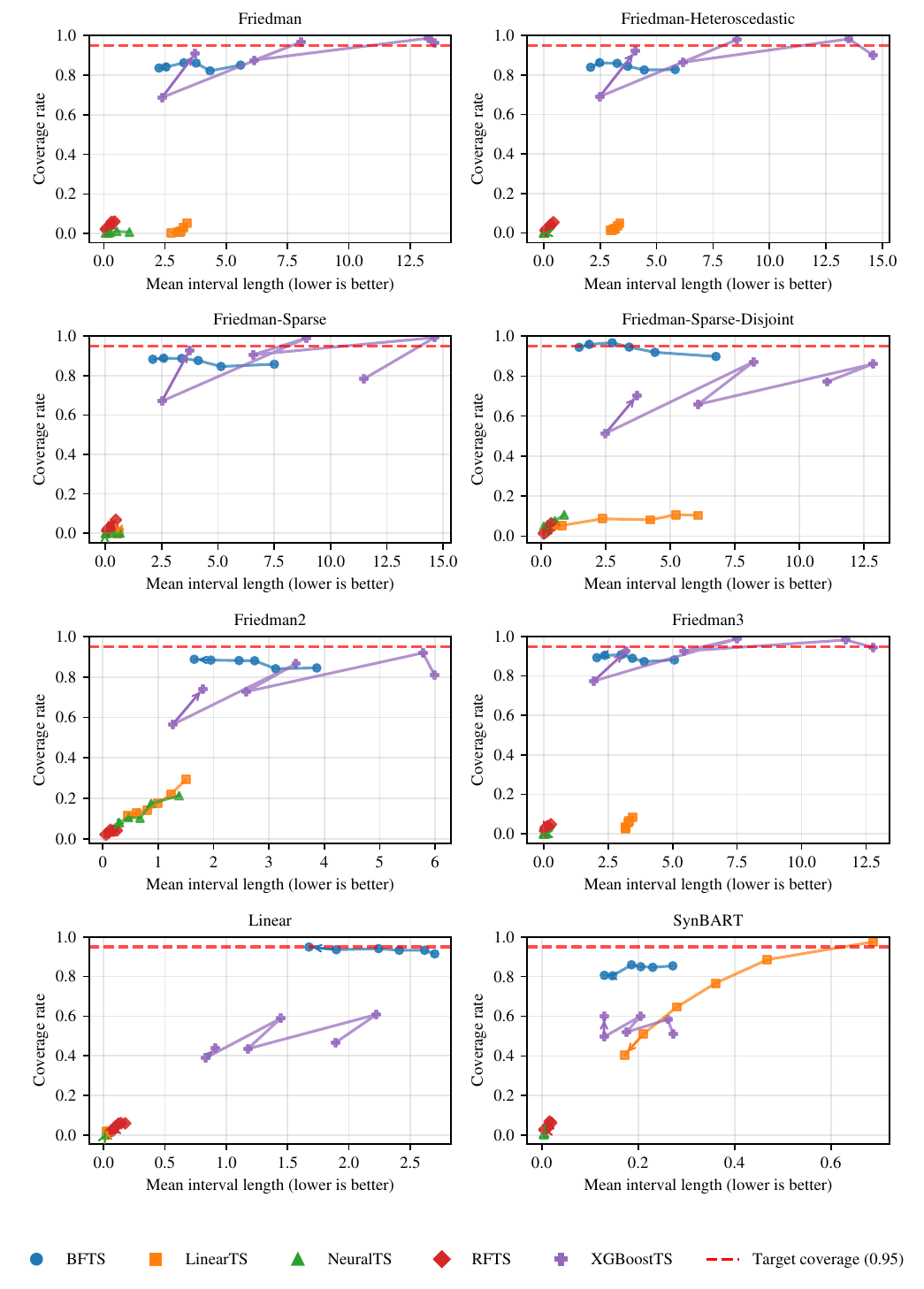}
    \caption{CI frontiers.}
  \end{subfigure}
  \begin{subfigure}[t]{0.48\textwidth}
    \centering
    \includegraphics[width=\linewidth]{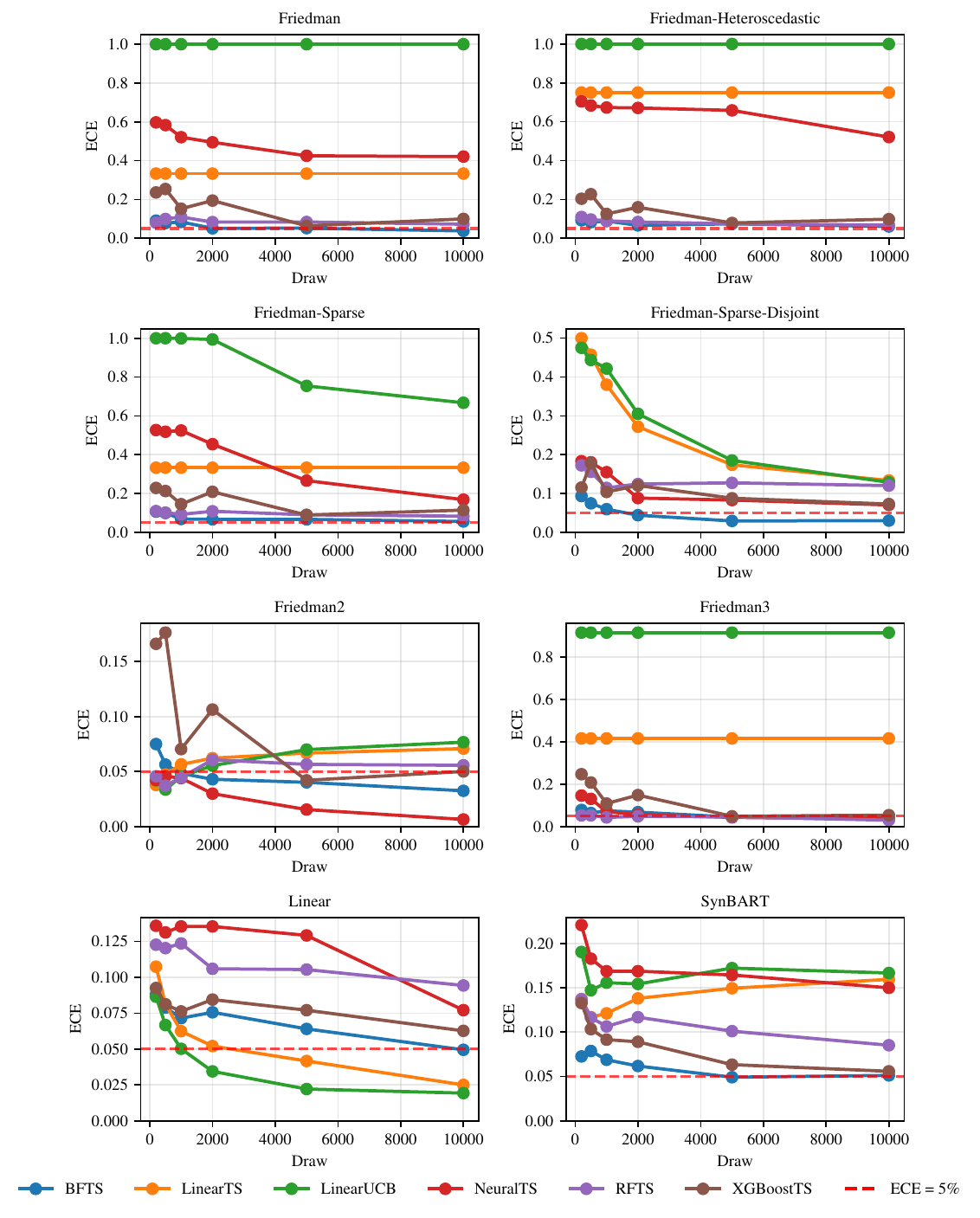}
    \caption{ECE evolution.}
  \end{subfigure}
  \caption{Simulation: posterior uncertainty diagnostics (coverage and calibration).}
  \label{fig:app:syn_uncertainty}
\end{figure}

\paragraph*{Feature inclusion diagnostics.}
We report feature-inclusion frontiers for representative synthetic scenarios in Figure~\ref{fig:app:syn_feature_inclusion}.
The feature-inclusion frontier plots the cumulative splitting mass as features are ranked by posterior inclusion.
A steep early rise indicates that a small subset of covariates drives a substantial fraction of the policy's decision rules for those sparse scenarios, indicating that BFTS is successfully capturing the sparse nature of the reward function.

\begin{figure*}[tb]
  \centering
  \begin{subfigure}[t]{0.24\textwidth}
    \centering
    \includegraphics[width=\linewidth]{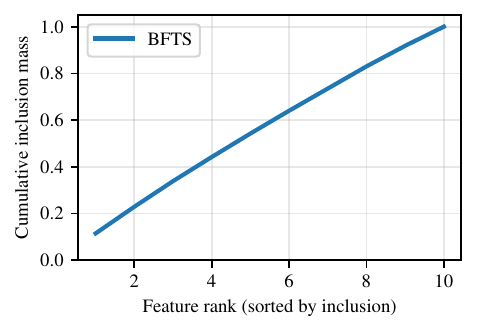}
    \caption{Linear.}
  \end{subfigure}
  \begin{subfigure}[t]{0.24\textwidth}
    \centering
    \includegraphics[width=\linewidth]{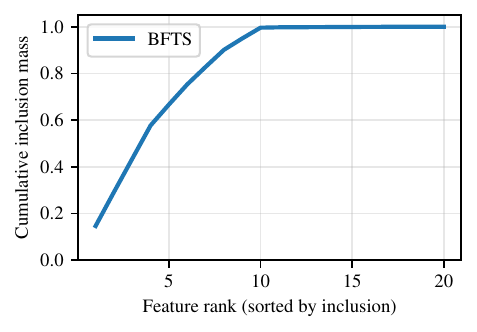}
    \caption{Friedman-Sparse-Disjoint.}
  \end{subfigure}
  \begin{subfigure}[t]{0.24\textwidth}
    \centering
    \includegraphics[width=\linewidth]{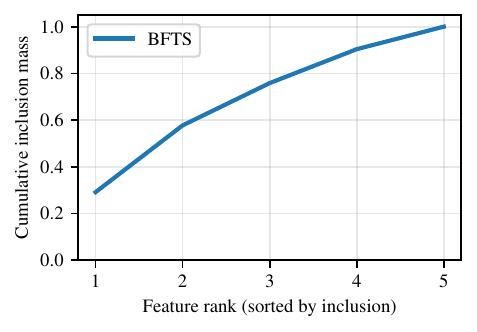}
    \caption{Friedman.}
  \end{subfigure}
  \begin{subfigure}[t]{0.24\textwidth}
    \centering
    \includegraphics[width=\linewidth]{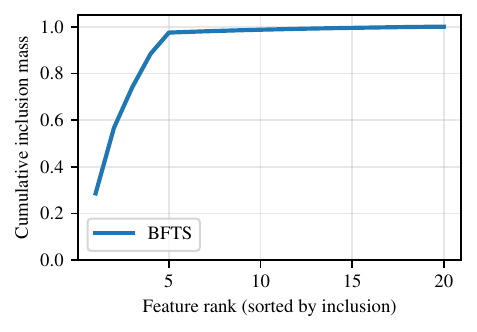}
    \caption{Friedman-Sparse.}
  \end{subfigure}

%
  \caption{Simulation: feature-inclusion frontiers for representative synthetic scenarios.}
  \label{fig:app:syn_feature_inclusion}
\end{figure*}

\subsection{OpenML Benchmark}\label{app:real}
We evaluate on nine OpenML classification datasets: Adult, Covertype, EEGEyeState, GasDrift, MNIST, MagicTelescope, Mushroom, PageBlocks, and Shuttle.
Each dataset induces a contextual bandit with contexts as feature vectors and arms as class labels; rewards are $0/1$ depending on whether the chosen arm matches the true class. 
We provide cumulative regret curves for all datasets in \cref{fig:app:openml_regret_curves}.
Baseline methods are set to the default values recommended in the original papers.
For \textbf{LinTS} and \textbf{NeuralTS}, we use a two-stage grid tuning procedure: we screen the hyperparameter grid on runs with a different seed (seed $0$) and then fix, for each dataset, the best setting (by mean final cumulative regret) for the full runs.
\begin{itemize}
    \item \textbf{LinTS} grid: $\nu\in\{1.0,0.1,0.01\}$, $\lambda=1$. \citep{zhang_neural_2021}
    \item \textbf{NeuralTS} grid: $\lambda\in\{1,0.1,0.01,0.001\}$ and $\nu\in\{0.1,0.01,0.001,0.0001,0.00001\}$. \citep{zhang_neural_2021}
    \item \textbf{XGBoostTS} and \textbf{RFTS}: Fixed $\nu=1$. \citep{nilsson_tree_2024}
\end{itemize} 

\begin{remark}
Our evaluation selects hyperparameters for NeuralTS and LinTS via separate low-budget pilot runs (best in 4 runs for OPE, in 1 run for other datasets), rather than selecting the best \textit{post hoc} configuration on the full experiment. While this may not capture the absolute peak performance of these methods, it represents a realistic and competitive tuning strategy for online learning, where extensive cross-validation is often computationally prohibitive or data-inefficient.
In fact, providing any separate tuning budget already favors these baselines compared to a strictly online setting.
Moreover, on tabular contextual-bandit benchmarks, \citet{nilsson_tree_2024} similarly report that tree-ensemble bandits (e.g., XGBoost/RF-based TS variants) are among the strongest baselines and often outperform NeuralTS. We also report the NeuralTS numbers for the 4 core datasets from the original paper (\textbf{NeuralTS}$^{\dagger}$ in \cref{tab:openml:core4}). These results are not competitive with the tree-ensemble bandits or with our own results.
\end{remark}


\begin{table*}[t]
  \centering
  \caption{OpenML benchmark: final cumulative regret at $T=10{,}000$ (mean $\pm$ SD over $R=12$ replications). Lower is better.}
  \label{tab:openml:final_regret_full}
  \scriptsize
  \setlength{\tabcolsep}{2.5pt}
  \resizebox{\textwidth}{!}{%
  \begin{tabular}{lcccccccccc}
    \toprule
    Method & Avg. rank & Adult & MagicTelescope & Mushroom & Shuttle & Covertype & EEGEyeState & GasDrift & MNIST & PageBlocks \\
    \midrule
    Context Dim ($P \times K$) & -- & $15 \times 2$ & $12 \times 2$ & $23 \times 2$ & $10 \times 7$ & $9 \times 10$ & $15 \times 2$ & $129 \times 6$ & $785 \times 10$ & $11 \times 5$ \\
    \midrule
    BFTS & 1.89 & \bnumstack{1538.7}{41.2} & \bnumstack{1488.2}{36.8} & \bnumstack{53.9}{7.6} & \bnumstack{106.9}{8.4} & \numstack{3734.3}{161.2} & \inumstack{2273.5}{39.9} & \inumstack{818.1}{75.5} & \numstack{4385.3}{490.6} & \bnumstack{266.3}{16.4} \\
    LinTS & 3.89 & \numstack{1756.4}{100.9} & \numstack{2086.6}{31.7} & \numstack{500.6}{135.1} & \numstack{699.2}{138.3} & \bnumstack{3480.0}{79.9} & \numstack{3623.5}{54.7} & \bnumstack{481.2}{15.8} & \numstack{6757.5}{60.6} & \numstack{363.7}{21.6} \\
    LinUCB & 4.78 & \numstack{1771.7}{22.4} & \numstack{2111.8}{30.5} & \numstack{414.8}{49.0} & \numstack{766.9}{35.4} & \numstack{3709.9}{46.8} & \numstack{3615.3}{31.1} & \numstack{922.6}{20.8} & \numstack{7391.7}{22.1} & \numstack{372.8}{15.6} \\
    NeuralTS & 5.00 & \numstack{3979.7}{1684.3} & \numstack{2005.2}{64.0} & \numstack{789.4}{1468.7} & \numstack{395.2}{281.3} & \numstack{4695.5}{1009.3} & \numstack{4962.5}{36.8} & \numstack{7584.8}{245.0} & \bnumstack{1700.7}{81.2} & \numstack{421.9}{30.5} \\
    RFTS & 3.11 & \numstack{1625.3}{41.8} & \numstack{1595.3}{50.3} & \inumstack{69.1}{19.6} & \numstack{175.6}{52.9} & \numstack{3582.4}{66.4} & \numstack{2635.2}{86.9} & \numstack{2284.5}{443.7} & \numstack{5745.2}{399.9} & \inumstack{292.8}{22.4} \\
    XGBoostTS & 2.33 & \inumstack{1557.0}{37.3} & \inumstack{1588.5}{24.4} & \numstack{77.4}{10.0} & \inumstack{170.4}{76.3} & \inumstack{3541.3}{60.2} & \bnumstack{2132.3}{38.5} & \numstack{1240.8}{38.1} & \inumstack{4080.5}{110.4} & \numstack{296.3}{14.0} \\
    \bottomrule
  \end{tabular}%
  }
\end{table*}


\begin{table}[t]
  \centering
  \caption{OpenML benchmark: pairwise win--tie--lose matrix across $9$ datasets (cell format: W--T--L). Significant differences are determined via a Mann--Whitney test with Bonferroni correction ($\alpha=0.1$).}
  \label{tab:openml:wtl}
  \begin{tabular}{lccccccc}
    \toprule
    Agent & BFTS & LinTS & LinUCB & NeuralTS & RFTS & XGBoostTS & Total \\
    \midrule
    BFTS & --- & 7--0--2 & 8--1--0 & 8--0--1 & 6--2--1 & 5--2--2 & 34--5--6 \\
    LinTS & 2--0--7 & --- & 4--5--0 & 5--2--2 & 2--0--7 & 1--1--7 & 14--8--23 \\
    LinUCB & 0--1--8 & 0--5--4 & --- & 6--0--3 & 1--0--8 & 1--0--8 & 8--6--31 \\
    NeuralTS & 1--0--8 & 2--2--5 & 3--0--6 & --- & 1--2--6 & 1--2--6 & 8--6--31 \\
    RFTS & 1--2--6 & 7--0--2 & 8--0--1 & 6--2--1 & --- & 0--5--4 & 22--9--14 \\
    XGBoostTS & 2--2--5 & 7--1--1 & 8--0--1 & 6--2--1 & 4--5--0 & --- & 27--10--8 \\
    \bottomrule
  \end{tabular}%
\end{table}

In \cref{tab:openml:final_regret_full}, best and second-best methods (per dataset) are highlighted in bold and italics, respectively.

\begin{figure}[tb]
  \centering
  \includegraphics[width=\linewidth]{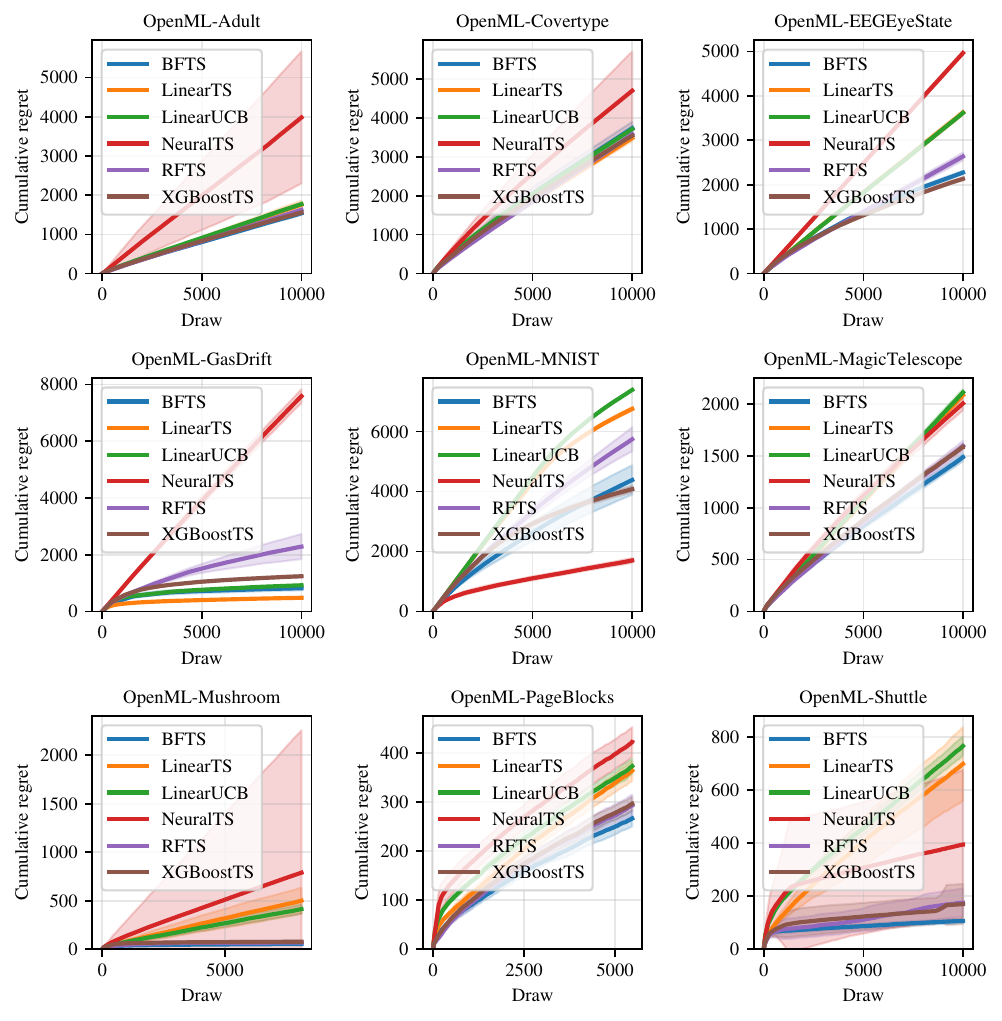}
  \caption{OpenML benchmark: cumulative regret trajectories on all nine datasets (mean $\pm$ SD over $R=12$ replications).}
  \label{fig:app:openml_regret_curves}
\end{figure}

\subsection{Sensitivity Analysis}\label{app:sens}
We report additional sensitivity plots for MCMC settings, tree priors, and encoding choices (all aggregated across the four core OpenML datasets used in the sensitivity experiment) using normalized regret (aggregated across datasets; lower is better).
The sensitivity sweep uses a total of $16$ runs ($4$ core OpenML datasets with $4$ independent replications per dataset).
We also include interaction heatmaps (\cref{fig:app:sens_heatmaps}) to visualize joint effects.
In this sweep, increasing chain count does not improve mean normalized regret ($1.000$ for 1 chain vs.\ $1.071$ for 4 chains), and increasing MCMC budget yields modest returns (trees: $1.151$ at $m=50$ vs.\ $1.058$ at $m=100$ and $1.068$ at $m=150$; burn-in: $1.112$ at \texttt{nskip}$=250$ vs.\ $1.086$ at \texttt{nskip}$=500$ and $1.079$ at \texttt{nskip}$=750$). Although 1 chain is slightly better in this particular sweep, we use 4 chains as a default to reduce Monte Carlo variability and to support more stable posterior diagnostics; overall regret differences are modest. These results motivate our default configuration as a stable operating point.

\begin{figure*}[tb]
  \centering
  \begin{subfigure}[t]{0.19\textwidth}
    \centering
    \includegraphics[width=\linewidth]{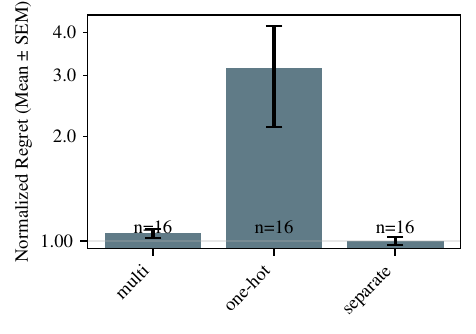}
    \caption{Encoding.}
  \end{subfigure}
  \hfill
  \begin{subfigure}[t]{0.19\textwidth}
    \centering
    \includegraphics[width=\linewidth]{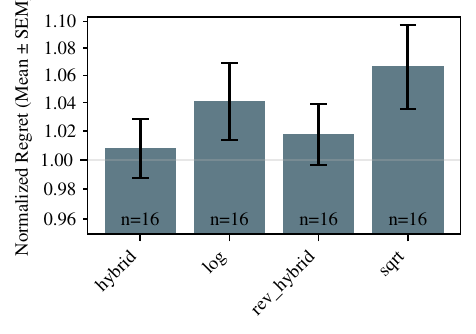}
    \caption{Refresh.}
  \end{subfigure}
  \hfill
  \begin{subfigure}[t]{0.19\textwidth}
    \centering
    \includegraphics[width=\linewidth]{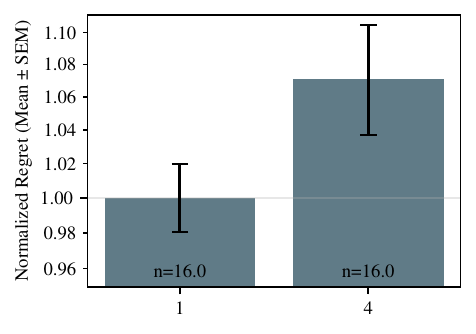}
    \caption{Chains.}
  \end{subfigure}
  \hfill
  \begin{subfigure}[t]{0.19\textwidth}
    \centering
    \includegraphics[width=\linewidth]{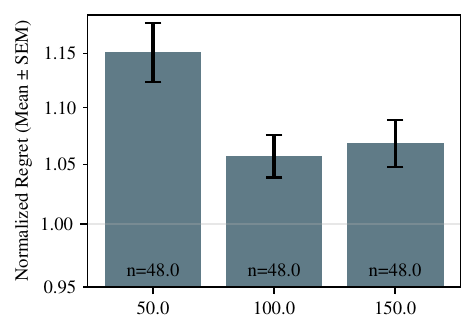}
    \caption{Trees.}
  \end{subfigure}
  \hfill
  \begin{subfigure}[t]{0.19\textwidth}
    \centering
    \includegraphics[width=\linewidth]{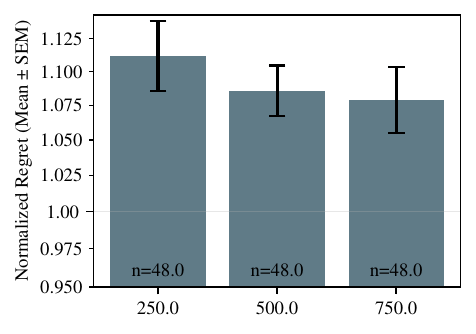}
    \caption{Burn-in.}
  \end{subfigure}

  \vspace{0.5em}

  \begin{subfigure}[t]{0.19\textwidth}
    \centering
    \includegraphics[width=\linewidth]{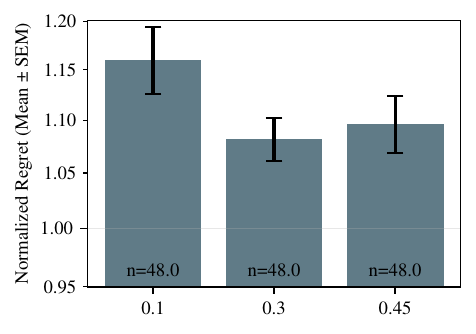}
    \caption{Depth prior $\alphaqd$.}
  \end{subfigure}
  \hfill
  \begin{subfigure}[t]{0.19\textwidth}
    \centering
    \includegraphics[width=\linewidth]{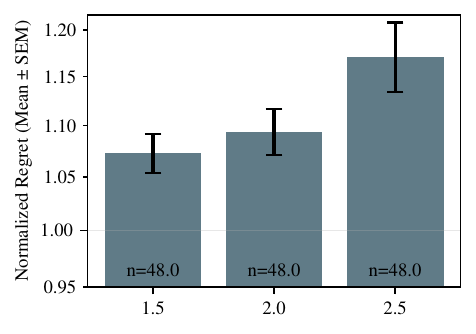}
    \caption{Leaf shrinkage $\kappa$.}
  \end{subfigure}
  \hfill
  \begin{subfigure}[t]{0.19\textwidth}
    \centering
    \includegraphics[width=\linewidth]{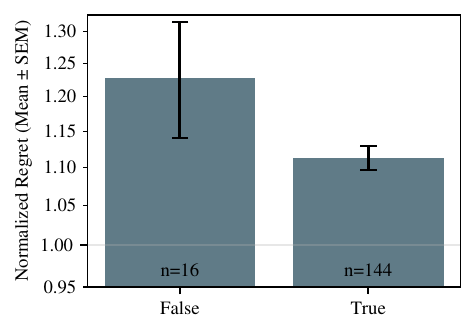}
    \caption{Quick decay.}
  \end{subfigure}
  \hfill
  \begin{subfigure}[t]{0.19\textwidth}
    \centering
    \includegraphics[width=\linewidth]{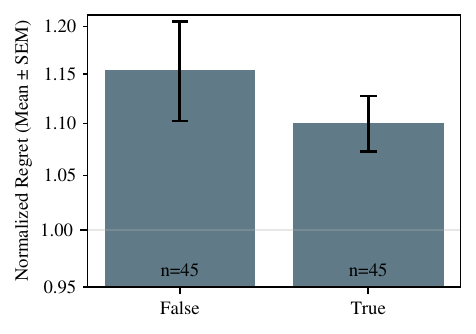}
    \caption{Dirichlet-sparse prior.}
  \end{subfigure}
  \hfill
  \begin{subfigure}[t]{0.19\textwidth}
    \centering
    \includegraphics[width=\linewidth]{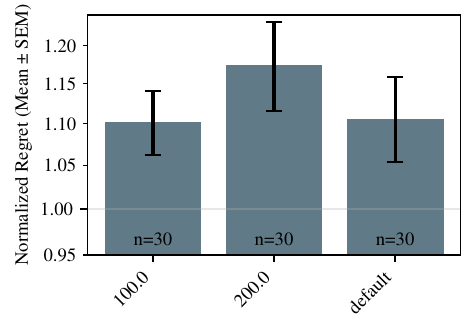}
    \caption{Split-grid cap.}
  \end{subfigure}

  \caption{Sensitivity analysis on the four core OpenML datasets (aggregated via normalized regret; lower is better).}
  \label{fig:app:sens_summary}
\end{figure*}

\begin{figure*}[tb]
  \centering
  \begin{subfigure}[t]{0.32\textwidth}
    \centering
    \includegraphics[width=\linewidth]{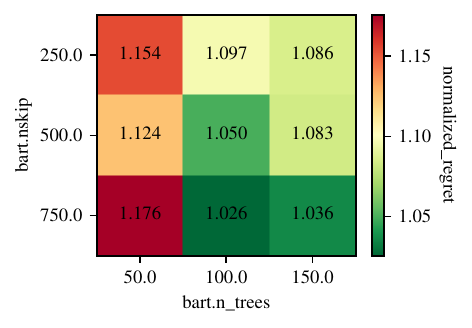}
    \caption{Trees $\times$ burn-in.}
  \end{subfigure}
  \begin{subfigure}[t]{0.32\textwidth}
    \centering
    \includegraphics[width=\linewidth]{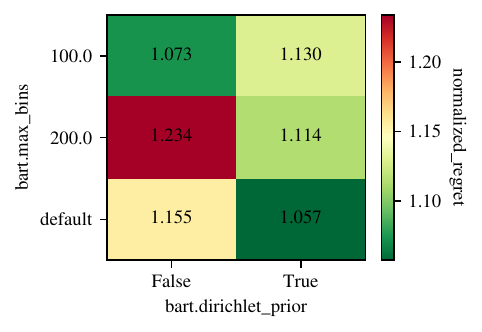}
    \caption{Dirichlet prior $\times$ split grid.}
  \end{subfigure}
  \begin{subfigure}[t]{0.32\textwidth}
    \centering
    \includegraphics[width=\linewidth]{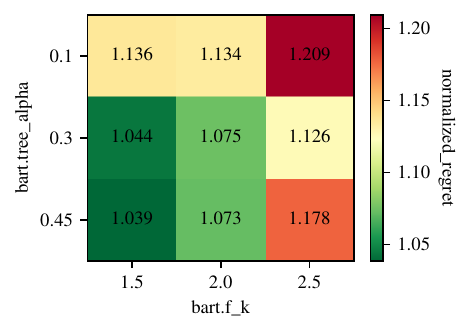}
    \caption{$\kappa \times \alphaqd$.}
  \end{subfigure}

  \caption{Additional interaction heatmaps for sensitivity settings (aggregated via normalized regret; lower is better).}
  \label{fig:app:sens_heatmaps}
\end{figure*}

\subsection{Runtime and Refresh Cost}\label{app:runtime}
\begin{figure}[tb]
  \centering
  \includegraphics[width=0.85\linewidth]{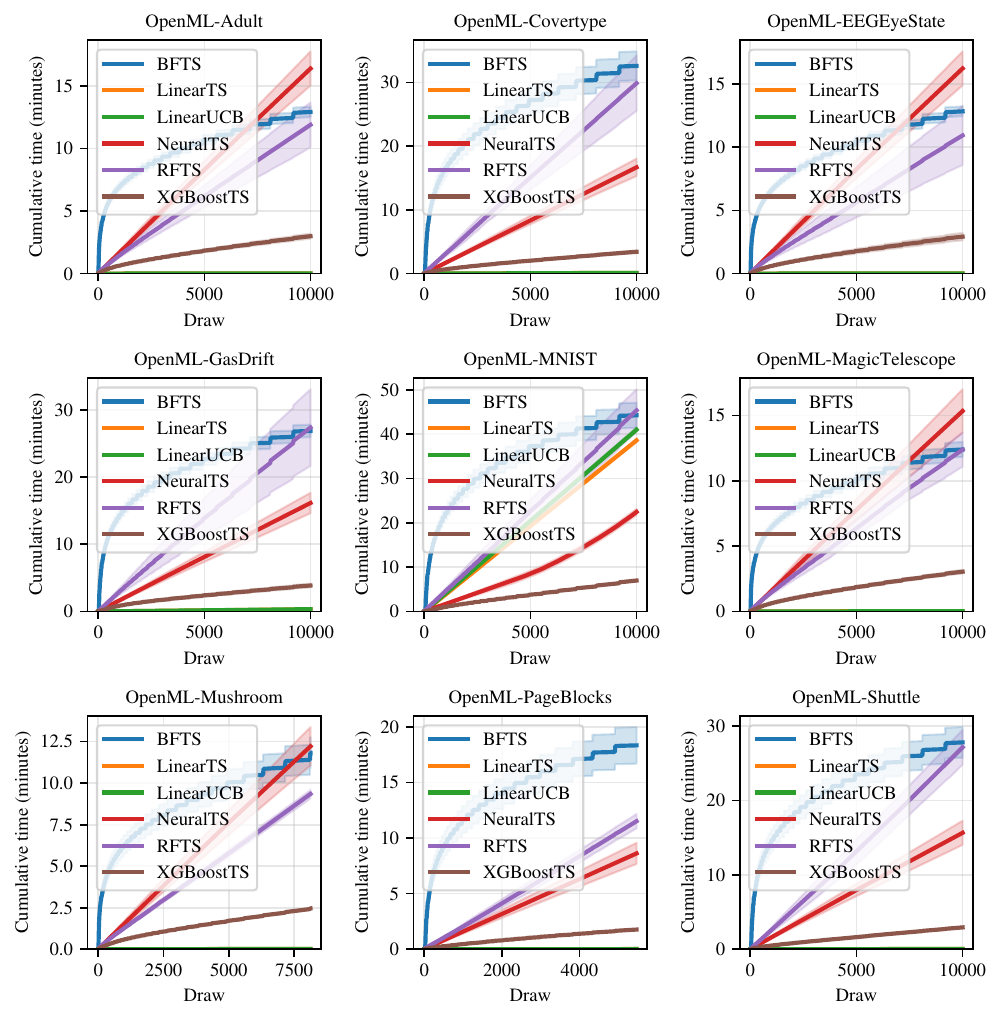}
  \caption{OpenML benchmark: cumulative wall-clock time trajectories (mean $\pm$ SD over $R=12$ replications).}
  \label{fig:app:runtime_openml}
\end{figure}

In BFTS, per-round action selection is inexpensive given a posterior draw, while the main cost comes from periodic posterior refreshes that rerun MCMC from scratch according to the logarithmic schedule in \cref{sec:bart_spec}.
We summarize computational cost via cumulative wall-clock time curves in \cref{fig:app:runtime_openml}.
\subsection{MCMC Mixing}
\label{app:mcmc_mixing}
We summarize standard MCMC diagnostics for BART posterior sampling. We report $\widehat{R}$ and Metropolis--Hastings acceptance rates as standard mixing diagnostics (\cref{tab:app:mcmc_summary}).
While $\widehat{R}$ values are elevated across scenarios, indicating imperfect chain convergence in the high-dimensional tree space, overall acceptance rates remain stable in a typical range (at $t=10{,}000$, acceptance ranges from $0.10$ to $0.36$ across synthetic scenarios; \cref{tab:app:mcmc_summary}).
Despite this, the induced decision policies are robust: by $t=10{,}000$, the policy $\Delta$-TV is small. Predictive uncertainty diagnostics (interval-coverage frontiers and ECE evolution) are reported in \cref{fig:app:syn_uncertainty}.

\paragraph{Policy $\Delta$-TV (decision-level stability).}
We quantify how much the \emph{induced decision rule} changes over time via a total-variation (TV) distance computed on a fixed set of probe contexts.
We use the same probe set $\mathcal X_{\mathrm{probe}}$ (with $J=40$ per scenario) introduced above for the uncertainty diagnostics in \cref{fig:app:syn_uncertainty}.
At a set of diagnostic snapshot times $\mathcal T_{\mathrm{snap}}\subset\{1,\dots,T\}$ (stored as \texttt{draw\_idx}), the agent produces an estimated optimal-arm probability vector on each probe,
\[
\pi_t(a\mid x^{(j)}) \;\approx\; \Pr\big(a\in\arg\max_{a'\in\mathcal A} f_{\theta}(x^{(j)},a') \,\big|\, \mathcal D_{t}\big),
\qquad a\in\mathcal A,
\]
approximated by normalizing Monte Carlo ``votes'' over posterior draws at snapshot $t$.
Let $t^-<t$ denote the previous snapshot time.
For each probe $x^{(j)}$, define the snapshot-to-snapshot policy change as
\[
\mathrm{TV}_j(t) \;\defeq\; \tfrac12\sum_{a\in\mathcal A}\big|\pi_t(a\mid x^{(j)})-\pi_{t^-}(a\mid x^{(j)})\big|.
\]
We then aggregate across probes to summarize decision-level stability at snapshot $t$ (e.g., median or mean over $j\le J$); \cref{tab:app:mcmc_summary} reports the mean Policy $\Delta$-TV at $t\in\{500,2000,10000\}$.

\begin{table}[t]
  \centering
  \caption{MCMC and decision-level diagnostics for BFTS across synthetic scenarios. We report $\widehat{R}$ (median and mean at $t=10{,}000$), the decision-level policy change between consecutive diagnostic snapshots (Policy $\Delta$-TV; mean at $t\in\{500,2000,10000\}$), and Metropolis--Hastings acceptance rates (overall; mean at $t\in\{500,2000,10000\}$).}
  \label{tab:app:mcmc_summary}
  \setlength{\tabcolsep}{3pt}
  \begin{tabular}{l cc ccc ccc}
    \toprule
    & \multicolumn{2}{c}{$\widehat{R}$ ($t=10$k)} & \multicolumn{3}{c}{Policy $\Delta$-TV} & \multicolumn{3}{c}{Acceptance (overall)} \\
    \cmidrule(lr){2-3} \cmidrule(lr){4-6} \cmidrule(lr){7-9}
    Scenario & Med. & Mean & $t=500$ & $t=2$k & $t=10$k & $t=500$ & $t=2$k & $t=10$k \\
    \midrule
    Linear & 1.19 & 1.25 & 0.11 & 0.08 & 0.05 & 0.74 & 0.54 & 0.36 \\
    SynBART & 1.31 & 1.40 & 0.08 & 0.04 & 0.03 & 0.61 & 0.36 & 0.20 \\
    \midrule
    Friedman & 1.80 & 1.88 & 0.13 & 0.08 & 0.05 & 0.23 & 0.18 & 0.13 \\
    Friedman2 & 1.41 & 1.54 & 0.09 & 0.04 & 0.03 & 0.46 & 0.38 & 0.27 \\
    Friedman3 & 1.62 & 1.70 & 0.09 & 0.06 & 0.04 & 0.31 & 0.25 & 0.16 \\
    Friedman-Sparse & 1.89 & 1.92 & 0.13 & 0.10 & 0.08 & 0.29 & 0.19 & 0.14 \\
    Fried.-Heterosced. & 1.90 & 1.94 & 0.11 & 0.09 & 0.06 & 0.20 & 0.14 & 0.11 \\
    Fried.-Sparse-Disj. & 1.86 & 1.90 & 0.14 & 0.06 & 0.03 & 0.26 & 0.19 & 0.10 \\
    \bottomrule
  \end{tabular}
\end{table}

\subsection{Implementation Details}\label{app:implementation}

Unless otherwise specified in sensitivity studies, all experiments use the following default implementation settings.
The BART prior and MCMC transition details follow \cref{sec:bart_prior,sec:bart_mcmc}.
\begin{itemize}
    \item \textbf{Posterior update schedule}: We use an adaptive logarithmic refresh schedule where the posterior is re-sampled from scratch whenever $\lceil 8 \log t \rceil$ increases.
    \item \textbf{MCMC settings}: For each refresh, we run 4 independent chains (parallelized via Ray actors). Each chain consists of 500 burn-in iterations (\texttt{nskip}) and at most 500 post-burn-in iterations (\texttt{ndpost}). Less post-burn-in iterations will actually be used if we do not need that much posterior draws. We do not use warm-starts; each refresh starts the MCMC from the prior. 
    \item \textbf{Exploration}: The first 5 rounds for each arm are conducted using round-robin exploration ($\tau=5$, for a total of $5K$ rounds) before switching to TS.
\end{itemize}
 
At each refresh, we rerun multi-chain BART MCMC from scratch and collect only post-burn-in draws (burn-in iterations are not saved).
All post-burn-in draws from the 4 chains are pooled into a single draw set $\mathcal S_t=\{(\ParamVec^{(j)},(\sigma^2)^{(j)})\}_{j=1}^{N_t}$, which is used until the next refresh.
For standard BFTS, we randomly shuffle the indices $\{1,\dots,N_t\}$ to form a queue and consume indices sequentially; once exhausted, we reshuffle and repeat. 

\subsection{DrinkLess Case Study}\label{app:drinkless}


\begin{figure}[tb]
  \centering
  \begin{subfigure}[t]{0.48\linewidth}
    \centering
    \includegraphics[width=\linewidth]{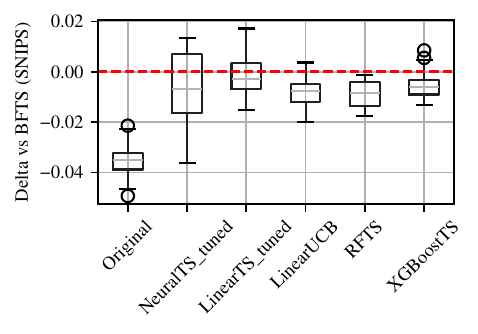}
    \caption{SNIPS.}
    \label{fig:app:drinkless_ope_snips_boxplot}
  \end{subfigure}
  \hfill
  \begin{subfigure}[t]{0.48\linewidth}
    \centering
    \includegraphics[width=\linewidth]{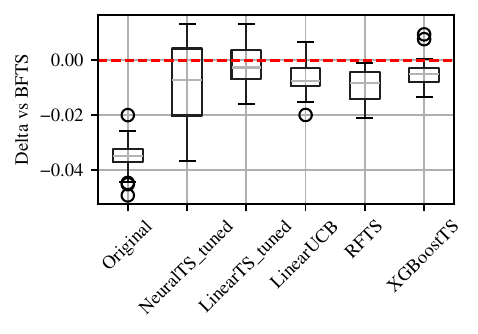}
    \caption{DR.}
    \label{fig:app:drinkless_ope_boxplot}
  \end{subfigure}
  \vspace{0.5em}

  \begin{subfigure}[t]{0.48\linewidth}
    \centering
    \includegraphics[width=\linewidth]{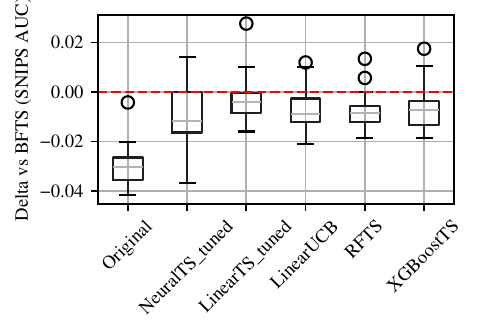}
    \caption{SNIPS (AUC mean).}
    \label{fig:app:drinkless_ope_snips_aucmean_boxplot}
  \end{subfigure}
  \hfill
  \begin{subfigure}[t]{0.48\linewidth}
    \centering
    \includegraphics[width=\linewidth]{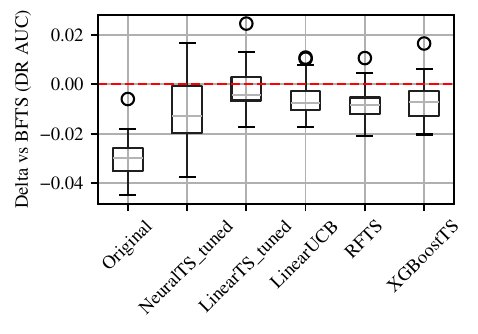}
    \caption{DR (AUC mean).}
    \label{fig:app:drinkless_ope_dr_aucmean_boxplot}
  \end{subfigure}

  \caption{DrinkLess: bootstrap distribution of final policy value gaps relative to BFTS. Here AUC is the area under the policy-value curve (equivalently, the sum of policy-value gains along the horizon), and \emph{AUC mean} is the average policy value gain.}
  \label{fig:app:drinkless_ope_gap_boxplots}
\end{figure}

We also include diagnostics to assess OPE reliability in the DrinkLess case study. We use a user-level cluster bootstrap with $B=30$ replicates and evaluate online-learning agents offline via sequential simulation with replay updates.

\begin{figure}[tb]
  \centering
  \begin{subfigure}[t]{0.48\linewidth}
    \centering
    \includegraphics[width=\linewidth]{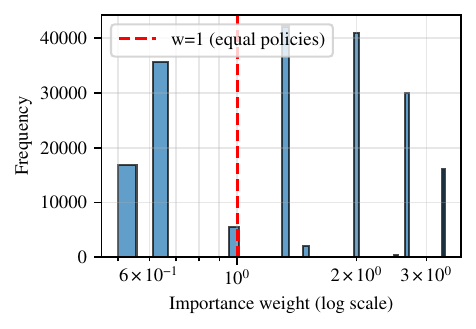}
    \caption{Weights.}
    \label{fig:app:drinkless_weights_hist}
  \end{subfigure}
  \hfill
  \begin{subfigure}[t]{0.48\linewidth}
    \centering
    \includegraphics[width=\linewidth]{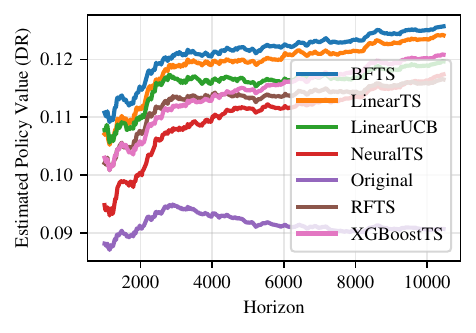}
    \caption{DR value (over horizon).}
    \label{fig:app:drinkless_ope_convergence_dr}
  \end{subfigure}
  \caption{DrinkLess: OPE diagnostics. Left: importance-weight distribution for BFTS. Right: DR policy-value estimate over the horizon.}
  \label{fig:app:drinkless_weights}
\end{figure}

Heavy-tailed importance weights can lead to high-variance OPE estimates and unreliable policy comparisons.
Here the weights remain moderate (shown on a log scale), providing evidence against severe variance blow-up due to positivity violations.
Under the static behavior propensities $(0.4,0.3,0.3)$, weights are bounded by $1/\min_a \pi_0(a) = 3.\overline{3}$; empirically, for BFTS we have $\bar w_{\max}=3.33$, $\bar w_{0.95}=2.98$, and $\bar w_{0.99}=3.33$ over $n= 10{,}470$ samples, with mean effective sample size $\mathrm{ESS}\approx 4{,}944$ (median $4{,}972$, minimum $4{,}246$) across bootstrap replicates. Under offline replay, BFTS matches the logged action at a rate $0.306\pm 0.005$ (mean $\pm$ SD across bootstrap replicates). 

To compare methods to BFTS, we report the SNIPS value gap
\(\Delta=\widehat{V}_{\mathrm{SNIPS}}(\pi)-\widehat{V}_{\mathrm{SNIPS}}(\pi_{\mathrm{BFTS}})\),
where negative \(\Delta\) indicates worse performance than BFTS.
\Cref{fig:app:drinkless_ope_snips_boxplot} reports the SNIPS gap distributions, and \cref{fig:app:drinkless_ope_boxplot} reports the DR analogue; \cref{fig:app:drinkless_ope_snips_aucmean_boxplot,fig:app:drinkless_ope_dr_aucmean_boxplot} report the corresponding AUC-mean gap distributions.
For DR, on each bootstrap replicate we fit a 2-fold cross-fitted per-arm ridge outcome model \(\hat q(x,a)\), then estimate policy value as
\(\widehat V_{\mathrm{DR}}(\pi_e)=\frac{1}{n}\sum_{i=1}^n\Big[\sum_a \pi_e(a\mid x_i)\hat q(x_i,a)+w_i\bigl(r_i-\hat q(x_i,a_i)\bigr)\Big]\),
with \(w_i=\pi_e(a_i\mid x_i)/\pi_b(a_i\mid x_i)\).  Uncertainty is quantified via user-level cluster bootstrapping ($B=30$ replicates). 

\subsection{Reproducibility}

All code, data, and experimental configurations are available as supplementary materials (anonymous link). The implementation builds on a custom Python package that implements BART with bandit-specific extensions.

\FloatBarrier

\section{Concrete BART Specification}\label{sec:bart_prior}

\subsection{BART model\label{sec:bart_model}}
We approximate the unknown regression function by a \emph{sum of regression trees}. For observations $\{(x_i,Y_i)\}_{i=1}^n$, let $\{(x_i,Y_i)\}_{i=1}^n\subset[0,1]^p\times\mathbb R$ be i.i.d.\ observations from
\[
Y_i = g(x_i) + \eps_i,\qquad \eps_i\stackrel{\text{iid}}{\sim}\mathcal N(0,\sigma^2),
\]
where the ensemble function $g$ is a sum of $m$ tree contributions,
\[
g(x) = \sum_{j=1}^m h\bigl(x; T_j, M_j\bigr),
\]
and for tree $j$ with partition $T_j=\{\Omega_{j\ell}\}_{\ell=1}^{L_j}$ and leaf parameters $M_j=(M_{j1},\dots,M_{jL_j})$,
\[
h\bigl(x;T_j,M_j\bigr)
= \sum_{\ell=1}^{L_j} M_{j\ell}\,\mathbb I\{x\in \Omega_{j\ell}\}.
\]

The priors and the MCMC algorithm used for posterior inference are detailed below. For the tree-splitting prior, we consider two alternatives: the original BART prior of \citet{chipman_bart_2010} and the depth-geometric prior of \citet{rockova_theory_2018}. In the empirical results of the main paper, we use the latter as a theory-guided default, but allow other choices in the sensitivity analysis; for proofs, we analyze the depth-geometric prior.

\subsection{Priors}

\paragraph*{Error variance prior.}
For our empirical implementation in the main paper, we follow the standard BART specification and place an inverse-$\chi^2$ prior on $\sigma^2$:
\[
\sigma^2 \sim \nu\lambda_\sigma/\chi^2_\nu, \quad \text{equivalently, } \sigma^2 \sim \mathrm{Inv\mbox{-}Gamma}(\nu/2,\, \nu\lambda_\sigma/2),
\]
where $\chi^2_\nu$ denotes a chi-square distribution with $\nu$ degrees of freedom and $\lambda_\sigma$ is chosen so that $\Pr(\sigma<\hat{\sigma})=q$ for a reference scale $\hat{\sigma}$ (e.g., residual s.d. from a linear model). Default from \citet{chipman_bart_2010}: $(\nu,q)=(3,0.90)$.  

\paragraph*{Hierarchical prior.}
For a \emph{fixed} ensemble size $m$,  
BART assumes independent priors across trees
\[ 
\pi_{\bart}=\prod_{j=1}^{m}\pi(T_j)\,\pi(M_j\mid T_j).\tag{P1}
\]
Here, $\pi(T_j)$ is the prior on tree partitions, and $\pi(M_j\mid T_j)$ is the Gaussian leaf prior with variance $\sigma_\mu^2$,
where $T_j$ is the partition of the $j$-th tree and
$M_j=(M_{j1},\dots,M_{jL_j})$ is the vector of step heights.

\paragraph*{Prior on partitions $\pi(T_j)$.}
Each tree is grown through a Galton--Watson branching process.

\begin{enumerate}
\item \textbf{Root.} Start with the single cell $[0,1]^p$.
\item \textbf{Splitting rule.}  
      We do not impose a minimum leaf size (i.e., $n_{\min}=1$). If a node can be split (i.e., after a split, the node's children are both non-empty), it is split with probability given by the following two rules.

      \emph{Theory-guided rule (for proofs and default in empirics; \citet{rockova_theory_2018}).} A node $\Omega$ at depth $d(\Omega)$ is split with probability
      \[
      p_{\text{split}}(\Omega)=\alphaqd^{\,d(\Omega)},\qquad  0<\alphaqd<\tfrac12,\tag{P2-a}
      \]
      which yields \textbf{q}uicker \textbf{d}ecay than that of the \textbf{o}riginal BART prior. We set \(\alphaqd=0.45\) by default when this prior is used.
      
      \emph{Original BART \citep{chipman_bart_2010}.} We consider
      \[
      p_{\text{split}}(\Omega)=\alpha_{\text{o}}\,\bigl(1+d(\Omega)\bigr)^{-\beta_{\text{o}}},\qquad \alpha_{\text{o}}\in(0,1),\ \beta_{\text{o}}\ge 0,\tag{P2-b}
      \]
      with defaults \(\alpha_{\text{o}}=0.95\) and \(\beta_{\text{o}}=2\).  
\item \textbf{Axis and cut-point.}  
      Conditional on a split, the splitting coordinate $j$ is chosen from a probability vector $\boldsymbol{s}=(s_1,\dots,s_p)$. In our implementations, we consider two specifications:
      \begin{itemize}
          \item \emph{Uniform splits (Chipman--George--McCulloch \citep{chipman_bart_2010}).} \\ Set $s_j=1/p$ for all $j$, so that each coordinate is equally likely a priori.
          \item \emph{Dirichlet-sparse splits} \citep{linero_bayesian_2018,jeong_art_2023}. \\ Draw $\boldsymbol{s}\sim\mathrm{Dir}(\zeta/p^\xi,\dots,\zeta/p^\xi)$ with $\zeta>0$ and $\xi\ge 1$, which encourages sparsity by concentrating mass on a small subset of coordinates. This prior enjoys strong guarantees in high dimensions and underpins our FGTS analysis in \cref{sec:fgts-app}. In our experiments, we use a Dirichlet split prior with $(\zeta,\xi)=(1,1)$ as a practical default, while the theoretical results of \citet{jeong_art_2023} assume $\xi>1$.
      \end{itemize}
      Given the splitting coordinate $j$, the cut-point $c$ is drawn uniformly from a per-feature empirical-quantile grid constructed from the observed covariate values $\{x_{ij}\}_{i=1}^n$. Concretely, we first choose $\Nmax$ evenly spaced quantile levels $q\in[0,1)$, compute the corresponding empirical quantiles of the column $(x_{1j},\dots,x_{nj})$, and then remove duplicates. As a result, the number of candidate thresholds along coordinate $j$ is at most $\Nmax$. This empirical grid is the computational choice used in BART implementation for numerical stability (e.g., \textit{stochtree}).  
\end{enumerate}

\paragraph*{Prior on step heights $\pi(M_j\mid T_j)$.}
Let $h(x;T_j,M_j)$ denote the $j$-th tree's piecewise-constant contribution. Following \citet{chipman_bart_2010}, for each leaf $\ell$ of $T_j$ we place
\[
M_{j\ell}\stackrel{\text{iid}}{\sim}\mathcal N\!\bigl(0,\,\sigma_\mu^2\bigr),\qquad \sigma_\mu = \frac{0.5}{\kappa\,\sqrt{m}},\tag{P3}
\]
where $\kappa>0$ controls the overall shrinkage (default $\kappa=2$) under the convention that $Y\in[-0.5,0.5]$. This ensures each tree is a weak learner and the sum-of-trees remains suitably regular.

\subsection{Posterior Computation}\label{sec:bart_mcmc}
Because the joint posterior over tree structures and leaf parameters is analytically intractable, BART uses a backfitting Gibbs sampler with Metropolis--Hastings (MH) updates for tree structures. One iteration consists of:

1.  \textbf{Update trees (cycle $j=1,\dots,m$).} Form partial residuals
    \[
    R_j = Y - \sum_{k\ne j} h(X;T_k,M_k).
    \]
    Propose a local edit to $T_j$ and accept/reject via MH using the (marginal) Gaussian likelihood of $R_j$. 
    
    Standard proposals include:  
    \emph{GROW} (split a terminal node), \emph{PRUNE} (collapse a pair of terminal siblings), \emph{CHANGE} (modify a splitting rule), and \emph{SWAP} (swap splitting rules between parent/child). After a structure move is accepted, draw leaf parameters $M_j$ from their conjugate Normal full conditional.

2.  \textbf{Update error variance $\sigma^2$.} With the current sum-of-trees predictions \(\hat g(X)=\sum_{j=1}^m h(X;T_j,M_j)\), sample $\sigma^2$ from its inverse-$\chi^2$ full conditional based on the residuals.

Repeat for $n_{\text{post}}$ iterations; discard the first $n_{\text{burn}}$ iterations as burn-in. 

\paragraph*{MCMC details (structure MH + conjugate leaf updates).}
For numerical stability, we follow \citet{chipman_bart_2010} and apply the standard BART response rescaling internally within each arm-wise refresh before running the Gaussian BART sampler.
Below we summarize the resulting MCMC updates for a single arm; see \citet{chipman_bayesian_1998,chipman_bart_2010} for background on Bayesian tree MCMC.

\emph{Backfitting residuals.}
When updating tree $j$, treat the contributions of all other trees as fixed and define the partial residual vector
\(
R_j = Y - \sum_{k\ne j} h(X;T_k,M_k)
\),
so that conditionally $R_j$ is modeled by a single regression tree with Gaussian noise.

\emph{Tree structure MH step.}
Let $T_j$ denote the current tree structure for tree $j$ and let $T_j'$ be a proposal generated by a local move (GROW/PRUNE/CHANGE/SWAP).
We accept $T_j'$ with probability
\[
\alpha(T_j\to T_j')
=\min\left\{1,\;
\frac{\pi(T_j')}{\pi(T_j)}\,
\frac{p(R_j\mid T_j',\sigma^2,\sigma_\mu^2)}{p(R_j\mid T_j,\sigma^2,\sigma_\mu^2)}\,
\frac{q(T_j\mid T_j')}{q(T_j'\mid T_j)}
\right\},
\]
where $\pi(\cdot)$ is the tree prior, $q(\cdot\mid\cdot)$ is the proposal distribution over tree structures, and $p(R_j\mid T_j,\sigma^2,\sigma_\mu^2)$ denotes the Gaussian \emph{marginal} likelihood obtained by integrating out leaf values under the Normal prior $M_{j\ell}\sim N(0,\sigma_\mu^2)$.
The four local moves are: GROW (split a terminal node), PRUNE (collapse a pair of terminal siblings), CHANGE (modify a split rule at an internal node), and SWAP (swap split rules between a parent and child).
Proposals that create empty child nodes are rejected.

\emph{Dirichlet-sparse split proportions (Gibbs update).}
Under the Dirichlet-sparse splitting prior, the split-probability vector
$\boldsymbol{s}=(s_1,\dots,s_p)$ is assigned the prior
$\boldsymbol{s}\sim\mathrm{Dir}(\zeta/p^\xi,\dots,\zeta/p^\xi)$.
Conditioning on the current forest structure $\{T_j\}_{j=1}^m$, let $c_u$ be the total number of internal nodes (across $j=1,\dots,m$) that split on coordinate $u\in\{1,\dots,p\}$.
Then the full conditional is conjugate:
\[
\boldsymbol{s}\mid \{T_j\}_{j=1}^m \sim \mathrm{Dir}\!\left(\zeta/p^\xi + c_1,\dots,\zeta/p^\xi + c_p\right).
\]
We treat $\boldsymbol{s}$ as part of the MCMC state. Thus in the tree-structure MH step we evaluate the tree prior ratio conditional on the current $\boldsymbol{s}$ (i.e., $\pi(T_j'\mid \boldsymbol{s})/\pi(T_j\mid \boldsymbol{s})$). 

\emph{Leaf-value conjugate update.}
Conditional on an accepted structure $T_j$, leaf values are updated from their conjugate Normal full conditional.
Let $\Omega_{j\ell}$ be a leaf of $T_j$ and define $I_{j\ell}\defeq\{i: x_i\in\Omega_{j\ell}\}$, $n_{j\ell}\defeq |I_{j\ell}|$, and $S_{j\ell}\defeq\sum_{i\in I_{j\ell}} (R_j)_i$.
Then
\[
M_{j\ell}\mid (T_j,R_j,\sigma^2)
\sim
N\!\left(
\frac{S_{j\ell}}{n_{j\ell}+\sigma^2/\sigma_\mu^2},
\;
\frac{\sigma^2}{n_{j\ell}+\sigma^2/\sigma_\mu^2}
\right),
\]
independently over leaves $\ell$ given $(T_j,R_j,\sigma^2)$.

\emph{Error variance update.}
With the current sum-of-trees fit $\hat g(X)=\sum_{j=1}^m h(X;T_j,M_j)$ and residual sum of squares $\mathrm{SSE}=\sum_{i=1}^n (Y_i-\hat g(x_i))^2$, the inverse-$\chi^2$ prior
$\sigma^2\sim \nu\lambda_\sigma/\chi^2_\nu$ implies the conjugate posterior
\[
\sigma^2\mid (Y,X,(T_j,M_j)_{j=1}^m)
\sim
\mathrm{Inv\mbox{-}Gamma}\!\left(\frac{\nu+n}{2},\; \frac{\nu\lambda_\sigma+\mathrm{SSE}}{2}\right).
\]

\subsection{Implementation Details and Defaults}
\paragraph*{Preprocessing.} As recommended by \citet{chipman_bart_2010}, we standardize the response so that $Y\in[-0.5,0.5]$.
\paragraph*{Defaults.} Unless otherwise stated, we use the following study-wide defaults:
\begin{align*}
 & \kappa=2,\quad \alphaqd=0.45\text{ for (P2-a)},\quad (\alpha_{\text{o}},\beta_{\text{o}})=(0.95,2)\text{ for (P2-b)},
 \\
 & \Nmax=100,\quad (\nu,q)=(3,0.90).
\end{align*} 
Proposal types are chosen with probabilities: GROW: 0.25, PRUNE: 0.25, CHANGE: 0.4, and SWAP: 0.1.

These defaults are chosen to be either consistent with the common practice, or theory-guided, or deemed not important during the sensitivity study. We do not tune these hyperparameters for specific datasets.
 
\section{Notation}\label{app:notation}

For quick reference, we summarize the main symbols used throughout the paper.

\begin{center}
\fbox{%
  \begin{minipage}{0.96\textwidth}
  \small
  \vspace{2pt}
  \begin{tabular}{@{}p{0.20\textwidth}p{0.76\textwidth}@{}}
    $\ContextSpace$ & Context space, a subset of $\R^p$.\\[4pt]
    $X_t$ & Context observed at round $t\in\{1,\dots,T\}$.\\[4pt]
    $\ActionSpace$ & Finite action set (arms), with cardinality $\abs{\ActionSpace}=K$.\\[4pt]
    $A_t$ & Action selected at round $t$.\\[4pt]
    $R_t$ & Reward at round $t$, modeled as $R_t=f_0(X_t,A_t)+\varepsilon_t$.\\[4pt]
    $f_0$ & True mean reward function $f_0:\ContextSpace\times\ActionSpace\to\R$.\\[4pt]
    $\varepsilon_t$ & Mean-zero noise.\\[4pt]
    $\sigma_0^2$ & Data-generating noise variance proxy.\\[4pt]
    $\ParamVec$ & Model parameters $((\BARTVec_a))_{a\in\ActionSpace}$, with $\BARTVec_a\in\BARTSpace$.\\[4pt]
    $\Data_t$ & History up to round $t$: $\Data_t=\{(X_s,A_s,R_s)\}_{s=1}^t$.\\[4pt]
    $A^*(f,x)$ & An optimal arm for function $f$: $A^*(f,x)\in\arg\max_{a\in\ActionSpace} f(x,a)$.\\[4pt]
    $f^*(f,x)$ & Optimal value for function $f$: $f^*(f,x)=\max_{a\in\ActionSpace} f(x,a)$.\\[4pt]
    $A_t^*,\; f_0^*(x)$ & Shorthand: $A_t^*=A^*(f_0,X_t)$ and $f_0^*(x)=f^*(f_0,x)$.\\[4pt]
    $f_\ParamVec$ & Model for the mean reward, parameterized by $\ParamVec$.\\[4pt]
    $g_{\BARTVec_a}$ & Arm-wise model: $f_\ParamVec(x,a)=g_{\BARTVec_a}(x)$ with $(\BARTVec_a)_{a\in\ActionSpace}$ collecting the BART parameters across arms.\\[4pt]
    $\pi_\bart$ & BART prior (see \cref{sec:bart_prior}), defined on \(\BARTSpace\).\\[4pt]
    $\Pi,\ \Pi(\cdot\mid\Data_{t-1})$ & Prior on $\ParamSpace$ and the posterior/pseudo-posterior given $\Data_{t-1}$. 
    For $K$ arms (Bayesian regret analysis), we use the product prior $\Pi=\pi_{\bart}^{\otimes K}$.\\[4pt]
    $L_{\mathrm{TS}}$ & Gaussian negative log-likelihood contribution used by TS (up to additive constants): $\tfrac{(f_\ParamVec(x,a)-r)^2}{2\sigma^2}$. When $\sigma^2$ is inferred, the additional $\tfrac12\log\sigma^2$ term per observation is handled separately in the posterior via a factor of $(\sigma^2)^{-t/2}$.\\[4pt]
    $f_\ParamVec^*(x)$ & Model-implied optimal value: $\max_{a'\in\ActionSpace} f_\ParamVec(x,a')$.\\[4pt]
    $\mathrm{Regret}_t$ & Instantaneous regret at round $t$: $f_0^*(x_t)-f_0(x_t,A_t)$.\\[4pt]
    $\E[\cdot],\ \E_\TS[\cdot]$ & Expectations: $\E[\cdot]$ averages over all randomness (e.g., contexts, prior, algorithm, noise) unless conditioned; $\E_\TS[\cdot]$ is the conditional expectation given realized contexts $\{x_t\}_{t=1}^T$, taken over the TS algorithm's randomness, i.e. $\E_\TS[\cdot] = \E[\cdot \mid \{x_t\}_{t=1}^T]$.\\[4pt]
    $\I(\cdot)$ & Indicator function: $\I(E)=1$ if event $E$ holds, else $0$.\\
  \end{tabular}
  \vspace{2pt}
  \end{minipage}}
\end{center}

In \cref{pr:proof1} and \cref{sec:fgts-app}, we also use these symbols for the parameters in the BART prior.

\medskip
For simplicity, we write $\sigma_0^2$ (homoscedastic noise); heteroscedastic/arm-wise noise extensions can be handled analogously.

\medskip
Additional symbols referenced in the main paper's theoretical analysis and in \cref{pr:proof1,sec:fgts-app}:
\begin{itemize}
    \item Lifted information ratio $\Gamma_t$ and conditional mutual information $I_t(\ParamVec^*;R_t\mid A_t)$; see formal definitions in \cref{pr:proof1}.
    \item Tree-related symbols: $\mathcal T$ (tree structures), $M$ (leaf parameters), $L$ (number of leaves); see \cref{sec:bart_prior}.
    \item Split grid cap $\Nmax$ used in split selection; see \cref{sec:bart_prior}.
\end{itemize}

\section{Proof of the Bayesian regret bound for TS with arm-wise BART prior}\label{pr:proof1}

\subsection{Main result}

Under the arm-wise BART prior and the assumptions in the main paper, Thompson sampling satisfies
\[
\E[\reg_T]
\le \sqrt{2\sigma^2 K T\cdot \Bigl(K m C_{\mathrm{str}}\log(p\,\Nmax)+\tfrac12 K C_{\mathrm{leaf}} m \log\bigl(1+\tfrac{1}{4\kappa^2\sigma^2}\tfrac{T}{K}\bigr)\Bigr)}.
\]
In particular, when $\Nmax$ is fixed or grows at most polynomially in $T$ (and $m$ is treated as a constant), the logarithmic factors can be absorbed into $\log(pT)$, yielding
\[
\E[\reg_T]=\bigO\bigl(K\sigma\sqrt{T\log(pT)}\bigr).
\]

\subsection{Proof of the main regret bound}
\begin{proof}
\noindent\textbf{Step 1 (Lifted information ratio and regret decomposition).}
\medskip\noindent\textbf{Lifted information ratio.}
Let $E_t[\cdot] = E[\cdot | \Data_{t-1}, X_t]$ denote expectation conditioned on the history up to round $t-1$ and the realized context $X_t$. Let $\Pi_t(\cdot) = \Pi(\cdot | \Data_{t-1})$ be the posterior distribution before observing $R_t$.
The expected instantaneous regret at round $t$ is $E[r_t] = E[f_0(X_t, A_t^*) - f_0(X_t, A_t)]$. Using the tower property:
$$ E[r_t] = E [ E_t[f_0(X_t, A_t^*) - f_0(X_t, A_t)] ] $$
The inner conditional expectation $E_t[r_t] = E_t[f_0(X_t, A_t^*) - f_0(X_t, A_t)]$ is the expected regret given the past and the current context.

Following \citet{neu2022lifting}, we define the \textbf{lifted information ratio} for round $t$, given $\Data_{t-1}$ and $X_t$, as:
$$ \Gamma_t(X_t, \Data_{t-1}) = \frac{(E_t[f_0(X_t, A_t^*) - f_0(X_t, A_t)])^2}{\bar I_t(\ParamVec^*;R_t)}. $$
Here, $\bar I_t(\ParamVec^*;R_t)$ is the \emph{action-averaged} conditional information gain
\[
\bar I_t(\ParamVec^*;R_t)
\defeq \E_t\!\left[\DKL\!\left(P_{R_t \mid \ParamVec^*, \Data_{t-1}, X_t, A_t} \,\middle\|\, P_{R_t \mid \Data_{t-1}, X_t, A_t}\right)\right],
\]
where $\DKL$ stands for the K-L divergence $\DKL(P \| Q)=\displaystyle \int \log \left(\frac{\dd P}{\dd Q}\right) \dd P$.
Equivalently, letting $p_{t,a}=\Pr(A_t=a\mid \Data_{t-1},X_t)$,
\[
\bar I_t(\ParamVec^*;R_t)=\sum_{a=1}^K p_{t,a}\, I(\ParamVec^*;R_t\mid \Data_{t-1},X_t,A_t=a).
\]

\medskip\noindent\textbf{Regret decomposition.}
Following the proof of Theorem 1 in \citet{neu2022lifting}, note that
$$ E_t[r_t] = \sqrt{\Gamma_t(X_t, \Data_{t-1}) \cdot \bar I_t(\ParamVec^*;R_t)}. $$
Applying Cauchy-Schwarz to the expectation:
$$ E[r_t] = E[E_t[r_t]] = E[\sqrt{\Gamma_t \cdot \bar I_t(\ParamVec^*;R_t)}] \le \sqrt{E[\Gamma_t] \cdot E[\bar I_t(\ParamVec^*;R_t)]}.$$
Summing over $t$:
$$ E[\reg_T] = \sum_{t=1}^T E[r_t] \le \sum_{t=1}^T \sqrt{E[\Gamma_t] E[\bar I_t(\ParamVec^*;R_t)]}. $$
Let $\bar{\Gamma}_T = \frac{1}{T}\sum_{t=1}^T E[\Gamma_t]$. By Cauchy-Schwarz again on the sum:
$$ E[\reg_T] \le \sqrt{\sum_{t=1}^T E[\Gamma_t]\sum_{t=1}^T E[\bar I_t(\ParamVec^*;R_t)]}= \sqrt{T\bar{\Gamma}_T \left( \sum_{t=1}^T E[\bar I_t(\ParamVec^*;R_t)]. \right)}
$$
\citet{gouverneur2023thompson} uses the standard upper bound
\[
\E\!\left[\sum_{t=1}^T \bar I_t(\ParamVec^*;R_t)\right]\le I(\ParamVec^*;\Data_T),
\]
which follows from the mutual information chain rule and nonnegativity.

Combining the regret decomposition above with \(\E[\sum_{t=1}^T \bar I_t(\ParamVec^*;R_t)]\le I(\ParamVec^*;\Data_T)\), we obtain
\[
\E[\reg_T]\le \sqrt{T\bar{\Gamma}_T\cdot I(\ParamVec^*;\Data_T)}
\le \sqrt{T\Gamma\cdot I(\ParamVec^*;\Data_T)}.
\]
In Step~2 we bound \(I(\ParamVec^*;\Data_T)\) under the arm-wise BART prior.

\medskip\noindent\textbf{Step 2 (Bounding the mutual information).}

\begin{lemma}[Information gain under arm-wise BART prior]\label{lem:bart-infogain-app}
    Suppose the prior $\Pi$ on $\ParamSpace=(\BARTSpace)^K$ factorizes across arms as
    \[
      \Pi \;=\; \bigotimes_{a\in\ActionSpace}\pi_{\bart},
    \]
    where each arm-wise prior $\pi_{\bart}$ is a BART prior with $m$ trees, depth-decay splitting parameter $\alphaqd \in (0, 1/2)$, split-grid cap $\Nmax$, and Gaussian leaf prior $N(0,\sigma_\mu^2)$ with $\sigma_\mu=0.5/(\kappa\sqrt{m})$ for some $\kappa>0$.
    Under the correctly specified Bayesian design, we draw $\BARTVec_a^*\sim\pi_{\bart}$ independently across arms and define $f_0(x,a)=g_{\BARTVec_a^*}(x)$. Let rewards satisfy
    \[
    R_t = f_0(X_t,A_t)+\varepsilon_t,\qquad t=1,\dots,T,
    \]
    with $\{\varepsilon_t\}$ conditionally Gaussian given $(\ParamVec^*,\Data_{t-1},X_t,A_t)$, namely $\varepsilon_t\mid(\ParamVec^*,\Data_{t-1},X_t,A_t)\sim N(0,\sigma^2)$ for a fixed known constant $\sigma^2>0$. Then there exist constants \(C_{\mathrm{str}},C_{\mathrm{leaf}}>0\), depending only on $(\alphaqd,\kappa,\sigma^2)$, such that
    \[
    I(\ParamVec^*;\Data_T)
    \;\le\;
    K m C_{\mathrm{str}}\log(p\,\Nmax)
    \;+\;
    \tfrac12 K C_{\mathrm{leaf}} m\,\log\bigl(1+\tfrac{1}{4\kappa^2\sigma^2}\tfrac{T}{K}\bigr)
    \]
    \end{lemma}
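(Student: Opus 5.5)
I will follow the four-step outline already sketched after Lemma~\ref{lem:bart-infogain}. The first step is arm-wise factorization. The TS decision $A_t$ depends on $\ParamVec^*$ only through $(\Data_{t-1},X_t)$ and independent internal randomness, and contexts are exogenous by (A2). Hence the likelihood of $\Data_T$ given $\ParamVec^*$ equals $\prod_t p(R_t\mid X_t,A_t,\BARTVec^*_{A_t})$ times factors free of $\ParamVec^*$. Combined with the product prior, the posterior is a product over arms.

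From this I will write
\[
I(\ParamVec^*;\Data_T)=\sum_t I(\ParamVec^*;R_t\mid \Data_{t-1},X_t,A_t)
\]
by the chain rule; the $X_t$ and $A_t$ terms vanish. Each summand depends only on $\BARTVec^*_{A_t}$. A cleaner route, which I prefer, is to condition on the full design $D=(X_{1:T},A_{1:T})$ and the internal randomness. Then the rewards of arm $a$ are a Gaussian regression of $g_{\BARTVec_a^*}$ at the design points $\{X_t:A_t=a\}$. Independence of arms gives $I(\ParamVec^*;\Data_T)\le \sum_a \sup_{\text{designs}} I(\BARTVec_a^*; Y_a\mid \text{design})$, with the adaptivity handled by the chain-rule/conditioning argument.

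For a single arm with $n_a$ observations, I split by the chain rule:
\[
I(\BARTVec_a;Y)=I(\mathcal T^{(a)};Y)+I(M^{(a)};Y\mid\mathcal T^{(a)}).
\]
\emph{Structure term.} I bound it by $H(\mathcal T^{(a)})=\sum_j H(\mathcal T_j)$. A single Galton--Watson tree with split probability $\alphaqd^{d}$ has expected number of internal nodes $\sum_d (2\alpha_{\mathrm{qd}})^d$-type bounded quantities; $\alphaqd<1/2$ makes the process subcritical with fast decay. Each internal node costs at most $\log(p\Nmax)+\log 2$ nats to describe (axis, cut, split/stop bit). So $H(\mathcal T_j)\le \E[\#\text{nodes}]\,(\log(p\Nmax)+c)\le C_{\mathrm{str}}\log(p\Nmax)$, using a prefix-code/entropy-of-branching-process argument. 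The empirical quantile grid must be treated as fixed by (A3).

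\emph{Leaf term.} Given the structures, $Y=\Phi M+\varepsilon$ with $\Phi$ the $n_a\times\sum_j L_j$ leaf-indicator matrix and $M\sim N(0,\sigma_\mu^2 I)$. Hence
\[
I=\tfrac12\log\det\bigl(I+\sigma_\mu^2\sigma^{-2}\Phi^\top\Phi\bigr).
\]
By concavity of $\log\det$ (AM--GM on eigenvalues), this is at most $\tfrac12 L\log(1+\sigma_\mu^2 \tr(\Phi^\top\Phi)/(L\sigma^2))$, where $L=\sum_j L_j$. Each row of $\Phi$ has exactly $m$ ones, so $\tr(\Phi^\top\Phi)=mn_a$. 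Also $\sigma_\mu^2 m=1/(4\kappa^2)$. Since $L\ge m$, the bound is at most $\tfrac12 L\log(1+n_a/(4\kappa^2\sigma^2 m))\le\tfrac12 L\log(1+n_a/(4\kappa^2\sigma^2))$. I then take expectation over $L$; $\E[L_j]$ is a constant depending on $\alphaqd$, which gives $C_{\mathrm{leaf}}$. Summing over arms and applying Jensen (concavity in $n_a$ with $\sum_a n_a=T$) yields the $\log(1+T/(4K\kappa^2\sigma^2))$ form.

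The main obstacle is the adaptivity of the design in the leaf and structure terms. Both $n_a$ and the design points are random and depend on past rewards, so the fixed-design $\log\det$ formula cannot be applied naively. I plan to handle this by the per-round chain rule. Conditionally on $\mathcal T$, each increment $I(M;R_t\mid\Data_{t-1},X_t,A_t,\mathcal T)$ is a Gaussian posterior variance-reduction term. These telescope into $\tfrac12\log\det$ of the realized design Gram matrix, exactly as in elliptical-potential arguments; this is valid for any adaptively chosen design. The bound is then applied pathwise and the expectation taken, using Jensen over the random $n_a$. A secondary point is that conditioning on $\mathcal T$ inside adaptive data requires $I(\mathcal T;\Data_T)\le H(\mathcal T)$, which holds regardless of adaptivity.
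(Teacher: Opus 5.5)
Your architecture is the paper's: arm-wise posterior factorization, the chain rule $I(\BARTVec_a^*;\Data_T)=I(\mathcal T^{(a)};\Data_T)+I(M^{(a)};\Data_T\mid\mathcal T^{(a)})$, an entropy bound for the structure, and a Gaussian $\log\det$ bound for the leaves (AM--GM, $\tr(\Phi^\top\Phi)=mn_a$, $\sigma_\mu^2m=1/(4\kappa^2)$, Jensen across arms). Two sub-steps differ.

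\emph{Structure term.} The paper splits $H(\mathcal T_j)=H(L_j)+H(\mathcal T_j\mid L_j)$. It controls $H(L_j)$ with tail indicators and a geometric tail bound imported from earlier BART theory, and counts trees with $l$ leaves by a Catalan number times $(p\Nmax)^{l-1}$. Your coding argument is shorter and self-contained. Apply Gibbs' inequality against $q(\tau)=2^{-|\tau|}(p\Nmax)^{-|\tau|_{\mathrm{int}}}$, where $|\tau|$ and $|\tau|_{\mathrm{int}}$ count nodes and internal nodes. This has mass at most one because a critical binary Galton--Watson tree is a.s.\ finite. It gives $H(\mathcal T_j)\le \E|\mathcal T_j|\,(\log(p\Nmax)+\log 2)$ with $\E|\mathcal T_j|\le\sum_{d\ge0}2^d\alphaqd^{d(d-1)/2}$. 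Both versions need, say, $p\Nmax\ge 2$ to absorb additive constants into $C_{\mathrm{str}}\log(p\Nmax)$.

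\emph{Leaf term.} The paper conditions on the realized arm-$a$ design and evaluates the inner KL via conjugacy and total variance. This treats $M^{(a)}$ as prior-distributed given $(\tau,D_a)$, which is only literally true for a non-adaptive design; the identity holds only after averaging over everything. Your per-round chain rule avoids this. The $X_t$ and $A_t$ increments vanish by exogeneity and $A_t\perp\ParamVec^*\mid(\Data_{t-1},X_t)$. Rounds with $A_t\neq a$ vanish by posterior independence across arms. The rest telescope by the matrix determinant lemma. So it actually proves $I(M^{(a)};\Data_T\mid\mathcal T^{(a)})=\E[\tfrac12\log\det(I+\sigma_\mu^2\sigma^{-2}\Phi^\top\Phi)]$ under adaptivity; here yours is the more rigorous route.

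Two points to tighten. First, discard the inequality $I(\ParamVec^*;\Data_T)\le\sum_a\sup_{\text{designs}}I(\BARTVec_a^*;Y_a\mid\text{design})$ that you call the cleaner route. Conditioning on an adaptively chosen design changes the prior. Also, for threshold-type parameters, adaptive queries can extract more information than any fixed design (noisy binary search). Your final route, with $I(\mathcal T^{(a)};\Data_T)\le H(\mathcal T^{(a)})$ plus telescoping, never needs it.

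Second, when you take expectations over $L$ and then apply Jensen in $n_a$, note that $L^{(a)}$ and $n_a$ are dependent. The allocation depends on rewards, hence on $\mathcal T^{(a)}$, so $\E[L^{(a)}\log(1+cn_a)]$ does not factor. The paper makes the same move when it pulls out $\E[L]$ with $T_a$ held fixed. A pathwise repair uses $c=1/(4\kappa^2\sigma^2)$ and $S=\sum_a L^{(a)}$. Apply Gibbs' inequality to the probability vector proportional to $1+cn_a$ (note $\sum_a(1+cn_a)=K+cT$):
\[
\sum_a L^{(a)}\log(1+c\,n_a)\;\le\; S\log\Bigl(1+\frac{cT}{K}\Bigr)+\sum_a L^{(a)}\log\frac{K L^{(a)}}{S}.
\]
By $\log x\le x-1$ and $S\ge Km$, the last sum is at most $m^{-1}\sum_a (L^{(a)})^2$. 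Its expectation is at most $Km\,\E[L_1^2]$, a constant depending only on $\alphaqd$ times $Km$. This is absorbed into the structure term, so the stated form of the lemma survives.
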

\noindent\emph{Proof.} See \cref{prf:bart-infogain}.

\medskip\noindent\textbf{Step 3 (Bounding the lifted information ratio).}
Lemma 1 from \citet{gouverneur2023thompson} provides a bound on the lifted information ratio for sub-Gaussian rewards. In our setup, rewards are conditionally $\sigma^2$-sub-Gaussian (indeed Gaussian) given $\Data_{t-1},X_t,A_t$, with fixed known variance $\sigma^2$. This directly implies $\Gamma_t \le 2\sigma^2 K$ almost surely without further assumptions.
Taking expectation and averaging yields
$$ \bar{\Gamma}_T = \frac{1}{T}\sum_{t=1}^T E[\Gamma_t] \le 2\sigma^2 K. $$ 

\medskip\noindent\textbf{Step 4 (Combining bounds).}
Plugging the bounds from Step 2 and Step 3 into the inequality from Step 1, and using Lemma~\ref{lem:bart-infogain-app}, we obtain
\begin{align*}
\E[\reg_T]
&\le K\sqrt{T\cdot 2\sigma^2 \cdot \Bigl(m C_{\mathrm{str}}\log(p\,\Nmax)+\tfrac12 C_{\mathrm{leaf}} m \log\bigl(1+\tfrac{1}{4\kappa^2\sigma^2}\tfrac{T}{K}\bigr)\Bigr)}.
\end{align*}
When $\Nmax$ is fixed or grows at most polynomially in $T$, the logarithmic factors can be absorbed into $\log(pT)$, yielding
\[
\E[\reg_T]=\bigO\bigl(K\sigma\sqrt{T\log(pT)}\bigr),
\]
which completes the proof.
\end{proof}

\subsection{Proofs of auxiliary lemmas}\label{sec:proof1-aux}

\subsubsection{Proof of Lemma~\ref{lem:bart-infogain-app}}\label{prf:bart-infogain}
\begin{proof}
    Let \(\ParamVec^*=((\BARTVec_a^*))_{a\in\ActionSpace}\in(\BARTSpace)^K\) collect all arm-wise BART parameters. We assume the noise variance $\sigma^2$ is fixed and not part of the parameter vector.
    We bound \(I(\ParamVec^*;\Data_T)\).
    
\noindent\textbf{Arm-wise decomposition.}
Let $T_a=\sum_{s=1}^T \I(A_s=a)$ so that $\sum_{a=1}^K T_a=T$.
For each arm $a$, decompose its BART parameter as \(\BARTVec_a^*=(\mathcal{T}^{(a)},M^{(a)})\), where \(\mathcal{T}^{(a)}=\{\mathcal{T}^{(a)}_j\}_{j=1}^m\) are tree structures and \(M^{(a)}=(M^{(a)}_j)_{j=1}^m\) are leaf values.

Because the prior factorizes across arms and the likelihood is arm-wise, we can decompose the information gain. To do so under adaptive data collection (where $A_t$ depends on past rewards), we use the following posterior factorization lemma.

\begin{lemma}[Posterior factorization under adaptive sampling]\label{lem:post-factor}
Assume an arm-wise factorized prior \(\Pi(\ParamVec^*)=\prod_{a=1}^K \Pi_a(\BARTVec_a^*)\).
Assume rewards satisfy an arm-wise likelihood: conditional on \((X_t,A_t)\), \(R_t\) depends on \(\ParamVec^*\) only through \(\BARTVec_{A_t}^*\), and rewards are conditionally independent over time given \((X_t,A_t,\ParamVec^*)\), i.e. for all \(t\),
\[
p(R_t\mid \Data_{t-1},X_t,A_t,\ParamVec^*)=p(R_t\mid X_t,A_t,\BARTVec_{A_t}^*).
\]
Assume additionally that the algorithm chooses \(A_t\) using only \((\Data_{t-1},X_t)\) and internal randomness independent of \(\ParamVec^*\), so that
\[
p(A_t\mid \Data_{t-1},X_t,\ParamVec^*)=p(A_t\mid \Data_{t-1},X_t).
\]
For any arm \(a\), let \(\mathcal I_a(t-1)\defeq\{s<t:A_s=a\}\) denote the (random) pull index set up to time \(t-1\). Then the marginal posterior for \(\BARTVec_a^*\) satisfies the design-conditional factorization
\[
\Pi_a(\BARTVec_a^*\in\cdot\mid \Data_{t-1})
=
\Pi_a(\BARTVec_a^*\in\cdot\mid X_{1:t-1},A_{1:t-1},(R_s)_{s\in\mathcal I_a(t-1)})
=
\Pi_a(\BARTVec_a^*\in\cdot\mid \mathcal I_a(t-1),(X_s,R_s)_{s\in\mathcal I_a(t-1)}).
\]
\end{lemma}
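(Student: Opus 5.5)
The plan is to write the joint density of the history under the parameter, integrate out the other arms, and read off the arm-$a$ posterior by Bayes' rule. Concretely, I would first use the sequential decomposition of the law of $\Data_{t-1}$ given $\ParamVec^*$:
\[
p(\Data_{t-1}\mid \ParamVec^*)=\prod_{s=1}^{t-1} p(X_s\mid \Data_{s-1})\,p(A_s\mid \Data_{s-1},X_s)\,p(R_s\mid X_s,A_s,\BARTVec^*_{A_s}).
\]
The first factor does not involve $\ParamVec^*$ because contexts are exogenous (i.i.d. by (A2), so $p(X_s\mid\Data_{s-1},\ParamVec^*)=p(X_s)$). The second factor is free of $\ParamVec^*$ by the assumed decision-rule condition. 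The third factor follows from the arm-wise conditional-independence assumption. Hence, as a function of $\ParamVec^*$,
\[
p(\Data_{t-1}\mid\ParamVec^*)=c(\Data_{t-1})\prod_{b=1}^K L_b(\BARTVec_b^*),\qquad L_b(\BARTVec_b)\defeq\prod_{s\in\mathcal I_b(t-1)} p(R_s\mid X_s,b,\BARTVec_b).
\]
Here $c(\Data_{t-1})$ collects the context and action factors. Each $L_b$ depends on the data only through $\mathcal I_b(t-1)$ and $(X_s,R_s)_{s\in\mathcal I_b(t-1)}$.

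Next, multiply by the product prior $\prod_b\Pi_a$-type factors $\prod_b \Pi_b(\dd\BARTVec_b)$ and apply Bayes' rule. The joint posterior is proportional to $\prod_b L_b(\BARTVec_b)\Pi_b(\dd\BARTVec_b)$, so it is a product measure, and $c(\Data_{t-1})$ cancels in the normalization. Marginalizing over $\BARTVec_b$ for $b\neq a$ leaves
\[
\Pi_a(\BARTVec_a^*\in B\mid\Data_{t-1})=\frac{\int_B L_a\,\dd\Pi_a}{\int L_a\,\dd\Pi_a}.
\]
The right-hand side is a measurable function of $(\mathcal I_a(t-1),(X_s,R_s)_{s\in\mathcal I_a(t-1)})$ alone. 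This gives the third expression in the statement.

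To get the equalities with the intermediate conditionings, I would use a sandwich argument. The $\sigma$-fields satisfy
\[
\sigma(\mathcal I_a,(X_s,R_s)_{s\in\mathcal I_a})\subseteq\sigma(X_{1:t-1},A_{1:t-1},(R_s)_{s\in\mathcal I_a})\subseteq\sigma(\Data_{t-1}).
\]
The posterior given the largest $\sigma$-field is measurable with respect to the smallest. The tower property then shows that all three conditional laws coincide almost surely.

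The main obstacle is rigor in the Bayes step. Conditioning on a random index set is delicate, and the densities must be taken with respect to a common dominating measure. I would handle this by fixing a dominating measure: Lebesgue measure for the rewards (Gaussian by (A1)), and counting measure for the actions and for the finitely many tree structures (a finite set by (A3) once depth is bounded, or a countable set in general). I would also work on the events $\{A_{1:t-1}=a_{1:t-1}\}$, where the index sets are deterministic, and then sum over these events.
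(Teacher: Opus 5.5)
Your proposal is correct and follows essentially the same route as the paper's proof. You sequentially factor $p(\Data_{t-1}\mid\ParamVec^*)$, drop the context and action factors via exogeneity and the decision-rule assumption, multiply the arm-wise reward likelihoods by the product prior, and integrate out $\BARTVec_b^*$ for $b\neq a$. Your tower-property argument over the nested $\sigma$-fields is a useful addition: it makes rigorous the step the paper compresses into ``depends on $\Data_{t-1}$ only through \dots; this proves the claim.''
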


Using Lemma~\ref{lem:post-factor}, the likelihood factors by arm given the realized design \((X_{1:T},A_{1:T})\), hence the posterior also factorizes across arms:
\[
\Pi(\ParamVec^*\mid \Data_T)=\bigotimes_{a=1}^K \Pi_a(\BARTVec_a^*\mid \Data_T).
\]
Therefore, by the KL representation of mutual information and additivity of KL for product measures,
\[
I(\ParamVec^*;\Data_T)
=
\E\!\left[\DKL\!\left(\Pi(\cdot\mid \Data_T)\,\middle\|\,\Pi\right)\right]
=
\sum_{a=1}^K \E\!\left[\DKL\!\left(\Pi_a(\cdot\mid \Data_T)\,\middle\|\,\Pi_a\right)\right]
=
\sum_{a=1}^K I(\BARTVec_a^*;\Data_T).
\]

It remains to bound each arm-wise mutual information term. For each arm \(a\), by the chain rule,
\[
I(\BARTVec_a^*;\Data_T)
=
I(\mathcal{T}^{(a)};\Data_T)
+
I(M^{(a)};\Data_T\mid \mathcal{T}^{(a)}),
\]
so we bound the tree-structure and leaf-value contributions separately.

\noindent\textbf{Bounding Tree Structure Information.}

Fix an arm $a$. Since $\mathcal T^{(a)}$ is discrete, we have
\(
I(\mathcal{T}^{(a)};\Data_T)\le H(\mathcal T^{(a)}).
\)
Therefore, it suffices to derive the Shannon entropy $H(\mathcal T^{(a)})$ of the tree structure prior. We suppose that the total number of leaves for arm $a$ is $L^{(a)}$. The trees in the ensemble are independent, so
$$
H(\mathcal T^{(a)})=\sum_{j=1}^m H(\mathcal T^{(a)}_j)=\sum_{j=1}^m\left(H(L^{(a)}_j)+H(\mathcal T^{(a)}_j\mid L^{(a)}_j)\right).
$$

Using the argument from \citet[Corollary 5.2]{rockova2019ontheory}, which establishes a super-exponential tail bound $\Prob(L_j>l) \lesssim e^{-C_K l \log l}$, it is straightforward to show that $\Prob(L_j>l)\le C\rho ^l$ for some constant $C$ and $\rho \in (0,1)$ depending on $\alphaqd$. The entropy
\begin{align*}
H(L_j)&=\sum_{l \geq 1} \Prob(L_j=l) \log \frac{1}{\Prob(L_j=l)}.
\end{align*} 
To bound the entropy of $L_j$, we use a standard tail-indicator (binary-entropy) bound.
Define $B_l\defeq \mathbf 1\{L_j>l\}$ for $l\ge 1$. Then $L_j=1+\sum_{l\ge 1} B_l$ and the map
$L_j \leftrightarrow (B_l)_{l\ge 1}$ is bijective, hence $H(L_j)=H((B_l)_{l\ge 1})$.
By the chain rule and subadditivity,
\begin{align*}
H(L_j)
&= \sum_{l\ge 1} H(B_l \mid B_1,\dots,B_{l-1})
\;\le\; \sum_{l\ge 1} H(B_l)
= \sum_{l\ge 1} h\!\left(\Prob(L_j>l)\right),
\end{align*}
where $h(p)=p\log\frac1p+(1-p)\log\frac1{1-p}$ is the binary entropy.
Using $h(p)\le p\log\frac{e}{p}=p\log\frac1p+p$, we obtain the valid (looser) tail-only bound
\begin{align*}
H(L_j)
&\le \sum_{l\ge 1} \Prob(L_j>l)\log\frac{e}{\Prob(L_j>l)}
\;\le\; \sum_{l\ge 1} C\rho^l\Bigl(l\log\tfrac1\rho+\log\tfrac{e}{C}\Bigr)
<\infty.
\end{align*}
The set of all trees with $l$ leaves is finite. Thus,
$$
H(\mathcal T_j\mid L_j)=\E_{l\sim L_j}[H(\mathcal T_j\mid L_j=l)]\le \E_{l\sim L_j}[\log \abs{\{\tau: \operatorname{leaves}(\tau)=l\}}].
$$
The number of distinct full binary tree shapes with $l$ leaves is the ($l-1$)th Catalan number
$$
\operatorname{Cat}_{l-1} =\frac{1}{l}\binom{2(l-1)}{l-1}=\bigO(4^{l-1} /(l-1)^{3 / 2}) \quad \text { as } l \rightarrow \infty. 
$$
At each of the $l-1$ internal nodes, one chooses a splitting variable among $p$ and a cut-point among (at most) $\Nmax$ observed values, yielding at most $(p \Nmax)^{l-1}$ choices. Combining together,
\begin{align*}
\log \abs{\{\tau: \operatorname{leaves}(\tau)=L_j\}}&\le \log(4^{l-1} /(l-1)^{3 / 2}) +\log C_1+ \log((p \Nmax)^{l-1})\\&=(l-1)\log(4p\Nmax)-\frac32\log(l-1)+\log C_1.
\end{align*} 

\begin{remark}
   Using the Dirichlet prior for the split probabilities is no different from using (uniformly) random probabilities in terms of the entropy bound for the prior itself. 
\end{remark}

Taking expectation,
$$
\E_{l\sim L^{(a)}_j}\left[\log \abs{\{\tau: \operatorname{leaves}(\tau)=L^{(a)}_j\}}\right]\le C_2\log(4p\Nmax)+C_3.
$$
Finally, summing over arms,
$$
\sum_{a=1}^K I(\mathcal{T}^{(a)};\Data_T)\le \sum_{a=1}^K H(\mathcal T^{(a)})\le K\,m\big(C_2\log(4p\Nmax)+C_4\big).
$$
Here, $C_1,C_2,C_3$ and $C_4$ are constants that only depend on the splitting probability parameters.

\noindent\textbf{Bounding Leaf Information.}
Fix an arm $a$ and let \(\mathcal I_a\defeq \{t\le T: A_t=a\}\), \(X^{(a)}\defeq (X_t)_{t\in\mathcal I_a}\), and \(R^{(a)}\defeq (R_t)_{t\in\mathcal I_a}\). By Lemma~\ref{lem:post-factor}, conditional on \(\mathcal T^{(a)}\) the posterior for \(M^{(a)}\) depends on \(\Data_T\) only through \((\mathcal I_a,X^{(a)},R^{(a)})\).
Therefore, by the KL representation of conditional mutual information,
\begin{align*}
I(M^{(a)};\Data_T\mid \mathcal T^{(a)})
&=
\E\!\left[
  \DKL\!\Big(
    \Pi(M^{(a)}\mid \mathcal I_a,X^{(a)},R^{(a)},\mathcal T^{(a)})
    \,\Big\|\,
    \Pi(M^{(a)}\mid \mathcal T^{(a)})
  \Big)
\right]\\&=
\E\!\left[
  \E\!\left[
    \DKL\!\Big(
      \Pi(M^{(a)}\mid D_a,R^{(a)},\mathcal T^{(a)})
      \,\Big\|\,
      \Pi(M^{(a)}\mid \mathcal T^{(a)})
    \Big)
    \,\Big|\,
    D_a,\mathcal T^{(a)}
  \right]
\right],
\end{align*}
where the realized design for arm \(a\) is \(D_a\defeq(\mathcal I_a,X^{(a)})\). Conditioning on a realization \(\tau\) of \(\mathcal T^{(a)}\) and on the realized arm-\(a\) design, the model reduces to a Gaussian linear model,
so the inner expected KL information gain equals \(\tfrac12\log\det(I+(\sigma_\mu^2/\sigma^2)X_\tau^\top X_\tau)\), whose proof is given below.

Under fixed tree structures $\tau$, BART places independent priors $ M^{(a)}_j \sim N(0,\sigma_\mu^2)$ on each tree $j$'s leaf value vector for arm $a$. $M^{(a)}$, being the concatenation of all $M^{(a)}_j$'s, is a length $L^{(a)}$ vector. Let $X^{(a)}_\tau$ be the $T_a\times L^{(a)}$ matrix of leaf indicators of all $(X_s: A_s=a)$ pairs. Let the reward vector \(R^{(a)}\) collect the \(T_a\) rewards with \(A_s=a\). For any fixed $X^{(a)}_\tau$,
$$
R^{(a)}\mid M^{(a)}\sim N(X^{(a)}_\tau M^{(a)},\sigma^2I_{T_a}),\quad M^{(a)}\sim N(0,\sigma_\mu^2I_{L^{(a)}}).
$$
By standard Gaussian-Gaussian conjugacy, 
\begin{align*}
M^{(a)} \mid R^{(a)} &\sim N\Bigl(\left((X^{(a)}_\tau)^{\top} X^{(a)}_\tau+\left(\sigma^2 / \sigma_\mu^2\right) I_{L^{(a)}}\right)^{-1} (X^{(a)}_\tau)^{\top} R^{(a)},\\
&\qquad \sigma^2\left((X^{(a)}_\tau)^{\top} X^{(a)}_\tau+\left(\sigma^2 / \sigma_\mu^2\right) I_{L^{(a)}}\right)^{-1}\Bigr).
\end{align*}
Indeed, this posterior is conditioned on $\tau$ and the realized arm-$a$ observations $(X^{(a)},R^{(a)})$. Using the following fact from \citet{duchi2007derivations}: 
\begin{align*}
\DKL(P\|Q) &= \frac{1}{2}\Bigl(\left(\mu_2-\mu_1\right)^{\top} \Sigma_2^{-1}\left(\mu_2-\mu_1\right)+\tr (\Sigma_2^{-1} \Sigma_1 )\\
&\qquad -\log \frac{\det \Sigma_1 }{\det \Sigma_2 }-L\Bigr), \text{ if } P\sim N(\mu_1,\Sigma_1) , Q\sim N(\mu_2,\Sigma_2),
\end{align*}
the K-L divergence between the two multivariate normal distributions $M^{(a)}\mid \tau,X^{(a)},R^{(a)}$ and $M^{(a)}\mid\tau$ is thus
$$
\DKL(\Pi(M^{(a)}\mid \tau,X^{(a)},R^{(a)})\| \Pi(M^{(a)}\mid \tau ))=\frac12\left(\mu_1^\top\Sigma_2^{-1}\mu_1+\tr (\Sigma_2^{-1} \Sigma_1 )-L^{(a)}+\log\det(\Sigma_2\Sigma_1^{-1})\right),
$$
where
\begin{align*}
\mu_1 &= \left((X^{(a)}_\tau)^{\top} X^{(a)}_\tau+\left(\sigma^2 / \sigma_\mu^2\right) I_{L^{(a)}}\right)^{-1} (X^{(a)}_\tau)^{\top} R^{(a)},\\
\Sigma_1 &= \sigma^2\left((X^{(a)}_\tau)^{\top} X^{(a)}_\tau+\left(\sigma^2 / \sigma_\mu^2\right) I_{L^{(a)}}\right)^{-1},\\
\mu_2 &= \textbf{0},\\
\Sigma_2 &= \sigma_\mu^2I_{L^{(a)}}.
\end{align*}
Given the total law of variance
$$
\Sigma_2=\Var(M^{(a)}\mid \tau)=\E_{R^{(a)}\mid X^{(a)},\tau}\Var(M^{(a)}\mid \tau,X^{(a)},R^{(a)})+\Var_{R^{(a)}\mid X^{(a)},\tau}\E(M^{(a)}\mid \tau,X^{(a)},R^{(a)})=\Sigma_1+\Var_{R^{(a)}\mid X^{(a)},\tau}[\mu_1],
$$
the conditional expectation of the first three terms of the K-L divergence adds up to $0$ by
\begin{align*}
\E_{R^{(a)}\mid X^{(a)},\tau}[\mu_1^\top\Sigma_2^{-1}\mu_1]=\tr(\Sigma_2^{-1}\Var_{R^{(a)}\mid X^{(a)},\tau}(\mu_1))=\tr(\Sigma_2^{-1}(\Sigma_2-\Sigma_1)).
\end{align*}
Further using $\det(\Sigma_2\Sigma_1^{-1})=\det((\sigma_\mu^2/\sigma^2)(X^{(a)}_\tau)^{\top} X^{(a)}_\tau+ I_{L^{(a)}}) $,
$$
\E_{R^{(a)}\mid X^{(a)},\mathcal I_a,\tau}[\DKL(\Pi(M^{(a)}\mid \tau,X^{(a)},R^{(a)})\| \Pi(M^{(a)}\mid \tau ))]=\frac12 \log\det((\sigma_\mu^2/\sigma^2)(X^{(a)}_\tau)^{\top} X^{(a)}_\tau+ I_{L^{(a)}}).
$$
 Suppose the $r\le \min(T_a,L^{(a)})$ eigenvalues of $(X^{(a)}_\tau)^\top X^{(a)}_\tau$ are $\gamma_1,\dots,\gamma_r$. By $\tr((X^{(a)}_\tau)^\top X^{(a)}_\tau)=mT_a$ and the AM--GM inequality,
$$
\begin{aligned}
\det((\sigma_\mu^2/\sigma^2)(X^{(a)}_\tau)^{\top} X^{(a)}_\tau+ I_{L^{(a)}})&=
\prod_{i=1}^r(1+(\sigma_\mu^2/\sigma^2)\gamma_i)
\\&\le 
\left(\frac1r\sum_{i=1}^r(1+(\sigma_\mu^2/\sigma^2)\gamma_i)\right)^r
\\&\le \left(1+\frac{\sigma_\mu^2T_a}{\sigma^2r}m\right)^r\le \left(1+\frac{\sigma_\mu^2T_a}{\sigma^2}m\right)^r.
\end{aligned}
$$
 Finally, using $\sigma_\mu=0.5/(\kappa\sqrt{m})$ (so that $\sigma_\mu^2 m=\tfrac{1}{4\kappa^2}$),
\begin{align*}
I(M^{(a)};\Data_T\mid \mathcal T^{(a)})
&= \E_{D_a,\ \tau\sim\mathcal T^{(a)}}\left[
\frac12 \log\det\!\Big(I_{L^{(a)}}+(\sigma_\mu^2/\sigma^2)(X^{(a)}_\tau)^{\top} X^{(a)}_\tau\Big)\right]\\
&\le \frac12\min(T_a,\E_{\tau\sim\mathcal T^{(a)}}[L^{(a)}])\log \left(1+\frac{1}{4\kappa^2\sigma^2}T_a\right).
\end{align*}

Summing over arms and using $\sum_a T_a=T$ and concavity of $\log(1+u)$,
$$
\sum_{a=1}^K I(M^{(a)};\Data_T\mid \mathcal T^{(a)})
\le \frac12 \E_{\tau\sim\mathcal T}[L]\sum_{a=1}^K \log\left(1+\frac{1}{4\kappa^2\sigma^2}T_a\right)
\le \frac12 K\,\E_{\tau\sim\mathcal T}[L]\log\left(1+\frac{1}{4\kappa^2\sigma^2}\frac{T}{K}\right).
$$
Note that under the BART prior (\cref{sec:bart_prior}), the expected total number of leaves satisfies $\E_{\tau\sim\mathcal T}[L] \le C_{\mathrm{leaf}}\, m$ for a constant 
\[
C_{\mathrm{leaf}} \le \sum_{l=1}^\infty C\rho ^l = \frac{C}{1-\rho}
\]
depending only on the splitting prior parameters. 
To simplify:
$$
\sum_{a=1}^K I(M^{(a)};\Data_T\mid \mathcal T^{(a)}) \le \frac12 K C_{\mathrm{leaf}} m\log\left(1+\frac{1}{4\kappa^2\sigma^2}\frac{T}{K}\right) .
$$
Combining with the structure bound for $I(\mathcal{T}^{(a)};\Data_T)$ gives the stated lemma.
\end{proof}

\subsubsection{Proof of Lemma~\ref{lem:post-factor}}\label{prf:post-factor}

\begin{proof}
  Write \(\ParamVec^*=(\BARTVec_1^*,\dots,\BARTVec_K^*)\). By Bayes' rule,
  \[
  p(\ParamVec^*\mid \Data_{t-1}) \propto \Pi(\ParamVec^*)\, p(\Data_{t-1}\mid \ParamVec^*).
  \]
  Factorize \(p(\Data_{t-1}\mid \ParamVec^*)\) sequentially:
  \[
  p(\Data_{t-1}\mid \ParamVec^*)
  =\prod_{s=1}^{t-1} p(X_s\mid \Data_{s-1},\ParamVec^*)\, p(A_s\mid \Data_{s-1},X_s,\ParamVec^*)\, p(R_s\mid \Data_{s-1},X_s,A_s,\ParamVec^*).
  \]
  Under exogenous contexts, \(p(X_s\mid \Data_{s-1},\ParamVec^*)=p(X_s\mid \Data_{s-1})\). By the algorithm assumption,
  \(p(A_s\mid \Data_{s-1},X_s,\ParamVec^*)=p(A_s\mid \Data_{s-1},X_s)\).
  By the arm-wise likelihood,
  \(p(R_s\mid \Data_{s-1},X_s,A_s,\ParamVec^*)=p(R_s\mid X_s,A_s,\BARTVec_{A_s}^*)\).
  Therefore all \(\ParamVec^*\)-dependence in \(p(\Data_{t-1}\mid \ParamVec^*)\) is through the reward factors, and these reward factors separate by arm:
  \[
  p(\Data_{t-1}\mid \ParamVec^*) \propto \prod_{a=1}^K \prod_{s<t:\,A_s=a} p(R_s\mid X_s,A_s=a,\BARTVec_a^*),
  \]
  where \(\propto\) hides factors that depend only on the realized design \((X_{1:t-1},A_{1:t-1})\) (and not on \(\ParamVec^*\)).
  Combining with \(\Pi(\ParamVec^*)=\prod_{a=1}^K \Pi_a(\BARTVec_a^*)\) and integrating out \(\BARTVec_{-a}^*\) yields
  \[
  p(\BARTVec_a^*\mid \Data_{t-1})
  \propto
  \Pi_a(\BARTVec_a^*)\prod_{s<t:\,A_s=a} p(R_s\mid X_s,A_s=a,\BARTVec_a^*),
  \]
  which depends on \(\Data_{t-1}\) only through the realized design \((X_{1:t-1},A_{1:t-1})\) and the arm-\(a\) subsequence \((X_s,R_s)_{s\in\mathcal I_a(t-1)}\), equivalently \(\mathcal I_a(t-1)\) and \((X_s,R_s)_{s\in\mathcal I_a(t-1)}\). This proves the claim.
  \end{proof}

\section{A Feel-Good Variant of BFTS}\label{sec:fgts-app}

\noindent\textbf{Purpose and scope.}
This appendix provides supporting theory for a related ``feel-good'' variant \citep{zhang_feel-good_2021} of BFTS.
The variant is identical to BFTS except for an exponential tilting/resampling step over the same BART posterior draws (so $\lambda=0$ recovers standard BFTS, and the optimal $\lambda^\star\asymp \eps_T\to0$).
The frequentist regret bound in this section applies to the variant rather than standard BFTS.
Empirically, the variant can be sensitive to tuning and may underperform standard BFTS; see \cref{sec:fgts-sens} for sensitivity results.

\noindent\textbf{Roadmap.}
We first introduce the variant and its implementation (\cref{alg:fg-bfts}), then summarize the frequentist regret guarantee and give the proof, and finally report sensitivity results and the practical takeaway.

\subsection{Introduction to Feel-Good BFTS}\label{sec:fgts-background-app}

While the Bayesian analysis yields tight bounds under the prior, it offers no guarantees when the data-generating process deviates from the BART prior. To provide robust guarantees for fixed, unknown functions $f_0$, we now turn to a \emph{Frequentist design}, where $f_0$ is fixed and unknown and the prior used may be misspecified.

Standard TS is difficult to analyze in frequentist nonparametric settings due to insufficient exploration. To promote exploration, \citet{zhang_feel-good_2021}  proposed Feel-Good Thompson Sampling (FGTS) by adding an optimistic (``feel-good'') term that favors models with large maximum rewards on historical observations. Crucially, they decomposed the regret into a decoupling coefficient (capturing the algorithmic structure) and estimation errors (capturing the model loss and the feel-good exploration loss). They proved that the decoupling coefficient is bounded by the number of actions. \citet{neu2022lifting, gouverneur2023thompson} extended these ideas to the Bayesian regret setting.

We leverage their Feel-Good framework as a theoretical tool. We demonstrate that the BART prior satisfies the necessary posterior concentration properties (\cref{cond:norm}) to yield adaptive regret rates. 
We also show that a generic $L_2(Q)$ prior-mass (contraction) condition (\cref{cond:norm}) BART implies $\sum_{t=1}^T \E_\TS[\mathrm{Regret}_t]\le\widetilde{\mathcal O}(T\eps_T)$; in particular this yields $\widetilde{\mathcal O}\!\bigl(KT^{(\alpha+p)/(2\alpha+p)}\bigr)$ in the dense H\"older setting and $\widetilde{\mathcal O}\!\bigl(KT^{(\alpha+d)/(2\alpha+d)}\operatorname{polylog}(T,p)\bigr)$ under covariate sparsity. 
This $L_2(Q)$ ball condition is weaker than \(L_\infty\) ball conditions discussed by \citet{agarwal_model-based_2022}.

\subsubsection{Background}
The frequentist expected cumulative regret is
\[
\E\Big[\sum_{t=1}^T \big(f_0(x_t,A^*(f_0,x_t))-f_0(x_t,A_t)\big)\Big],
\]
with expectation over the reward noise and algorithmic randomness (conditioning on the realized contexts $(x_t)$).

The Feel-Good TS (FGTS) loss \citep{zhang_feel-good_2021} is defined as
\[
L_{\mathrm{FGTS}}(\ParamVec,x,a,r) = \eta\,\big(f_{\ParamVec}(x,a)-r\big)^2 - \lambda\,\min\big(b, f_{\ParamVec}^*(x)\big),
\]
where $\eta>0$, $\lambda\ge 0$, $b>0$, and $f_{\ParamVec}^*(x)=\max_{a'\in\ActionSpace} f_{\ParamVec}(x,a')$. 
This term encourages exploration by artificially boosting the probability of sampling models with high potential maximum rewards, acting as a ``soft'' version of UCB's optimism.

For the frequentist analysis, we treat $f_0$ as fixed and unknown. For each arm $a$, we assume $f_0(\cdot,a)$ lies in a H\"older class $\mathcal C^{\alpha,R}([0,1]^p)$ with $0<\alpha\le1$, and we adopt arm-wise BART priors and designs that satisfy standard contraction conditions (entropy, prior mass, and sieve bounds) under regular design assumptions \citep{rockova_theory_2018,rockova_posterior_2020,jeong_art_2023}. A formal statement of these conditions is provided in \cref{sec:asmp}. 

Our contribution is to show that the following \cref{cond:norm} on a prior mass lower bound for an \(L_2(Q)\) ball around the truth suffices for an optimal regret bound, which is the \emph{only} nonparametric-Bayes input needed for the regret bound. Such prior concentration is a standard ingredient in posterior contraction theory \citep{ghosal_fundamentals_2017} and, in our setting, follows from the random-design split-net framework of \citet{jeong_art_2023} (which accommodates Dirichlet-sparse variants); see also \citet{rockova_theory_2018,rockova_posterior_2020,linero_bayesian_2018} for related results. This is a weaker condition than \(L_\infty\) ball conditions discussed by previous literature \citep{agarwal_model-based_2022} for feel-good posterior sampling. 

\begin{condition}[Generic nonparametric prior mass condition]\label{cond:norm}
    Under appropriate assumptions (see \cref{sec:asmp}), there exists a rate $\eps_n\to0$ such that the induced prior $\Pi(g_{\BARTVec})$ over the regression function $g_{\BARTVec}(\cdot)$ satisfies
    \begin{align} 
    \Pi\left(\mathcal B_{\eps_n}^{(a)}\right) &\geq \exp(-c_0 n \eps_n^2), \label{eq:priormass-generic}
    \end{align}
    for some constant \(c_0>2\), where \(\mathcal B_{\eps}^{(a)}\defeq \{g_{\BARTVec}:\|g_{\BARTVec}-g_{0,a}\|_{Q}\le \eps\}\) denotes an \(L_2(Q)\) ball around the truth for arm \(a\). 
    
    Under \(d\)-sparsity and a Dirichlet-sparse BART prior, \citet{jeong_art_2023} yields a prior contraction rate of the form
    \(
    \eps_n=\sqrt{(d\log p)/n}+n^{-\alpha/(2\alpha+d)}\operatorname{polylog}(n,d)
    \),
    where the first term is a (typically lower-order) variable-selection penalty. 
\end{condition}

Specifically, the following assumption on the true mean function is needed for the condition \ref{cond:norm} to hold.

\begin{assumption}[True mean function]\label{ass:true-mean}
    For each $a\in\ActionSpace$, $g_{0,a}$, the true mean function, is fixed, unknown, and belongs to the sparse piecewise heterogeneous anisotropic H\"older class
    $\Gamma_{\lambda}^{A_{\bar{\alpha}},d,p}(\mathfrak X_0)$ (optionally intersected with $\mathcal C([0,1]^p)$)
    as in \citet[(A1)]{jeong_art_2023}. A sufficient condition is the sparse-isotropic continuous specialization: for each arm,
    $g_{0,a}(x)=h_{0,a}(x_{S_{0,a}})$ with $|S_{0,a}|\le d$ and $h_{0,a}$ being bounded is $\alpha$-H\"older on $[0,1]^d$.
\end{assumption}

We assume $h_{0,a}\in[0,1]$ from now on. This is equivalent to a bounded true mean function \(f_0\).

\subsubsection{Main Result}
\begin{theorem}[Nonparametric regret for FGTS with BART prior]\label{thm:fgts-regret}
Suppose a random-design context process $X_t\overset{\mathrm{iid}}{\sim}Q$ and arm-wise BART priors satisfy \cref{cond:norm} under the truncation conditions summarized in \cref{sec:asmp}. Suppose stochastic rewards and contexts (T1 and T3) hold. Suppose the true mean function satisfies \cref{ass:true-mean}. Then, for the FGTS objective with parameters $(\eta,\lambda)$ tuned as described in \cref{sec:fgts-app} (assuming the algorithm knows a lower bound $\alpha$ on the smoothness and an upper bound $d$ on the useful dimension, so that a valid rate $\eps_T$ can be specified), for sufficiently large $T$ (known to the algorithm), we have
\[
\sum_{t=1}^T \E_\TS[\mathrm{Regret}_t] \le \widetilde{\mathcal O}\big(T\,\eps_T\big),
\]
with constants depending on $(\eta,c_0)$ and prior hyperparameters (where $c_0$ is the contraction constant in the prior-mass condition), and with an explicit linear factor in $K$ hidden in the soft-O notation. The optimal \(\lambda^* \asymp \eps_T \). See \cref{sec:fgts-app} for the proof.
\end{theorem}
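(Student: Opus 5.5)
The plan is to follow the decoupling framework of \citet{zhang_feel-good_2021}, with Thompson sampling run on the tilted pseudo-posterior
\[
\Pi_t(\ParamVec)\propto \Pi(\ParamVec)\exp\Bigl(-\sum_{s<t}L_{\mathrm{FGTS}}(\ParamVec,X_s,A_s,R_s)\Bigr).
\]
\textbf{Regret decomposition.} For each round I write
\[
\mathrm{Regret}_t=\bigl[f^*_{\widehat\ParamVec_t}(X_t)-f_{\widehat\ParamVec_t}(X_t,A_t)\bigr]+\bigl[f_0(X_t,A_t)-f_0^*(X_t)\bigr]\ldots
\]
More precisely, I would add and subtract $f^*_{\widehat\ParamVec_t}(X_t)$ to split the regret into three pieces:
\begin{itemize}
\item a feel-good term $\E_\TS[f_0^*(X_t)-f^*_{\widehat\ParamVec_t}(X_t)]$;
\item a model-mismatch term $\E_\TS[f_{\widehat\ParamVec_t}(X_t,A_t)-f_0(X_t,A_t)]$;
\item a term that vanishes because $A_t$ is greedy for $\widehat\ParamVec_t$.
\end{itemize}
The decoupling lemma of \citet{zhang_feel-good_2021} (decoupling coefficient at most $K$, which is where the linear factor in $K$ enters) bounds the mismatch term. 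Its bound is $\mu^{-1}K+\mu\,\E_\TS\E_{\ParamVec\sim\Pi_t}[(f_\ParamVec(X_t,A_t)-f_0(X_t,A_t))^2]$ for a free $\mu>0$, with $A_t$ resampled from an independent posterior draw. After summation, the total regret is bounded by $\lambda^{-1}$ times a potential-based estimation error plus a decoupling remainder.

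\textbf{Estimation error and prior mass.} The core inequality I would prove is the standard online-aggregation bound: the cumulative expected tilted excess loss under $\Pi_t$ is at most
\[
-\log \E_{\ParamVec\sim\Pi}\exp\Bigl(-\sum_{s\le T}\Delta L_s(\ParamVec)\Bigr),
\]
where $\Delta L_s$ is the loss relative to $f_0$. With Gaussian-type noise (T1) and $\eta$ small, the exponential-moment argument turns the squared-loss part into $\eta\|f_\ParamVec-f_0\|^2$ on the played actions. The feel-good part is controlled by $\lambda\,(f_0^*-\min(b,f^*_\ParamVec))$, which is bounded since $h_{0,a}\in[0,1]$ and we truncate at $b$. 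To evaluate the log partition function, I restrict the integral to the product ball $\prod_a\mathcal B^{(a)}_{\eps_T}$. Using i.i.d.\ contexts $X_t\sim Q$ (T3), the expected squared deviations on this ball are at most $\eps_T^2$ per round in $L_2(Q)$, so \cref{cond:norm} gives
\[
-\log\E\exp(\cdots)\le Kc_0T\eps_T^2+\eta T\eps_T^2+\lambda T\eps_T.
\]
\textbf{Tuning $\lambda$.} Combining the pieces yields
\[
\sum_t\E_\TS[\mathrm{Regret}_t]\lesssim \frac{K c_0T\eps_T^2}{\lambda}+\lambda T\cdot\frac{K}{\eta}+T\eps_T .
\]
Choosing $\lambda^*\asymp\eps_T$ balances these terms to $\widetilde{\mathcal O}(KT\eps_T)$. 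This choice requires the algorithm to know $\alpha$, $d$ and $T$, which is why the theorem assumes them.

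\textbf{Main obstacle.} The hard step is the passage from $L_2(Q)$ prior mass to a bound on the adaptive-design loss. The actions $A_s$ are chosen adaptively, so the squared errors are evaluated at $(X_s,A_s)$, not under $Q\otimes\text{arm}$. I would handle this by bounding the sum over the played arm by the sum over all arms at each $X_s$: $\sum_a(g_{\BARTVec_a}-g_{0,a})^2(X_s)$. This quantity depends only on the i.i.d.\ contexts, so its expectation on the ball is at most $K\eps_T^2$ per round. A second issue is that BART functions are unbounded, so the exponential-moment step for the squared loss needs truncation. I would use the sieve and truncation conditions of \cref{sec:asmp} to restrict to functions bounded by a constant, paying an $\exp(-cT\eps_T^2)$ prior-mass complement. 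If that proves too weak, the fallback is to clip $f_\ParamVec$ to $[0,b]$ inside the loss, which is harmless since $f_0\in[0,1]$.
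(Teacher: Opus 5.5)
Your proposal is correct, and its core matches the paper's argument. The shared ingredients are:
\begin{itemize}
\item the FGTS bound of \citet{zhang_feel-good_2021}, with decoupling coefficient $K$;
\item a lower bound on the log-partition term, obtained by restricting to the product ball $\ParamSpace_{\eps_T}$ and invoking \cref{cond:norm};
\item the inequality $\delta_\ParamVec(X_s,A_s)^2\le\sum_a\delta_\ParamVec(X_s,a)^2$ to neutralize adaptively chosen actions (step (i) of \cref{lem:deltaL-bound});
\item the tuning $\lambda^*\asymp\eps_T$.
\end{itemize}

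The real difference is that you run the potential argument from the prior $\Pi$. The paper instead inserts a round-robin phase with $\tau\to\infty$ and $\tau=o(T\eps_T^2)$, and applies Zhang's theorem only from round $\tau K+1$, with initial measure $\Pi_{\tau K}$. It therefore needs \cref{lem:postmass-explore} to transfer the ball's prior mass to $\Pi_{\tau K}$. That lemma is the only place $F_{\max}$ enters, and it costs an extra $(\lambda+\eta C_R)\tau K/\lambda$.

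Your route drops that lemma and shows the exploration phase is not needed for the rate. It also makes the averaging step transparent, because $\Pi$ is data-independent: by Jensen and Fubini, $\E\log\int_B e^{-S}\,d\Pi\ge\log\Pi(B)-\Pi(B)^{-1}\int_B\E[S(\ParamVec)]\,d\Pi(\ParamVec)$. The noise cross term then has mean zero for each fixed $\ParamVec$. In exchange, the paper's route covers \cref{alg:fg-bfts} literally, round-robin start included.

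Two small fixes:
\begin{itemize}
\item The log-partition bound should read $Kc_0T\eps_T^2+\eta KT\eps_T^2+\lambda\sqrt K\,T\eps_T$. The $\sqrt K$ comes from $f_0^*-\min(1,f^*_\ParamVec)\le\max_a|\delta_\ParamVec(\cdot,a)|\le(\sum_a\delta_\ParamVec(\cdot,a)^2)^{1/2}$ followed by Jensen. This step needs $f_0^*\le b=1$, and it needs closeness to $f_0$, not mere boundedness of the feel-good term, which would only give $O(T)$.
\item The sieve route to boundedness is harder than it looks. The exponential-moment step is taken under the data-dependent $\Pi_t$, where small prior mass of the complement does not suffice on its own. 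Under (P2*) no sieve is needed anyway.
\end{itemize}

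Your clipping fallback is a clean way to reach the bounded regime that Zhang's constants ($b=1$, $\eta\le 0.25$) presuppose; the paper invokes them without re-checking this. Projecting onto $[0,1]$ only shrinks $|f_\ParamVec-f_0|$ when $f_0\in[0,1]$, so \cref{cond:norm} is preserved.
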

\begin{corollary}
In particular, in the non-sparse isotropic setting where $f_0(\cdot,a)$ may depend on all $p$ coordinates and $\eps_T \asymp T^{-\alpha/(2\alpha+p)}\log^{1/2}T$ \citep{rockova_theory_2018,rockova_posterior_2020}, we obtain
\[
\sum_{t=1}^T \E_\TS[\mathrm{Regret}_t] \le \widetilde{\mathcal O}\bigl(K\,T^{(\alpha+p)/(2\alpha+p)}\bigr).
\]
Under additional $d$-sparsity of the covariates (each $f_0(\cdot,a)$ depends on at most $d\ll p$ coordinates) and with a Dirichlet-sparse BART prior as in \citet{linero_bayesian_2018,jeong_art_2023}, one may instead take $\eps_T \asymp T^{-\alpha/(2\alpha+d)}\operatorname{polylog}(T,p)$, which yields the improved sparse-regime bound
\[
\sum_{t=1}^T \E_\TS[\mathrm{Regret}_t] \le \widetilde{\mathcal O}\bigl(K\,T^{(\alpha+d)/(2\alpha+d)}\operatorname{polylog}(T,p)\bigr).
\]
\end{corollary}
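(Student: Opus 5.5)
The plan is to obtain the corollary by substituting specific contraction rates into Theorem~\ref{thm:fgts-regret}. That theorem gives $\sum_{t=1}^T \E_\TS[\mathrm{Regret}_t]\le \widetilde{\mathcal O}(T\eps_T)$ with an explicit linear factor in $K$, for any rate $\eps_T$ satisfying Condition~\ref{cond:norm}. So two things are needed in each regime: (i) show that the proposed $\eps_T$ is a valid prior-mass rate in the sense of \eqref{eq:priormass-generic}, with some constant $c_0>2$; (ii) compute $T\eps_T$ and simplify. The tuning $\lambda^\star\asymp\eps_T$ and the knowledge of $(\alpha,d)$ assumed in Theorem~\ref{thm:fgts-regret} carry over unchanged, because the corollary only specializes $\eps_T$.

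\emph{Dense isotropic case.} For each arm, $f_0(\cdot,a)\in\mathcal C^{\alpha,R}([0,1]^p)$ with $0<\alpha\le1$. I would invoke the prior-concentration step of \citet{rockova_theory_2018,rockova_posterior_2020} for the depth-geometric prior (P2-a) with Gaussian leaves (P3). This step approximates $f_0$ by a $k$-$d$-tree step function with $\asymp T\eps_T^2/\log T$ leaves and lower-bounds the prior mass of the resulting $L_2(Q)$ ball. It gives $\Pi(\mathcal B^{(a)}_{\eps_T})\ge e^{-c\,T\eps_T^2}$ with $\eps_T\asymp T^{-\alpha/(2\alpha+p)}\log^{1/2}T$. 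The constant $c$ can be pushed above $2$ by rescaling $\eps_T$ by a constant, which does not change the rate. Substituting,
\[
T\eps_T\asymp T^{1-\alpha/(2\alpha+p)}\log^{1/2}T=T^{(\alpha+p)/(2\alpha+p)}\log^{1/2}T .
\]
Together with the linear $K$ factor, this yields $\widetilde{\mathcal O}\bigl(K\,T^{(\alpha+p)/(2\alpha+p)}\bigr)$.

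\emph{Sparse case.} Under Assumption~\ref{ass:true-mean} with $|S_{0,a}|\le d$ and the Dirichlet-sparse split prior, Condition~\ref{cond:norm} supplies (via \citet{jeong_art_2023})
\[
\eps_T=\sqrt{(d\log p)/T}+T^{-\alpha/(2\alpha+d)}\operatorname{polylog}(T,d).
\]
Then
\[
T\eps_T=\sqrt{T\,d\log p}+T^{(\alpha+d)/(2\alpha+d)}\operatorname{polylog}(T,d).
\]
Since $(\alpha+d)/(2\alpha+d)\ge 1/2$, the variable-selection term is at most $T^{(\alpha+d)/(2\alpha+d)}\sqrt{d\log p}$. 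Absorbing $\sqrt{d\log p}$ into $\operatorname{polylog}(T,p)$ therefore gives $\eps_T\asymp T^{-\alpha/(2\alpha+d)}\operatorname{polylog}(T,p)$ up to these factors. Multiplying by $K$ then gives the stated sparse-regime bound.

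The main obstacle is step (i): checking that the cited contraction results actually deliver the $L_2(Q)$ prior-mass bound in the form Condition~\ref{cond:norm} requires. Three points need care.
\begin{itemize}
\item The cited results must apply to our exact prior: the depth-geometric splitting prior with $\alphaqd<1/2$, the empirical-quantile grid capped at $\Nmax$, and, in the sparse case, the theory-required $\xi>1$ rather than the practical default $\xi=1$.
\item They must hold under random design $X_t\overset{\mathrm{iid}}{\sim}Q$ with the truncation conditions of \cref{sec:asmp}.
\item The constant must satisfy $c_0>2$.
\end{itemize}
I would handle these by citing the prior-mass lemma (rather than the full contraction theorem) from each reference, and by noting that the grid must be fine enough, e.g.\ $\Nmax$ growing polynomially in $T$, so that the approximation error stays $\lesssim\eps_T$. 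The remaining arithmetic is routine.
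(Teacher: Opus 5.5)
Your proposal is correct and follows essentially the paper's route. The corollary is just Step~5 of the proof of Theorem~\ref{thm:fgts-regret}: substitute the cited rate $\eps_T$ into the $\mathcal O(KT\eps_T)$ bound, and note that the $\sqrt{dT\log p}$ term is dominated because $(\alpha+d)/(2\alpha+d)>1/2$. Condition~\ref{cond:norm} is verified, as you suggest, by extracting the prior-concentration step of \citet{jeong_art_2023} under a deterministic regular-grid split-net and the truncated priors (P2*), (P3*) of \cref{sec:asmp}. So you should use truncated leaves rather than Gaussian (P3) leaves, since the theorem also needs the uniform bound $F_{\max}$.

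One small slip: the requirement $c_0>2$ needs no rescaling. A lower bound $\Pi(\mathcal B^{(a)}_{\eps_T})\ge e^{-cT\eps_T^2}$ automatically holds with any larger constant $c_0\ge c$. Inflating $\eps_T$ by a factor $C$ would in fact lower the effective constant to $c/C^2$, not raise it.
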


\begin{remark}
The optimal $\lambda^*$ and \(\tau\) changes with $T$, so the basic FGTS is not anytime. A standard doubling trick \citep{lattimore_bandit_2020} provides an anytime variant with at most a constant-factor overhead; see \cref{cor:fgts-anytime} for details. 
\end{remark}

\paragraph{Rate interpretation and regimes.}
The rate $\widetilde{\mathcal O}(T^{(\alpha+p)/(2\alpha+p)})$ improves with larger $\alpha$ and degrades with larger $p$. Under covariate sparsity and Dirichlet-sparse BART priors, the effective dimension $d$ replaces $p$ in the exponent, yielding $\widetilde{\mathcal O}(T^{(\alpha+d)/(2\alpha+d)})$ up to $\operatorname{polylog}(T,p)$ factors; this mitigates the curse of dimensionality when only a few coordinates drive rewards. Moreover, in standard nonparametric bandit settings with H\"older-smooth rewards, these rates match known minimax lower bounds up to logarithmic factors \citep{rigollet_nonparametric_2010}. 

\subsection{The Algorithm}

To implement the FGTS modification of the loss function, we use Sampling Importance Resampling (SIR) to sample from the posterior; and fix \(\sigma^2=\frac{1}{2\eta}\) in the BART MCMC sampler for implementing the parameter \(\eta\). \cref{alg:fg-bfts} is the BFTS (FG variant) algorithm.

\begin{remark}
  Fixing \(\sigma^2=\frac{1}{2\eta}\) in the BART prior is only for convenient sampling from the FGTS pseudo-posterior using existing BART MCMC samplers. It does not mean the true variance is fixed.
\end{remark}

{%
\renewcommand{\thealgorithm}{1v}%
\renewcommand{\theHalgorithm}{1v}%
\begin{algorithm}[tb]
  \caption{BFTS (FG variant).}
  \label{alg:fg-bfts}
  \begin{algorithmic}
    \STATE {\bfseries Input:} horizon $T$; initial selections $\tau$ (per arm); action set $\ActionSpace$ with $K=|\ActionSpace|$; prior $\Pi(\ParamVec)$; FGTS parameters $\eta>0, \lambda> 0$, $b>0$; number of MCMC draws $N$.
    \STATE Initialize history $\Data_0\gets\emptyset$.
    \STATE Initialize posterior $\Pi_0(\ParamVec) \propto \Pi(\ParamVec)$.
    \STATE Initialize feel-good scores $S_1,\dots,S_N \gets 0$.
    \STATE Define a nondecreasing refresh index $r:\{0,1,\dots\}\to\mathbb{N}$ with $r(0)=0$.
    \FOR{$t=1$ {\bfseries to} $T$}
      \STATE Observe context $X_t$.
      \IF{$t \le \tau K$}
        \STATE Choose $A_t \gets 1 + ((t-1)\bmod K)$ (round-robin).
      \ELSE
        \STATE Use posterior draws $\{\ParamVec^{(j)}\}_{j=1}^N$ from the latest MCMC refresh targeting $\Pi_{t-1}(\ParamVec)$.
        \STATE Sample $j^\star$ with $\Pr(j^\star=j)\propto \exp(\lambda S_j)$ \COMMENT{If $\lambda=0$, this is uniform and recovers BFTS.}
        \STATE Choose $A_t \in \arg\max_{a\in\ActionSpace} f_{\ParamVec^{(j^\star)}}(X_t,a)$ (break ties by minimum index).
      \ENDIF
      \STATE Observe reward $R_t$.
      \STATE Update history $\Data_t\gets \Data_{t-1}\cup\{(X_t,A_t,R_t)\}$.
      \IF{$r(t)>r(t-1)$}
        \STATE Run MCMC (cold-start) targeting $\Pi_t(\ParamVec) \propto \Pi(\ParamVec) \exp\!\bigl(-\eta \sum_{s=1}^t (f_{\ParamVec}(X_s,A_s)-R_s)^2\bigr)$.
        \STATE Obtain updated draws $\{\ParamVec^{(j)}\}_{j=1}^N \sim \Pi_t(\ParamVec)$.
        \STATE Recompute $S_j \gets \sum_{s=1}^{t}\min\!\bigl(b, f_{\ParamVec^{(j)}}^*(X_s)\bigr)$ for all $j=1,\dots,N$.
      \ELSE
        \STATE $\Pi_t \gets \Pi_{t-1}$.
        \STATE Set $S_j \gets S_j + \min\!\bigl(b, f_{\ParamVec^{(j)}}^*(X_t)\bigr)$ for all $j=1,\dots,N$.
      \ENDIF
    \ENDFOR
  \end{algorithmic}
\end{algorithm}
}

\subsection{Regret Bound for Feel-Good BFTS}\label{sec:fgts-regret-app}

Let \(\ParamVec= ((\BARTVec_a))_{a\in\ActionSpace}\in\ParamSpace=\prod_{a\in\ActionSpace}\BARTSpace\). For every $a\in \ActionSpace$, define \(f_\ParamVec(x,a)=g_{\BARTVec_a}(x)\), where each $\BARTVec_a\in \BARTSpace$ is independently sampled from the arm-wise BART prior \(\pi_{\bart}\) and the joint prior is \(\Pi=\pi_{\bart}^{\otimes K}\). Under appropriate conditions, we have the following expected regret bound for FGTS equipped with this prior.
\[
\sum_{t=1}^T \E [\mathrm{Regret}_t]\leq \mathcal O\!\left(K\,T^{(\alpha+d)/(2\alpha+d)}\operatorname{polylog}(T,p)\right).
\]
Here \(d\) is the intrinsic dimension: each arm-wise mean function depends on at most \(d\le p\) coordinates (the dense case is recovered by taking \(d=p\)).

\subsection{Proof of the main regret bound}

\begin{proof}

\noindent\textbf{Step 1 (Setup: random design and FGTS).}
We work under a random-design assumption on contexts: \((X_t)_{t\ge1}\) are i.i.d.\ draws from a distribution \(Q\) supported on \([0,1]^p\) with a bounded density. We assume the reward model \(R_t=f_0(X_t,A_t)+\varepsilon_t\) with \(\E[\varepsilon_t\mid \Data_{t-1},X_t,A_t]=0\). This allows us to control the feel-good (max-over-arms) term in expectation using an \(L_2(Q)\) bound.
Throughout this proof, we write \(\E[\cdot] \defeq \E_{X_{1:T}\sim Q}\E_\TS[\cdot]\), where \(\E_\TS[\cdot]=\E[\cdot\mid X_{1:T}]\) is the conditional expectation given the realized context sequence.
Fix a horizon \(T\) and an per-arm exploration length \(\tau=\tau(T)\), so that the total exploration length is \( \tau K \in\{0,1,\dots,T\}\).

To ensure validity of the prior mass condition, we analyze a BFTS (FG variant) procedure with a per-arm exploration length \(\tau=\tau(T)\) such that \(\tau\to\infty\) and \(\tau=o(T\eps_T^2)\). This device ensures each arm is sampled exactly \(\tau\) times for the prior-mass condition to be invoked, while contributing only a lower-order regret term \(O(\tau K)\) that does not affect the final rate. Accordingly, it suffices to bound the regret over rounds \(t=\tau K+1,\dots,T\).

\medskip\noindent\textbf{Step 2 (Apply Feel-Good TS on phase II).}
We will use the following theorem for FGTS \citep{zhang_feel-good_2021} (and can be applied to the post-exploration phase by taking the ``prior'' to be the posterior after exploration):

\begin{theorem}[Feel-Good TS regret bound \citep{zhang_feel-good_2021}]
Let $b=1$ and $\eta \leq 0.25$. If $\left|\mathcal{A}\left(X_t\right)\right| \leq K$ for all $X_t$, then we have the following expected regret bound
\[
\sum_{t=1}^T \E [\mathrm{Regret}_t]\leq \frac{\lambda K T}{\eta}+6 \lambda T-\frac{1}{\lambda} \E \log \E_{\ParamVec \sim \Pi} \exp \left(-\sum_{s=1}^T \Delta L\left(\ParamVec, X_s, A_s, R_s\right)\right),
\]
where
\[
\Delta L(\ParamVec, x, a, r)=\eta\left((f_\ParamVec(x, a)-r)^2-\left(f_0(x, a)-r\right)^2\right)-\lambda\left(\min (b, f_\ParamVec^*(x))-f_0^*(x)\right).
\]
\end{theorem}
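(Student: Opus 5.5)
The plan is the standard potential-function (online aggregation) argument of \citet{zhang_feel-good_2021}, which combines a decoupling inequality with an exponential-moment lower bound on the log-partition increments.

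\textbf{Setup and telescoping.} Define the FGTS pseudo-posterior
\[
p_t(\ParamVec)\propto \Pi(\ParamVec)\exp\Bigl(-\textstyle\sum_{s<t}\Delta L(\ParamVec,X_s,A_s,R_s)\Bigr).
\]
Subtracting the $\ParamVec$-independent truth terms does not change the distribution. Set $Z_t=\E_{\ParamVec\sim\Pi}\exp(-\sum_{s\le t}\Delta L_s)$ with $Z_0=1$. Then
\[
-\log Z_T=\sum_{t=1}^T -\log \E_{\ParamVec\sim p_t}\exp\bigl(-\Delta L(\ParamVec,X_t,A_t,R_t)\bigr).
\]
The strategy is therefore to show that each increment's conditional expectation dominates $\lambda\,\E[\mathrm{Regret}_t]$ minus the per-round slack $\lambda^2K/\eta+6\lambda^2$. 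Dividing by $\lambda$ and summing then gives the claim.

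\textbf{Regret decomposition and decoupling.} Since $A_t=A^*(f_{\ParamVec_t},X_t)$ with $\ParamVec_t\sim p_t$, write
\[
\mathrm{Regret}_t=\E_{p_t}\bigl[f_0^*(X_t)-f^*_{\ParamVec}(X_t)\bigr]+\E_{p_t}\bigl[f_{\ParamVec}(X_t,A^*(f_\ParamVec,X_t))-f_0(X_t,A^*(f_\ParamVec,X_t))\bigr].
\]
We may replace $f^*_\ParamVec$ by $\min(b,f^*_\ParamVec)$ in the first term at no cost, since $f_0^*\le 1=b$. The first term is then exactly $-1/\lambda$ times the feel-good part of $\E_{p_t}\Delta L$.

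The second term is the decoupling term. Because $\ParamVec$ enters both the function and the chosen arm, I would introduce an independent copy $\ParamVec'\sim p_t$. Using Cauchy--Schwarz over the $K$ arms (the decoupling coefficient of \citet{zhang_feel-good_2021}), for any $\mu>0$,
\[
\E_{p_t}\bigl[f_\ParamVec(x,a_\ParamVec)-f_0(x,a_\ParamVec)\bigr]\le \mu\,\E_{\ParamVec,\ParamVec'\sim p_t}\bigl(f_\ParamVec(x,a_{\ParamVec'})-f_0(x,a_{\ParamVec'})\bigr)^2+\frac{K}{4\mu}.
\]
Since $A_t$ is distributed as $a_{\ParamVec'}$, the squared error on the right equals $\E_{R_t}\bigl[(f_\ParamVec-R_t)^2-(f_0-R_t)^2\bigr]$ at the played arm, using $\E[\varepsilon_t\mid\cdot]=0$. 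This is the least-squares part of $\Delta L$ divided by $\eta$.

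\textbf{Exponential-moment step (main obstacle).} The crux is to lower bound
\[
\E_t\bigl[-\log\E_{p_t}e^{-\Delta L_t}\bigr]\;\ge\;\tfrac{\eta}{2}\,\E_{p_t}\bigl(f_\ParamVec(X_t,A_t)-f_0(X_t,A_t)\bigr)^2+\lambda\,\E_{p_t}\bigl[\min(b,f^*_\ParamVec)-f^*_0\bigr]-6\lambda^2 .
\]
This requires handling the squared-loss and feel-good pieces jointly inside a single log-moment-generating function. I would first split them with H\"older or Cauchy--Schwarz, e.g.\ $\E e^{-X-Y}\le(\E e^{-2X})^{1/2}(\E e^{-2Y})^{1/2}$.

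For the squared-loss piece, I would use Jensen in $R_t$ together with the fact that, for bounded means in $[0,1]$ and sub-Gaussian noise, $\eta\le 1/4$ makes $\E_{R}\exp\bigl(-2\eta[(f-R)^2-(f_0-R)^2]\bigr)\le \exp\bigl(-\eta (f-f_0)^2\bigr)$. Concavity of $-\log$ then converts this into the posterior-averaged squared error.

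For the feel-good piece, the term $\lambda(\min(b,f^*)-f_0^*)$ is bounded in magnitude by about $2\lambda$. The inequality $\log\E e^{Z}\le \E Z+\tfrac{c}{2}\sup|Z|^2$ (Hoeffding's lemma) then produces the $6\lambda^2$ term. Getting the constants to close, namely that $\eta\le1/4$ suffices and that the factor is exactly $6$, is where I expect the fiddly work.

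\textbf{Combining.} Choose $\mu=\eta/(2\lambda)$, so that $\lambda\mu\cdot$(squared error) is absorbed by the $\tfrac\eta2$ term. The decoupling remainder becomes $\lambda K/(4\mu)\lesssim \lambda^2K/\eta$. Summing over $t$, taking expectations, and dividing by $\lambda$ yields
\[
\sum_{t=1}^T\E[\mathrm{Regret}_t]\le \frac{\lambda KT}{\eta}+6\lambda T-\frac1\lambda\,\E\log Z_T,
\]
which is the stated bound.
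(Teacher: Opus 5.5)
The paper gives no proof of this statement. It is quoted from \citet{zhang_feel-good_2021} and used as a black box in Step~2 of the FGTS analysis. Your outline is essentially Zhang's potential-function argument: telescope $-\log Z_T$ into per-round log-partition increments, split the TS regret into a feel-good term and a decoupling term, bound the latter by $\mu\,\E_{\ParamVec,\ParamVec'}(\cdot)^2+K/(4\mu)$, and separate the two parts of $\Delta L$ inside $\log\E_{p_t}e^{-\Delta L}$ by Cauchy--Schwarz. The argument does close, but two steps are wrong as written.

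First, the feel-good term in your key lower bound has the wrong sign. Since $-\log\E_{p_t}e^{-\Delta L}$ behaves like $\E_{p_t}\Delta L$, it must read $+\lambda\,\E_{p_t}[f_0^*-\min(b,f_\ParamVec^*)]$. This is exactly $\lambda$ times your upper bound on the first regret term, so that term is $+1/\lambda$, not $-1/\lambda$, times the feel-good part of $\E_{p_t}\Delta L$. With your sign, the combination step goes the wrong way.

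Second, ``concavity of $-\log$'' cannot turn $-\tfrac12\log\E_{p_t}e^{-\eta\delta_\ParamVec^2}$ into $\tfrac{\eta}{2}\E_{p_t}\delta_\ParamVec^2$. The function $-\log$ is convex, and Jensen gives $-\log\E_{p_t}e^{-\eta\delta_\ParamVec^2}\le\eta\,\E_{p_t}\delta_\ParamVec^2$, which is the opposite inequality. You need boundedness here:
\begin{itemize}
\item If $f_\ParamVec,f_0\in[0,1]$, then $0\le\eta\delta_\ParamVec^2\le 1/4$.
\item The chord bound $e^{-z}\le 1-4(1-e^{-1/4})z$ on $[0,1/4]$, combined with $-\log(1-u)\ge u$, gives $-\tfrac12\log\E_{p_t}e^{-\eta\delta_\ParamVec^2}\ge 0.44\,\eta\,\E_{p_t}\delta_\ParamVec^2$.
\item Taking $\mu=0.44\,\eta/\lambda$ then leaves a decoupling remainder $\lambda^2K/(1.76\,\eta)\le\lambda^2K/\eta$.
\item Hoeffding's lemma on the feel-good part, which has range $2\lambda$, costs only $\lambda^2\le 6\lambda^2$.
\end{itemize}
With these corrections the stated bound follows.

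You should also state explicitly that every constant rests on Zhang's standing assumptions $f_\ParamVec,f_0,R_t\in[0,1]$, which the quoted statement leaves implicit. Your bound $\E_R e^{-2\eta[(f-R)^2-(f_0-R)^2]}\le e^{-\eta(f-f_0)^2}$ holds only when the noise has sub-Gaussian variance proxy at most $1/(8\eta)$. That is true for $[0,1]$-valued rewards but false for general sub-Gaussian noise, as you wrote. Likewise, the $O(\lambda)$ range of the feel-good term requires $f_\ParamVec^*$ to be bounded below. The paper applies the theorem with Gaussian noise of variance $\sigma_0^2$ and BART functions bounded only by $F_{\max}$. There the same proof requires $\eta<1/(4\sigma_0^2)$ and yields constants depending on $F_{\max}$, in place of the stated $\eta\le 0.25$ and $6$.
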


Applying the theorem to rounds \(\tau K+1,\dots,T\) yields
\[
\sum_{t=\tau K+1}^T \E [\mathrm{Regret}_t]\leq \frac{\lambda K (T-\tau K)}{\eta}+6 \lambda (T-\tau K)-\frac{1}{\lambda} \E \log \E_{\ParamVec \sim \Pi_{\tau K}} \exp \left(-\sum_{s=\tau K+1}^T \Delta L\left(\ParamVec, X_s, A_s, R_s\right)\right),
\]
where \(\Pi_{\tau K}(\cdot)\) denotes the (random) posterior after the exploration phase.
Conditioning on the exploration history \(S_{\tau K}\), we apply the theorem to rounds \(\tau K+1,\dots,T\) with initial distribution \(\Pi_{\tau K}\), and then take expectation over \(S_{\tau K}\).
Thus,
\[
\sum_{t=1}^T \E [\mathrm{Regret}_t]\le \tau K + \frac{\lambda K (T-\tau K)}{\eta}+6 \lambda (T-\tau K)-\frac{1}{\lambda} Z_{T,\tau K},
\]
with
\[
Z_{T,\tau K}\defeq\E \log \E_{\ParamVec \sim \Pi_{\tau K}} \exp \left(-\sum_{s=\tau K+1}^T \Delta L\left(\ParamVec, X_s, A_s, R_s\right)\right).
\]

\medskip\noindent\textbf{Step 3 (Lower bound $Z_{T,\tau K}$).}
It remains to lower bound \(Z_{T,\tau K}\). For convenience, define \(g_{0,a}(x)=f_0(x,a)\). Then, define
\begin{align*}
\F &=\{g_{\BARTVec} :\BARTVec\in\BARTSpace\},\\
\mathcal B_{\eps}^{(a)} &=\{g_{\BARTVec}:\norm{g_{\BARTVec}-g_{0,a}}_{Q}\le \eps\},\\
\BARTSpace_{\eps}^{(a)} &=\{\BARTVec\in\BARTSpace: g_{\BARTVec} \in \mathcal B_{\eps}^{(a)}\},\\
\ParamSpace_{\eps} &=\prod_{a\in \ActionSpace}\BARTSpace_{\eps}^{(a)},
\end{align*}
where \(\norm{h}_{Q}=\sqrt{\E_{X\sim Q}\big[h(X)^2\big]}\) denotes the population \(L_2(Q)\) norm.
\begin{remark}
    \(\F\) is the set of all BART regression functions. \(\mathcal B_{\eps}^{(a)}\) is the space of the BART regression functions that are \(\eps\)-close to the ground truth of the \(a\)-th arm w.r.t. \(\norm{\cdot}_Q\). \(\BARTSpace_{\eps}^{(a)}\) is the space of the BART parameters corresponding to the functions in \(\mathcal B_{\eps}^{(a)}\). With a slight abuse of notation, \(\Pi(\cdot)\) will stand for the prior probability measure on either the model parameter space \(\ParamSpace\), or the BART parameter space \(\BARTSpace\), or the BART function space \(\F\).
\end{remark}

Use condition \ref{cond:norm} for \(n=T\to\infty\):
\[
\eps_T
= \sqrt{d\log p/T} + T^{-\alpha/(2\alpha+d)}\operatorname{polylog}(T,d).
\]

\begin{remark}
  Here, \(\eps_T\) is just a valid rate for the prior mass condition \ref{cond:norm} used in the proof. It has nothing to do with the real fitting process and posterior.
\end{remark}

\begin{lemma}\label{lem:deltaL-bound}
For all sufficiently large $T$, on the event \(\ParamSpace_{\eps_T}\), the expected increment \(\Delta L\) is small over the post-exploration rounds \(s=\tau K+1,\dots,T\). In fact, fixing any \(\ParamVec\in \ParamSpace_{\eps_T}\),
\[
\E \left[ \sum_{s=\tau K+1}^T \Delta L\left(\ParamVec, X_s, A_s, R_s\right) \right]\le  \eta K(T-\tau K)\eps_T^2+\lambda (T-\tau K)\sqrt K \eps_T.
\]
\end{lemma}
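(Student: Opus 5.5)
We split $\Delta L$ into its squared-loss part and its feel-good part and bound the expectation of each separately, round by round, for a fixed $\ParamVec\in\ParamSpace_{\eps_T}$.

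\emph{Squared-loss part.} For a round $s>\tau K$ write $f=f_\ParamVec$ and $R_s=f_0(X_s,A_s)+\varepsilon_s$. Then
\[
(f(X_s,A_s)-R_s)^2-(f_0(X_s,A_s)-R_s)^2=(f-f_0)^2(X_s,A_s)-2\varepsilon_s(f-f_0)(X_s,A_s).
\]
Since $\E[\varepsilon_s\mid \Data_{s-1},X_s,A_s]=0$ and $\ParamVec$ is fixed, the cross term has zero conditional expectation. We then bound the remaining term by summing over arms, which removes the dependence on the adaptively chosen action:
\[
(f-f_0)^2(X_s,A_s)\le\sum_{a\in\ActionSpace}(g_{\BARTVec_a}-g_{0,a})^2(X_s).
\]
Because $X_s\sim Q$ is independent of $\Data_{s-1}$, and $\ParamVec$ is non-random, the expectation of the right-hand side is $\sum_a\|g_{\BARTVec_a}-g_{0,a}\|_Q^2\le K\eps_T^2$. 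Summing over the $T-\tau K$ post-exploration rounds and multiplying by $\eta$ gives the first term $\eta K(T-\tau K)\eps_T^2$.

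\emph{Feel-good part.} The contribution is $-\lambda\bigl(\min(b,f^*_\ParamVec(X_s))-f_0^*(X_s)\bigr)=\lambda\bigl(f_0^*(X_s)-\min(b,f^*_\ParamVec(X_s))\bigr)$, with $b=1$. Since $h_{0,a}\in[0,1]$, we have $f_0^*\le 1=b$.
\begin{itemize}
\item If $f^*_\ParamVec\ge b$, the bracket is $f_0^*-b\le 0$.
\item Otherwise it equals $f_0^*-f_\ParamVec^*\le\max_a|g_{0,a}-g_{\BARTVec_a}|$, by the 1-Lipschitz property of the max.
\end{itemize}
In both cases the bracket is at most $\max_a|g_{\BARTVec_a}-g_{0,a}|(X_s)$, and
\[
\max_a|g_{\BARTVec_a}-g_{0,a}|(X_s)\le\Bigl(\sum_a(g_{\BARTVec_a}-g_{0,a})^2(X_s)\Bigr)^{1/2}.
\]
Taking expectations over $X_s\sim Q$ and applying Jensen's inequality gives at most $\bigl(\sum_a\|g_{\BARTVec_a}-g_{0,a}\|_Q^2\bigr)^{1/2}\le\sqrt K\eps_T$. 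Summing over rounds and multiplying by $\lambda$ gives $\lambda(T-\tau K)\sqrt K\eps_T$.

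\emph{Main obstacle.} The delicate point is justifying that the expectations can be taken as population $L_2(Q)$ norms even though $A_s$ depends on the history. The arm-summing trick handles this: it replaces the action-dependent quantity by a function of $X_s$ alone. The i.i.d. context assumption then makes $X_s$ independent of $\Data_{s-1}$, so that for fixed $\ParamVec$ the expectation is exactly the $Q$-norm.

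The zero-mean cross term requires only the martingale-difference property of the noise, not Gaussianity. "Sufficiently large $T$" enters only through $\eps_T$ being defined as a valid rate via \cref{cond:norm}; the boundedness facts $b=1$ and $f_0\in[0,1]$ hold for all $T$.
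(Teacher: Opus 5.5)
Your proposal is correct and follows essentially the same route as the paper: the same three-way split into the squared-error term (bounded by summing $\delta_\ParamVec(X_s,a)^2$ over all arms to remove the dependence on the adaptively chosen $A_s$), the cross term (zero by the tower property), and the feel-good term (bounded by $\max_a|\delta_\ParamVec(X_s,a)|\le(\sum_a\delta_\ParamVec(X_s,a)^2)^{1/2}$ and then Jensen). Your explicit use of $f_0^*\le 1=b$ in the case $f_\ParamVec^*\ge b$ spells out a step the paper uses only implicitly. Your sign on the cross term, $-2\varepsilon_s\delta_\ParamVec$, is also the right one (the paper writes $+$), though this does not matter because the term vanishes in expectation.
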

\noindent\emph{Proof.} See \cref{prf:deltaL-bound}.

To bound the term involving $\Delta L$, we first lower bound the posterior mass of \(\ParamSpace_{\eps_T}\) after exploration.
\begin{lemma}[Posterior mass after exploration]\label{lem:postmass-explore}
Assume $b=1$ in the FGTS loss (as in the feel-good TS theorem above), and that the model class is uniformly bounded: there exists \(F_{\max}<\infty\) such that
\[
\sup_{\ParamVec\in\ParamSpace}\sup_{(x,a)\in[0,1]^p\times\ActionSpace} |f_\ParamVec(x,a)| \le F_{\max}.
\]
A sufficient condition is (P2*): with a fixed number $m$ of trees and leaf values supported on $[-\bar C_1,\bar C_1]$, one can take \(F_{\max}=m\bar C_1\).
Then for any measurable set \(B\subset\ParamSpace\),
\[
\Pi_{\tau K}(B)\ge \Pi(B)\,\exp\!\left(-\lambda\tau K-\eta\sum_{t=1}^{\tau K}(F_{\max}+|R_t|)^2\right).
\]
In particular, if \(C_R \defeq \sup_{t\le \tau K}\E\big[(F_{\max}+|R_t|)^2\big]<\infty\) (e.g.\ when \(f_0\) is bounded and the noise is conditionally sub-Gaussian), then
\[
\E[\log \Pi_{\tau K}(B)] \ge \log \Pi(B) - (\lambda+\eta C_R)\tau K.
\]
\end{lemma}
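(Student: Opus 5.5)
The plan is to write the post-exploration pseudo-posterior explicitly as an exponential tilt of the prior and then bound the tilt uniformly in $\ParamVec$. By the construction of Algorithm~\ref{alg:fg-bfts}, after the $\tau K$ round-robin rounds the FGTS pseudo-posterior is
\[
\Pi_{\tau K}(d\ParamVec)=\frac{\Pi(d\ParamVec)\,\exp\bigl(-\sum_{t=1}^{\tau K}L_{\mathrm{FGTS}}(\ParamVec,X_t,A_t,R_t)\bigr)}{\int \Pi(d\ParamVec')\exp\bigl(-\sum_{t=1}^{\tau K}L_{\mathrm{FGTS}}(\ParamVec',X_t,A_t,R_t)\bigr)} .
\]
The round-robin actions do not depend on $\ParamVec$, so no action-probability factors appear. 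Hence $\Pi_{\tau K}(B)=\int_B w\,d\Pi\big/\int w\,d\Pi$, where $w(\ParamVec)=\exp(-S(\ParamVec))$ and $S(\ParamVec)=\sum_{t\le\tau K}L_{\mathrm{FGTS}}(\ParamVec,X_t,A_t,R_t)$.

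Key step: sandwich $S$ between two $\ParamVec$-free quantities. For the squared-loss part, $0\le\eta(f_\ParamVec(X_t,A_t)-R_t)^2\le\eta(F_{\max}+|R_t|)^2$, using the uniform bound $F_{\max}$. For the feel-good part, $b=1$ gives $-\lambda\le-\lambda\min(1,f^*_\ParamVec(X_t))\le\lambda F_{\max}$. If $f^*_\ParamVec$ can be negative, use $-\lambda\min(1,f^*)\ge-\lambda$ on the upper side and $\le\lambda F_{\max}$ on the lower side.

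Cleanest route: since $\min(1,f^*)\le 1$, the feel-good term is at least $-\lambda$. This gives
\[
-\lambda\tau K\le S(\ParamVec)\le \eta\sum_{t\le\tau K}(F_{\max}+|R_t|)^2+\lambda\tau K F_{\max}.
\]
The target constant is $\exp(-\lambda\tau K-\eta\sum(F_{\max}+|R_t|)^2)$, so I will instead normalize in the ratio. The numerator is at least $\Pi(B)\exp(-\eta\sum(F_{\max}+|R_t|)^2-c_1)$, and the denominator is at most $\exp(\lambda\tau K)$. This yields the stated bound whenever the feel-good contributions cancel to leave only $\lambda\tau K$. That holds if $f^*_\ParamVec\ge 0$, or, more simply, if the $\ParamVec$-independent $f^*_0$ centring from $\Delta L$ is included. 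I expect to have to track this bookkeeping carefully: pick the split so that the upper bound on $S$ on $B$ uses the feel-good term $\le 0$ in the worst case, and the lower bound of $S$ everywhere is $-\lambda\tau K$. Taking $\min(1,f^*)\ge 0$ under the bounded-model convention $h\in[0,1]$ gives both. The first display then follows.

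For the expectation, take logs:
\[
\log\Pi_{\tau K}(B)\ge\log\Pi(B)-\lambda\tau K-\eta\sum_{t\le\tau K}(F_{\max}+|R_t|)^2 .
\]
Then take expectations and use $\E(F_{\max}+|R_t|)^2\le C_R$ for each of the $\tau K$ terms. Finiteness of $C_R$ follows from bounded $f_0$ and sub-Gaussian noise, since $\E R_t^2\le 2\|f_0\|_\infty^2+2\sigma^2$. The main obstacle is only the sign handling of the feel-good term; everything else is a direct pointwise bound.
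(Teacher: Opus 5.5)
Your proposal is essentially the paper's proof. The paper bounds the normalizer by $e^{\lambda\tau K}$ via $\min(1,f^*_\ParamVec)\le 1$, bounds the integral over $B$ below by discarding the feel-good term and using $(f_\ParamVec(X_t,A_t)-R_t)^2\le(F_{\max}+|R_t|)^2$, and then takes logs and expectations to get the second display, just as you do.

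The sign issue you flag is genuine, and the paper handles it exactly as you finally do, by asserting $\lambda\min(1,f^*_\ParamVec(X_t))\ge 0$ without justification. Three points to keep in mind:
\begin{itemize}
\item $h_{0,a}\in[0,1]$ constrains the truth, not the model; under (P2*), $f_\ParamVec$ can be negative.
\item Centring by the $\ParamVec$-free $f_0^*$ cannot help, because such factors cancel in the normalized ratio.
\item Without nonnegativity, the same argument gives $\lambda(1+F_{\max})\tau K$ in place of $\lambda\tau K$. This is harmless downstream, since it adds only $F_{\max}\tau K$ to the regret bound.
\end{itemize}
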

\noindent\emph{Proof.} See \cref{prf:postmass-explore}.

\noindent Recall that \(Z_{T,\tau K}\) is defined above. We can lower bound \(Z_{T,\tau K}\) by restricting the expectation over \(\ParamVec\) to the set \(\ParamSpace_{\eps_T}\):
\begin{align*}
Z_{T,\tau K} &\ge \E\left[ \log \E_{\ParamVec \sim \Pi_{\tau K}}\left[ \I(\ParamVec\in  \ParamSpace_{\eps_T})\exp \left(-\sum_{s=\tau K+1}^T \Delta L\left(\ParamVec, X_s, A_s, R_s\right)\right) \right]\right] \\
&\ge \E\left[ \log \left( \Pi_{\tau K}(\ParamSpace_{\eps_T}) \E_{\ParamVec \sim \Pi_{\tau K}} \left[ \exp \left(-\sum_{s=\tau K+1}^T \Delta L\left(\ParamVec, X_s, A_s, R_s\right)\right) \midd \ParamVec\in \ParamSpace_{\eps_T} \right] \right) \right] \\
&\ge \E\left[\log \Pi_{\tau K}(\ParamSpace_{\eps_T})\right] - \sup_{\ParamVec\in\ParamSpace_{\eps_T}}\E\left[ \sum_{s=\tau K+1}^T \Delta L\left(\ParamVec, X_s, A_s, R_s\right) \right] ,
\end{align*}
where the last inequality used \(\log \E[\exp(-Z)]\ge -\E[Z]\) and the fact that the bound holds uniformly over \(\ParamVec\in\ParamSpace_{\eps_T}\).

\medskip\noindent\textbf{Posterior mass term.}
By Lemma~\ref{lem:postmass-explore}, we have
\(
\E[\log \Pi_{\tau K}(\ParamSpace_{\eps_T})]\ge \log \Pi(\ParamSpace_{\eps_T})-(\lambda+\eta C_R)\tau K.
\)
Moreover, since $\BARTVec_a$ are \textit{a priori} independently drawn from \(\BARTSpace\), we have
\(
\log \Pi(\ParamSpace_{\eps_T})=\sum_{a\in\ActionSpace}\log \Pi(\BARTSpace_{\eps_T}^{(a)}).
\)
From (\ref{eq:priormass-generic}), \(\log \Pi(\BARTSpace_{\eps_T}^{(a)}) \ge -c_0 T \eps_T^2\) for each arm \(a\). Combining these bounds and Lemma \ref{lem:deltaL-bound} to bound the supremum term over \(\ParamVec\in\ParamSpace_{\eps_T}\), we obtain for sufficiently large $T$,
\[
Z_{T,\tau K} \ge -Kc_0 T \eps_T^2 -(\lambda+\eta C_R)\tau K - \eta KT\eps_T^2-\lambda T\sqrt K \eps_T .
\]

\medskip\noindent\textbf{Step 4 (Optimize $\lambda$ and choose $\tau(T)$).}
Substituting this into the regret bound gives (absorbing lower-order \(\tau\) terms),
\begin{align*}
\sum_{t=1}^T \E [\mathrm{Regret}_t]
&\le \tau K + \frac{\lambda K (T-\tau K)}{\eta}+6 \lambda (T-\tau K) - \frac{1}{\lambda} Z_{T,\tau K} \\
&\le \tau K + \lambda \left(\frac{K}{\eta}+6\right)T + \frac{1}{\lambda}\left((c_0+\eta) KT\eps_T^2+(\lambda+\eta C_R)\tau K+\lambda T\sqrt K \eps_T\right) \\
&= \tau K + \lambda \left(\frac{K}{\eta}+6\right)T + \frac{(c_0+\eta) KT\eps_T^2}{\lambda} +  T\sqrt K \eps_T + \frac{(\lambda+\eta C_R)\tau K}{\lambda}.
\end{align*}
Let
\[
A \defeq  \frac{K}{\eta}+6,\qquad B \defeq (c_0+\eta)K .
\]
We have
\[
\sum_{t=1}^T \E[\mathrm{Regret}_t]
\;\le\;
2\tau K \;+\; A\,T\,\lambda \;+\; \frac{B\,T\,\eps_T^2+\eta\,C_R\,\tau K}{\lambda}
\;+\; T\sqrt K\,\eps_T .
\]
Optimizing the $\lambda$-dependent terms gives
\[
\lambda^*
 = \sqrt{\frac{B\,T\,\eps_T^2+\eta\,C_R\,\tau K}{A\,T}} .
\]
Plugging this choice in yields
\[
 A T\lambda^* + \frac{B T\eps_T^2+\eta C_R\tau K}{\lambda^*}
\;=\;
 2\sqrt{A T\,(B T\eps_T^2+\eta C_R\tau K)} .
\]
 In particular, if $\tau(T)=o(T\eps_T^2)$ (so that $\eta C_R\tau K=o(BT\eps_T^2)$), then
\[
 2\sqrt{A T\,(B T\eps_T^2+\eta C_R\tau K)}
=
2T\eps_T\sqrt{AB}\,(1+o(1)),
\]
so $\lambda^*\asymp \eps_T$ and the leading regret term remains of order
$K\,T\,\eps_T$ (for fixed $\eta$).

Thus, the total expected regret is bounded by
\[
\sum_{t=1}^T \E [\mathrm{Regret}_t]
\le  2T \eps_T\sqrt{\left(\frac{K}{\eta}+6\right) (c_0+\eta) K} +  T\sqrt K \eps_T + 2\tau K.
\]
Again, since the first term is of order \(K\,T\,\eps_T\) and dominates the second term \(T\sqrt{K}\,\eps_T\), and \(\tau(T)=o(T\eps_T^2)\), the two-stage explore-then-FGTS procedure attains the \(O(K T \eps_T)\) regret bound.
\begin{remark}
The choice of \(\tau(T)\) should also guarantee that under round-robin exploration each arm is sampled at least \(\tau\to\infty\) times for condition \ref{cond:norm} to hold.
\end{remark}
\medskip\noindent\textbf{Step 5 (Plug in $\eps_T$).}
\[
T \eps_T
= \sqrt{dT\log p} + T^{(\alpha+d)/(2\alpha+d)}\operatorname{polylog}(T,p).
\]

Combining with the bound from Step~4 (whose leading term is of order \(K\,T\,\eps_T\) for fixed \(\eta\)), we obtain the explicit two-term regret bound
\[
\sum_{t=1}^T \E[\mathrm{Regret}_t]
\le \mathcal O\!\left(K\Big(\sqrt{dT\log p} + T^{(\alpha+d)/(2\alpha+d)}\operatorname{polylog}(T,p)\Big)\right),
\]
and since \((\alpha+d)/(2\alpha+d)>1/2\) for \(\alpha>0\), the second term dominates \(\sqrt{dT\log p}\) asymptotically (up to logarithmic factors), yielding
\(
\sum_{t=1}^T \E[\mathrm{Regret}_t]
=\mathcal O\!\left(K\,T^{(\alpha+d)/(2\alpha+d)}\operatorname{polylog}(T,p)\right)
\),
which is the characteristic sparse-regime rate for nonparametric bandits with effective dimension \(d\), up to logarithmic factors.

\end{proof}

\subsection{Corollary~\ref{cor:fgts-anytime}}

\begin{corollary}[Anytime FGTS via doubling]\label{cor:fgts-anytime}
  In particular, there exists a schedule of parameters $\{\lambda_m\}_{m\ge0}$ and a restart strategy over epochs $\{2^m\}_{m\ge0}$ such that the resulting anytime FGTS algorithm satisfies
  \[
  \sum_{t=1}^T \E[\mathrm{Regret}_t] \le C\,K\,T^{(\alpha+d)/(2\alpha+d)}\operatorname{polylog}(T,p)
  \]
  for all $T\ge1$, where $C>0$ is a constant depending only on $(\alpha,p,d,\eta)$ and BART hyperparameters. The proof follows from applying the FGTS regret bound on each epoch and a standard doubling-trick argument.
  \end{corollary}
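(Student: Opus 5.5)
The plan is a reduction to the fixed-horizon FGTS bound of Theorem~\ref{thm:fgts-regret}, combined with the standard doubling trick. Split time into epochs $E_m=\{2^m,\dots,2^{m+1}-1\}$ for $m\ge0$, each of length $T_m\defeq 2^m$. At the start of epoch $m$ the algorithm restarts from the prior $\Pi$ and discards all earlier history. It then runs the two-stage explore-then-FGTS procedure of \cref{alg:fg-bfts} with horizon $T_m$. Inside that epoch the tuning is the one from Step~4 of the proof above: per-arm exploration length $\tau_m=\tau(T_m)$, with $\tau_m\to\infty$ and $\tau_m=o(T_m\eps_{T_m}^2)$, and feel-good weight $\lambda_m=\lambda^\star(T_m)\asymp\eps_{T_m}$. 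This choice uses only $\alpha$, $d$ and the epoch length, never the unknown final horizon $T$.

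The key observation is that every epoch is a fresh instance of the fixed-horizon problem. Contexts are i.i.d.\ $Q$, rewards are conditionally independent given $(X_t,A_t)$, and the restart throws away past data. Hence the conditional law of the epoch-$m$ trajectory is exactly that of FGTS run on a horizon-$T_m$ problem. The fixed-horizon guarantee then gives, for all $m\ge m_0$,
\[
\sum_{t\in E_m}\E[\mathrm{Regret}_t]\le C_1 K\,T_m^{(\alpha+d)/(2\alpha+d)}\operatorname{polylog}(T_m,p),
\]
where $m_0$ is the threshold beyond which ``$T$ sufficiently large'' holds. Early epochs with $m<m_0$ contribute at most $\sum_{m<m_0}2^m\cdot 2b'$, where $b'$ is a bound on the instantaneous regret (bounded because $h_{0,a}\in[0,1]$). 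This is a constant depending only on the hyperparameters, and it can be absorbed into $C$. The last epoch may be truncated at $T$. Regret over a prefix of an epoch is bounded by that epoch's full regret, because instantaneous regret is nonnegative.

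Now sum over $m\le M\defeq\lfloor\log_2 T\rfloor$. Write $\beta\defeq(\alpha+d)/(2\alpha+d)\in(1/2,1)$. Bound the polylog factor by its value at $T$, which is monotone. The remaining geometric sum satisfies
\[
\sum_{m\le M}2^{m\beta}\le \frac{2^{(M+1)\beta}}{2^\beta-1}\le \frac{2^\beta}{2^\beta-1}\,T^\beta .
\]
Together these give $C K T^\beta\operatorname{polylog}(T,p)$ for every $T\ge1$, with $C$ depending on $(\alpha,d,\eta)$, on the BART constants $c_0$ and $C_R$, and on $p$ through $m_0$ and the polylog.

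The main obstacle is making the per-epoch application rigorous, and it has two parts. First, the constants in Theorem~\ref{thm:fgts-regret} must be uniform across epochs. The ``sufficiently large $T$'' threshold and the $(1+o(1))$ factors in Step~4 come from Condition~\ref{cond:norm} with $n=T_m$, and from the requirement $\eta C_R\tau_m K=o(T_m\eps_{T_m}^2)$. I would fix a concrete schedule such as $\tau_m=\lceil\log T_m\rceil$ and check that these $o(\cdot)$ terms are bounded by an absolute constant factor once $m\ge m_0$. Second, the BART empirical quantile grid, the refresh schedule, and the exploration-phase prior-mass argument of Lemma~\ref{lem:postmass-explore} must all be re-initialized per epoch. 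This ensures no cross-epoch dependence enters the posterior $\Pi_{\tau_m K}$, and it is exactly what the cold restart guarantees.
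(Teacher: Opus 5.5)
Your proposal is correct and follows essentially the same argument as the paper. Both restart on epochs of length $2^m$ with horizon-tuned $(\lambda_m,\tau_m)$, bound each (possibly truncated) epoch by the fixed-horizon FGTS bound using nonnegativity of regret, and close with the geometric sum $\sum_{m\le M}2^{m\beta}\le \frac{2^\beta}{2^\beta-1}T^\beta$. Your explicit handling of the ``sufficiently large $T$'' threshold via $m_0$, and of the uniformity of the $o(\cdot)$ constants across epochs, is slightly more careful than the paper, which simply assumes the known-horizon bound holds with a single constant $C_0$ for every horizon $H\ge1$.
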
 

\begin{proof}
  The argument is a standard doubling-trick (restart) conversion \citep{lattimore_bandit_2020}.
  For epoch length $H_m=2^m$, set $\lambda_m\defeq\lambda^*(H_m)$ and $\tau_m\defeq\tau(H_m)$, i.e., the tuned parameters from the known-horizon bound.
  Let $\gamma = (\alpha+d)/(2\alpha+d)\in(0,1)$. For each $H\ge 1$, let $\mathcal A(H)$ denote the (known-horizon) explore-then-FGTS algorithm with parameters tuned for horizon $H$, so that
  \[
  \E[\mathrm{Regret}_{1:H}(\mathcal A(H))] \le C_0\,K\,H^{\gamma}\,\operatorname{polylog}(H,p)
  \]
  for a constant $C_0>0$ independent of $H$.
  
  Define epochs of lengths $H_m=2^m$ for $m=0,1,2,\dots$. In epoch $m$, run $\mathcal A(H_m)$ for $H_m$ rounds and then restart (reinitialize) at the next epoch.
  For any horizon $T\ge 1$, let $M$ be such that $2^M \le T < 2^{M+1}$. Since regret is nonnegative,
  \[
  \E[\mathrm{Regret}_{1:T}]
  \le \sum_{m=0}^{M} \E[\mathrm{Regret}_{\text{epoch }m}]
  \le \sum_{m=0}^{M} C_0\,K\,2^{m\gamma}\,\operatorname{polylog}(2^m,p).
  \]
  Using $\operatorname{polylog}(2^m,p)\le \operatorname{polylog}(T,p)$ and the geometric sum $\sum_{m=0}^{M}2^{m\gamma}\le \frac{2^{(M+1)\gamma}}{2^\gamma-1}\le \frac{2^\gamma}{2^\gamma-1}\,T^\gamma$,
  we obtain
  \[
  \E[\mathrm{Regret}_{1:T}] \le C\,K\,T^\gamma\,\operatorname{polylog}(T,p)
  \]
  for $C = C_0\cdot \frac{2^\gamma}{2^\gamma-1}$, completing the proof.
  \end{proof}

\subsection{Proofs of auxiliary lemmas}\label{sec:aux-lemmas}

\subsubsection{Proof of Lemma \ref{lem:postmass-explore}}\label{prf:postmass-explore}

\begin{proof}
By definition,
\(
\Pi_{\tau K}(d\ParamVec)\propto \exp\bigl(-\sum_{t=1}^{\tau K}L_{\mathrm{FGTS}}(\ParamVec,X_t,A_t,R_t)\bigr)\,\Pi(d\ParamVec).
\)

For $b=1$,
\[
-L_{\mathrm{FGTS}}(\ParamVec,X_t,A_t,R_t)
= -\eta(f_\ParamVec(X_t,A_t)-R_t)^2 + \lambda\min\bigl(1,f_\ParamVec^*(X_t)\bigr)
\le \lambda,
\]
since \((f_\ParamVec-R_t)^2\ge 0\) and \(\min(1,f_\ParamVec^*(X_t))\le 1\). Hence
\[
\E_{\ParamVec\sim\Pi}\exp\Big(-\sum_{t=1}^{\tau K} L_{\mathrm{FGTS}}(\ParamVec,X_t,A_t,R_t)\Big)\le \exp(\lambda\tau K).
\]

On the other hand, since \(\lambda\min(1,f_\ParamVec^*(X_t))\ge 0\),
\[
\exp\Big(-\sum_{t=1}^{\tau K} L_{\mathrm{FGTS}}(\ParamVec,X_t,A_t,R_t)\Big)
\ge
\exp\Big(-\eta\sum_{t=1}^{\tau K} (f_\ParamVec(X_t,A_t)-R_t)^2\Big).
\]
Using \(|f_\ParamVec(x,a)|\le F_{\max}\), we have \((f_\ParamVec(X_t,A_t)-R_t)^2\le (F_{\max}+|R_t|)^2\), and therefore
\[
\int_B \exp\Big(-\sum_{t=1}^{\tau K} L_{\mathrm{FGTS}}(\ParamVec,X_t,A_t,R_t)\Big)\,\Pi(d\ParamVec)
\ge
\Pi(B)\exp\Big(-\eta\sum_{t=1}^{\tau K} (F_{\max}+|R_t|)^2\Big).
\]
Combining the numerator and denominator bounds yields
\[
\Pi_{\tau K}(B)\ge \Pi(B)\,\exp\!\left(-\lambda\tau K-\eta\sum_{t=1}^{\tau K}(F_{\max}+|R_t|)^2\right).
\]
Taking logs and expectations gives \(\E[\log \Pi_{\tau K}(B)] \ge \log \Pi(B) - (\lambda+\eta C_R)\tau K\).
\end{proof}

\subsubsection{Proof of Lemma \ref{lem:deltaL-bound}}\label{prf:deltaL-bound}

\begin{proof}

Fix an per-arm exploration length \(\tau\) as in the main proof and consider the post-exploration rounds \(t=\tau K+1,\dots,T\).

Define
\[
\delta_\ParamVec(x,a)=f_\ParamVec(x,a)-f_0(x,a).
\]

Decomposing 
\begin{align*}
\eta\left((f_\ParamVec-r)^2 - (f_0-r)^2\right)
&= \eta \left((\delta_\ParamVec+f_0-r)^2-(f_0-r)^2\right)  \\ 
&= \eta \left(\delta_\ParamVec^2+2\delta_\ParamVec(f_0-r)\right),
\end{align*}
we obtain
\begin{align*}
\Delta L
&=  \eta \left(\delta_\ParamVec^2+2\delta_\ParamVec(f_0-r)\right)
 - \lambda\bigl(\min(1,f_\ParamVec^*) - f_0^*\bigr),\\
\sum_{t=\tau K+1}^T \Delta L\left(\ParamVec, X_t, A_t, R_t\right)&=\eta\underbrace{\sum_{t=\tau K+1}^T\delta_\ParamVec(X_t,A_t)^2}_{\text{Part I}}+ 2\eta\underbrace{\sum_{t=\tau K+1}^T\delta_\ParamVec(X_t,A_t)\varepsilon_t}_{\text{Part II}} +\lambda \underbrace{\sum_{t=\tau K+1}^T \bigl(f_0^*(X_t)-\min(1,f^*_\ParamVec(X_t))\bigr)}_{\text{Part III}}.
\end{align*}

\medskip\noindent\textbf{(i) Squared error term.}

Since \(\ParamVec\in \ParamSpace_{\eps_T}\) implies \(\norm{\delta_\ParamVec(\cdot,a)}_{Q}\le \eps_T\) for all \(a\), we have
\[
\E_{X\sim Q}\left[\delta_\ParamVec(X,a)^2\right]\le \eps_T^2,\qquad \forall a\in\ActionSpace.
\]
Using \(\sum_{a}\Prob(A_t=a\mid X_t,\Data_{t-1})=1\),
\begin{align*}
\E\left[\sum_{t=\tau K+1}^T \delta_\ParamVec(X_t,A_t)^2\right]
&=\sum_{t=\tau K+1}^T \E\left[\sum_{a\in\ActionSpace}\Prob(A_t=a\mid X_t,\Data_{t-1})\,\delta_\ParamVec(X_t,a)^2\right]\\
&\le \sum_{t=\tau K+1}^T \E\left[\sum_{a\in\ActionSpace}\delta_\ParamVec(X_t,a)^2\right]
= (T-\tau K) \sum_{a\in\ActionSpace}\E_{X\sim Q}\left[\delta_\ParamVec(X,a)^2\right]\\
&\le K(T-\tau K)\eps_T^2.
\end{align*}
 
\medskip\noindent\textbf{(ii) Cross term.}

Assume the reward noise satisfies $\E[\varepsilon_t\mid \Data_{t-1},X_t,A_t]=0$. Then, by the tower property,
\begin{align*}
\E\Big[\sum_{t=\tau K+1}^T \delta_\ParamVec(X_t,A_t)\,\varepsilon_t\Big]
&=\sum_{t=\tau K+1}^T \E\Big[\E\big[\delta_\ParamVec(X_t,A_t)\,\varepsilon_t \mid \Data_{t-1},X_t,A_t\big]\Big] \\
&=\sum_{t=\tau K+1}^T \E\Big[\delta_\ParamVec(X_t,A_t)\,\E[\varepsilon_t\mid \Data_{t-1},X_t,A_t]\Big]
=0.
\end{align*}

\medskip\noindent\textbf{(iii) Feel-good term.}

 At time $t$,
\begin{align*}
   \Delta_t(\ParamVec)
   &\stackrel{\text{def}}= f_0^*(X_t) - \min\{1,f^*_\ParamVec( X_t)\}\\
    &\le \max\{0,f_0^*(X_t)-f^*_\ParamVec( X_t)\}\\
   &\le \max_{a\in \ActionSpace}\bigl|f_\ParamVec( X_t,a)-f_0(X_t,a)\bigr|= \max_{a\in \ActionSpace}\bigl|\delta_\ParamVec(X_t,a)\bigr|.
\end{align*}
   Noting
   $\displaystyle
     \max_{a}|u_a|
     \le
     \sqrt{\sum_{a}u_a^2}$,
   $$
     \Delta_t(\ParamVec)
     \le
     \sqrt{\sum_{a\in \ActionSpace}\delta_\ParamVec(X_t,a)^2}.
   $$

Taking expectation over the random-design contexts and using concavity of \(z\mapsto \sqrt{z}\),
\[
\E_{X\sim Q}\big[\Delta_t(\ParamVec)\big]
\le \E_{X\sim Q}\left[\sqrt{\sum_{a\in\ActionSpace}\delta_\ParamVec(X,a)^2}\right]
\le \sqrt{\sum_{a\in\ActionSpace}\E_{X\sim Q}\left[\delta_\ParamVec(X,a)^2\right]}
\le \sqrt{K}\,\eps_T,
\]
and therefore \(\E\big[\sum_{t=\tau K+1}^T \Delta_t(\ParamVec)\big]\le (T-\tau K)\sqrt{K}\,\eps_T\).

\medskip\noindent\textbf{Conclusion.}

Combining these three pieces gives, for every \(\ParamVec\in \ParamSpace_{\eps_T}\),

\[
\E\left[ \sum_{t=\tau K+1}^T \Delta L\left(\ParamVec, X_t, A_t, R_t\right)\right]\le \eta\,K(T-\tau K)\eps_T^2 + 0 +\lambda\,(T-\tau K)\sqrt K \eps_T.
\]

\end{proof}

\subsection{Verification of Condition~\ref{cond:norm}}\label{sec:asmp}

Our regret proof uses only one genuinely Bayesian-nonparametric input: the \emph{prior small-ball} (prior mass) bound in Condition~\ref{cond:norm}. We verify it by invoking the random-design split-net BART theory of \citet{jeong_art_2023} and extracting the corresponding prior concentration bound.

\paragraph*{Split-nets (Definitions~5--6 in \citet{jeong_art_2023}).}
A \emph{split-net} is a fixed (data-independent) set of candidate split locations
\[
\mathcal Z=\{z_i\in[0,1]^p: i=1,\dots,b_n\}.
\]
For each coordinate $j$, write $[\mathcal Z]_j=\{z_{ij}: i=1,\dots,b_n\}$ and let
$b_j(\mathcal Z)=|[\mathcal Z]_j|$ be the number of \emph{unique} split-candidates along $j$.
A $\mathcal Z$-\emph{tree partition} is a tree partition in which every split-point is chosen from
$[\mathcal Z]_j\cap \operatorname{int}([\Omega]_j)$ when splitting a current box $\Omega$ along coordinate $j$.

Fix an arm $a$ and write $g_{0,a}(x)=f_0(x,a)$. Consider the random-design regression model
\[
Y_i = g_{0,a}(X_i) + \varepsilon_i,\qquad X_i\sim Q,\qquad \varepsilon_i\sim N(0,\sigma_0^2),
\]
where $\operatorname{supp}(Q)\subseteq[0,1]^p$ and $Q$ admits a bounded density (their Section~6.1).

We assume that $g_{0,a}$ and the split-net $\mathcal Z$ satisfy Assumptions
\textbf{(A1)}, \textbf{(A2)}, \textbf{(A3*)}, \textbf{(A4)}, \textbf{(A5)}, \textbf{(A6*)}, and \textbf{(A7)}
of \citet{jeong_art_2023}. Concretely:
\begin{itemize}
\item \textbf{(A1) Truth class.} $g_{0,a}$ belongs to the sparse piecewise heterogeneous anisotropic H\"older class
$\Gamma_{\lambda}^{A_{\bar{\alpha}},d,p}(\mathfrak X_0)$ (optionally intersected with $\mathcal C([0,1]^p)$)
as in \citet[(A1)]{jeong_art_2023}.
A sufficient condition would be to assume the sparse-isotropic continuous specialization: for each arm,
$g_{0,a}(x)=h_{0,a}(x_{S_{0,a}})$ with $|S_{0,a}|\le d$ and $h_{0,a}$ $\alpha$-H\"older on $[0,1]^d$.
\item \textbf{(A2) Rate regime.} The target rate $\eps_n$ in \citet[eq.~(7)]{jeong_art_2023} satisfies $\eps_n\ll 1$.
\item \textbf{(A3*) Bounded truth.} $\|g_{0,a}\|_\infty \le C_0^*$ for some sufficiently large $C_0^*>0$.
\item \textbf{(A4) Noise scale.} $\sigma_0^2\in[C_0^{-1},C_0]$ for a sufficiently large $C_0>1$.
\item \textbf{(A5) Split-net size control.} $\max_{1\le j\le p}\log b_j(\mathcal Z)\lesssim \log n$.
\item \textbf{(A6*) Integrated $L_2$ approximation.} $\mathcal Z$ is suitably dense and regular so that there exist a
$\mathcal Z$-tree partition $\widehat{\mathcal T}$ and $\hat g_{0,a}\in\mathcal F_{\widehat{\mathcal T}}$ with
$\|g_{0,a}-\hat g_{0,a}\|_{2}\lesssim \bar\eps_n$ (their Theorem~1 / \citet[(A6*)]{jeong_art_2023}).
\item \textbf{(A7) Depth control.} The approximating $\mathcal Z$-tree partition has maximal depth $\lesssim \log n$
(\citet[(A7)]{jeong_art_2023}).
\end{itemize}

We place the split-net forest prior assigned through \textbf{(P1)}, \textbf{(P2*)}, and \textbf{(P3*)} of \citet{jeong_art_2023}:
\begin{itemize}
\item \textbf{(P1) Tree prior with Dirichlet sparsity.} For a fixed number of trees, each tree is independently assigned
the split-net tree prior with Dirichlet-sparse splitting proportions (their eq.~(3)) and the exponentially decaying split
probability by depth (their Section~3.1).
\item \textbf{(P2*) Truncated step-heights (leaf values).} A prior supported on $[-\bar C_1,\bar C_1]$ is assigned to the step-heights
for some $\bar C_1>C_0^*$ (\citet[(P2*)]{jeong_art_2023}).
\item \textbf{(P3*) Truncated noise variance.} A prior supported on $[\bar C_2^{-1},\bar C_2]$ is assigned to $\sigma^2$ for some $\bar C_2>C_0$
(\citet[(P3*)]{jeong_art_2023}).
\end{itemize}

\paragraph*{Sufficient conditions for the split-net assumptions.}
In a random design, $\mathcal Z$ must be deterministic (independent of the observed covariates), as emphasized in \citet[Section~6.1]{jeong_art_2023}.
A convenient sufficient choice is the regular grid split-net of \citet[Section~4.3.1]{jeong_art_2023}:
\[
\mathcal Z=\Big\{\frac{i-\tfrac12}{m}: i=1,\dots,m\Big\}^{p},\qquad m=n^{c},\; c\ge 1.
\]
Then $b_j(\mathcal Z)=m$ for all $j$, so (A5) holds. Moreover, \citet[Lem.~1]{jeong_art_2023} (see also Definitions~8 and~10 and Remark~9)
provides mild sufficient conditions under which such regular grids are dense/regular enough to guarantee (A6*) and (A7).

\paragraph*{Conclusion (prior small-ball / prior mass).}
Under Assumptions (A1), (A2), (A3*), (A4), (A5), (A6*), (A7) and priors (P1), (P2*), (P3*), \citet[Thm.~4]{jeong_art_2023} establishes
$L_2(Q)$ posterior contraction at rate $\eps_n$. In particular, their proof verifies the required prior concentration:
there exists $\bar c>0$ such that for all sufficiently large $n$,
\[
\Pi\Big(\big\{g_{\BARTVec}:\norm{g_{\BARTVec}-g_{0,a}}_{Q}\le \eps_n\big\}\Big)\ge \exp(-\bar c\,n\,\eps_n^2).
\]
Since arms are \emph{a priori} independent under our product prior, the same bound holds for each arm $a\in\ActionSpace$, which yields Condition~\ref{cond:norm}.

\subsection{FGTS sensitivity and practical takeaway}\label{sec:fgts-sens}
\providecommand{\artifactdir}{artifacts}
We study the tuning parameters of \textsc{FGTS} and report results aggregated via normalized regret (final regret divided by the best mean regret within each dataset).
The exploration intensity parameter \(\eta\) (\texttt{feel\_good\_eta}) has a strong effect on regret: aggressive values can increase normalized regret substantially (means $8.496$ at $0.2$ and $4.694$ at $0.5$), while the default setting performs well (mean $1.014$).
For \(\lambda\) (\texttt{feel\_good\_lambda}), conditioning on the default \(\eta\), the heatmap shows a smaller effect: mean normalized regret is $4.645$ at $\lambda=0$, $4.696$ at $\lambda=0.01$, and $4.863$ at $\lambda=0.05$.

\begin{figure*}[tb]
  \centering
  \begin{subfigure}[t]{0.32\textwidth}
    \centering
    \includegraphics[width=\linewidth]{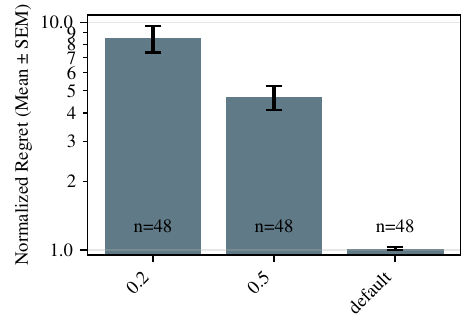}
    \caption{\(\eta\).}
  \end{subfigure}
  \begin{subfigure}[t]{0.32\textwidth}
    \centering
    \includegraphics[width=\linewidth]{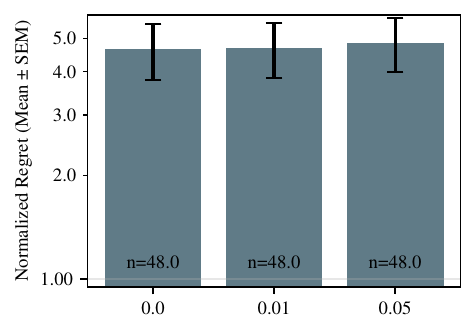}
    \caption{\(\lambda\).}
  \end{subfigure}
  \begin{subfigure}[t]{0.32\textwidth}
    \centering
    \includegraphics[width=\linewidth]{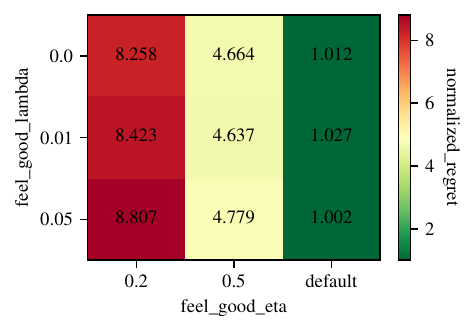}
    \caption{\(\lambda \times \eta\).}
  \end{subfigure}
  \caption{Sensitivity of \textsc{FGTS} to \(\eta\) and \(\lambda\) (aggregated via normalized regret; lower is better).}
  \label{fig:app:sens_fgts}
\end{figure*}

Given the sensitivity to tuning and lack of consistent improvements, we do not recommend the feel-good variant as a default in practice; it is included here to complement the main Bayesian analysis of BFTS.

\end{document}